\documentclass{article}

\usepackage[numbers,sort&compress]{natbib}

\usepackage[preprint]{neurips26}

\usepackage{amsmath}
\usepackage{amssymb}
\usepackage{multirow}
\usepackage{graphicx}
\usepackage{makecell}
\usepackage{pifont}   
\usepackage{subcaption}
\usepackage{paralist}
\newcommand{\cmark}{\ding{51}} 
\newcommand{\xmark}{\ding{55}} 
\usepackage{array}
\usepackage{amsthm}
\usepackage{mathtools}
\usepackage{wrapfig}
\usepackage{tabularx}
\providecommand{\coloneqq}{\mathrel{\mathop:}=}

\theoremstyle{definition}
\newtheorem{definition}{Definition}[section]

\theoremstyle{plain}
\newtheorem{lemma}{Lemma}[section]
\newtheorem{theorem}{Theorem}[section]
\newtheorem{corollary}{Corollary}[section]
\newtheorem{assumption}{Assumption}[section]
\newcolumntype{L}[1]{>{\raggedright\arraybackslash}p{#1}}

\usepackage[utf8]{inputenc} 
\usepackage[T1]{fontenc}    
\usepackage{hyperref}       
\usepackage{url}            
\usepackage{booktabs}       
\usepackage{amsfonts}       
\usepackage{nicefrac}       
\usepackage{microtype}      
\usepackage{xcolor}         

\title{\textsc{BitsMoE}: Efficient Spectral Energy-Guided Bit Allocation for MoE LLM Quantization}

\hypersetup{hidelinks}

\author{%
\begingroup
\newcommand{\affilfont}{\normalfont\mdseries\fontsize{9}{9.8}\selectfont}
\begin{tabular}{@{}c@{}}
{\bfseries Jiayu Zhao$^{1,2}$\thanks{Work done during a visit to Nanyang Technological University.} \quad Zihan Teng$^{1,2}$ \quad Minhao Fan$^{2}$ \quad Tianrui Ma$^{2}$ \quad Wentao Ren$^{3}$}\\[-1pt]
{\bfseries Song Chen$^{1}$ \quad Weichen Liu$^{2}$\thanks{Correspondence to: Weichen Liu \texttt{<liu@ntu.edu.sg>}.}}\\[1pt]
{\affilfont $^{1}$School of Microelectronics, University of Science and Technology of China}\\[-2pt]
{\affilfont $^{2}$College of Computing and Data Science, Nanyang Technological University}\\[-2pt]
{\affilfont $^{3}$School of Electrical and Electronic Engineering, Nanyang Technological University}
\end{tabular}
\endgroup
}

\begin{document}

\maketitle

\begin{abstract}
Mixture-of-Experts (MoE) large language models reduce per-token computation through sparse expert activation, but their deployment remains memory-intensive because all expert weights must be kept resident in memory.
Existing MoE compression methods struggle in the ultra-low-bit regime: pruning irreversibly removes model capacity, while coarse-grained quantization fails to allocate bits according to heterogeneous expert and weight-direction importance.
We propose \textsc{BitsMoE}, a spectral-energy-guided bit-allocation framework for MoE LLM quantization.
\textsc{BitsMoE} decomposes each MoE layer by SVD into a shared basis and expert-specific spectral factors, retaining the shared basis without quantization to preserve common cross-expert structure and using the expert-specific factors as fine-grained quantization units.
To determine the bit-width of each unit, \textsc{BitsMoE} formulates spectrum-wise mixed-precision quantization as an activation-aware reconstruction surrogate and solves an integer linear program that minimizes estimated reconstruction loss under a fixed bit budget.
Experiments across multiple MoE LLMs show that \textsc{BitsMoE} substantially reduces downstream task accuracy degradation in ultra-low-bit regimes.
Under 2-bit quantization on Qwen3-30B-A3B-Base, \textsc{BitsMoE} accelerates quantization by 12.3$\times$, improves average accuracy by 27.83 percentage points, and increases decoding speed by 1.76$\times$ over GPTQ. Our model and code are publicly available at \url{https://github.com/zjiayu064/BitsMoE}.
\end{abstract}

\section{Introduction}
Recent progress in natural language processing has been largely driven by large language models~(LLMs), among which Mixture-of-Experts~(MoE) models~\cite{cai2025survey} have emerged as an efficient sparse-scaling paradigm and achieved strong performance across diverse benchmarks~\cite{jiang2024mixtral, qwen3, deepseekai2024deepseekv3technicalreport, qwen2.5-1m}.
However, typical systems keep all experts memory-resident regardless of runtime activation, which makes the memory footprint a key deployment bottleneck.
For example, Qwen3-30B-A3B-Base~\cite{qwen3} activates only 3B parameters per token but still stores all 30B parameters.
This gap between sparse computation and dense memory residency makes MoE deployment costly and motivates MoE LLM compression~\cite{liu2024survey}.
Existing methods mainly follow two paradigms, \emph{pruning} and \emph{quantization}, which reduce memory usage and inference cost from different perspectives.

Despite recent progress, existing MoE compression methods remain inadequate under aggressive compression.
\emph{Pruning-based methods} reduce model size by removing redundant experts or compressing expert weights~\cite{gu2025delta,li2025moesvd,yang2024moe}, but hard structural pruning irreversibly discards capacity and limits flexibility under tight memory budgets.
In contrast, \emph{quantization-based methods} preserve the MoE architecture and routing mechanism by representing expert weights in low precision~\cite{chen2025moequant,huang2025milo,duanmu2025mxmoe,chowdhury2026efficient,xu2026kbvqmoe,yin2026codequant}.
However, existing methods usually allocate bit-widths at coarse granularities such as layers, experts, or linear blocks. Such coarse allocation fails to capture the intrinsic heterogeneity of MoE models and leads to severe degradation under ultra-low-bit quantization.

Although quantization preserves MoE capacity better than pruning, uniform ultra-low-bit quantization ignores the heterogeneous importance of expert weights. Under tight memory budgets, limited bits should therefore be allocated adaptively rather than uniformly, especially near 2 bits where existing MoE quantization methods degrade sharply.
This degradation reflects a mismatch between coarse bit allocation and MoE structure: experts share input--output feature spaces and exhibit redundant cross-expert directions, whereas sensitivity differs markedly across fine-grained weight directions.
Consequently, coarse allocation can over-compress shared or sensitive directions and waste bits on less important ones.
This raises a fundamental question:
\begin{center}
    \emph{How can MoE quantization use calibration data to identify heterogeneous importance and allocate bits at fine granularity under a fixed budget?}
\end{center}

\begin{figure*}[t]
    \centering
    \includegraphics[width=1.00\linewidth]{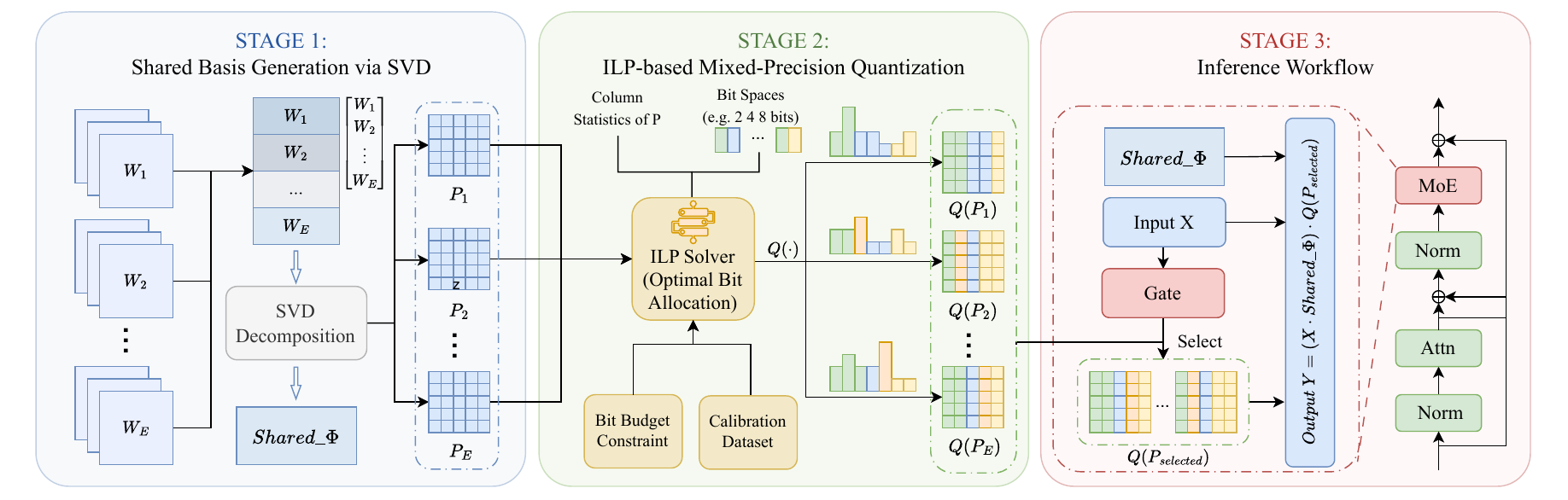}
    \caption{
    Overview of \textsc{BitsMoE}.
    \textbf{Stage~1}~(Section~\ref{sec:shared_basis_svd}): Each MoE layer is decomposed by SVD into a shared basis and expert-specific spectral factors.
    \textbf{Stage~2}~(Section~\ref{sec:layer_ilp}): Bit-widths are assigned to spectral components by an ILP under a fixed bit budget.
    \textbf{Stage~3}: During inference, inputs are projected onto the shared basis and quantized spectral factors are used to compute routed experts.
    }
    \label{fig:overview}
\end{figure*}

We address this question by formulating MoE quantization as fixed-budget bit allocation over spectral components. To define such allocation units, \textsc{BitsMoE} decomposes each MoE layer via SVD into a shared basis and expert-specific spectral factors. The shared basis is retained without quantization to preserve common cross-expert structure, while the expert-specific factors serve as fine-grained units for mixed-precision quantization. We then formulate an activation-aware reconstruction surrogate to estimate the loss induced by assigning each bit-width to each spectral component, and cast the resulting allocation problem as an integer linear program (ILP) that minimizes the estimated reconstruction loss under a fixed bit budget.

This design positions \textsc{BitsMoE} as a spectrum-wise mixed-precision framework rather than an SVD rank-reduction method or a coarse-grained MoE quantizer. As shown in Figure~\ref{fig:overview}, its shared spectral space preserves common cross-expert structure in an unquantized basis and exposes expert-specific spectral components as allocation units. 
Thus, \textsc{BitsMoE} differs from prior SVD-based MoE compressors~\cite{li2025moesvd,yang2024moe,gu2025delta}, which primarily use decomposition to reduce rank and discard spectral components, and from prior ILP-based mixed-precision MoE methods~\cite{duanmu2025mxmoe,huang2025mixture}, which allocate bits at the layer, expert, or linear-block level. In contrast, \textsc{BitsMoE} allocates more bits to spectral components with larger activation-aware reconstruction costs. Detailed positioning is provided in Appendix~\ref{app:prior_method_comparison}.

Our contributions are summarized as follows:
\begin{compactenum}[1. ]
    \item \textbf{Capacity-preserving spectral quantization.} We propose a shared spectral parameterization for MoE layers that preserves cross-expert structure and treats expert-specific spectral components as fine-grained quantization units.

    \item \textbf{Importance-aligned bit allocation under a fixed budget.} We cast MoE quantization as spectrum-wise bit allocation with an activation-aware reconstruction surrogate. The ILP allocates bits based on spectral energy, activation importance, and bit-dependent quantization distortion.

    \item \textbf{Accurate and efficient MoE deployment.} We present \textsc{BitsMoE}, an end-to-end framework that integrates shared-basis decomposition, adaptive bit allocation, and efficient inference. Experiments on multiple MoE LLMs show that \textsc{BitsMoE} improves downstream accuracy and inference efficiency under ultra-low-bit quantization.
\end{compactenum}
\section{Related Work}
\subsection{Mixture-of-Experts Large Language Models}
MoE architectures have become widely adopted in recent LLMs~\cite{jiang2024mixtral,liu2024deepseek, xue2024openmoe,muennighoff2025olmoe}. By partitioning the network into multiple experts and routing each input to a sparse subset, MoE reduces per-token computation while improving scalability~\cite{shazeer2017outrageously, fedus2022switch}. For instance, Mixtral~\cite{jiang2024mixtral} replaces each feed-forward block with multiple experts and applies top-$k$ routing, activating only two experts per token while retaining large total capacity. Despite these advantages, MoE LLMs still suffer from a large parameter footprint due to expert replication~\cite{he2023pad}.
Moreover, unbalanced routing induces expert-level redundancy and highly skewed expert utilization,
which creates substantial disparities in expert importance and complicates effective compression~\cite{lu2024not}.

\subsection{MoE LLM Compression and Pruning}
SVD-based low-rank decomposition has been widely used as a structured compression tool for dense LLMs~\cite{hsu2022language,chen2021drone,yuan2025asvd,wang2025svdllm}.
For MoE LLMs, recent methods further exploit expert-level redundancy through pruning and structured decomposition.
MoE-I$^2$~\cite{yang2024moe} combines non-uniform inter-expert pruning with importance-aware intra-expert low-rank decomposition to compress MoE LLMs in a task-agnostic framework.
MoE-SVD~\cite{li2025moesvd} selectively decomposes less sensitive expert layers and reduces cross-expert redundancy through frequency-guided V-matrix sharing and U-matrix trimming.
D$^2$-MoE~\cite{gu2025delta} decomposes expert weights into a Fisher-weighted shared base and expert-specific delta weights, where the shared base is compressed via semi-dynamic pruning and the delta weights are compressed via truncation-aware SVD.

\subsection{Post-Training Quantization for MoE LLMs}
Post-training quantization (PTQ) has become a widely used paradigm for compressing LLMs without retraining.
In this work, we focus on scalar weight quantization, a representative PTQ family that has been extensively studied for LLM compression~\cite{lin2024awq,xiao2023smoothquant,shao2024omniquant,ashkboos2024quarot}. Among these methods, GPTQ~\cite{frantar-gptq} uses Hessian-based error compensation for sequential weight quantization, while HQQ~\cite{badri2023hqq} formulates low-bit quantization as a calibration-free half-quadratic optimization problem.

For MoE LLMs, MoEQuant~\cite{chen2025moequant} improves PTQ by constructing expert-balanced calibration samples and incorporating token expert affinities into the quantization process.
MiLo~\cite{huang2025milo} augments extremely quantized MoE models with adaptive low-rank compensators and efficient INT3 kernels to recover accuracy while improving inference efficiency.
MxMoE~\cite{duanmu2025mxmoe} assigns bit-widths according to block sensitivity, expert activation patterns, and hardware constraints, and generates optimized Group GEMM kernels for efficient MoE inference.

\section{Methodology}
\label{sec:method}

\subsection{\textsc{BitsMoE}}
\label{sec:bitsmoe}

We present \textsc{BitsMoE}, an efficient mixed-precision quantization framework for MoE LLMs. Its design is motivated by two properties of MoE expert weights under tight memory budgets. First, experts within the same MoE layer operate on shared input and output feature spaces, suggesting that cross-expert spectral redundancy can be captured by a shared basis rather than quantizing each expert independently. Second, spectral components differ in both reconstruction contribution and routing-conditioned importance, making uniform or coarse-grained bit-width allocation inefficient in the ultra-low-bit regime.

Accordingly, \textsc{BitsMoE} introduces two key designs. It first extracts a shared spectral basis across experts for each projection type, while representing each expert using normalized expert-specific spectral components. It then formulates spectrum-wise mixed-precision bit allocation as an ILP that minimizes an activation-aware reconstruction surrogate under a fixed bit budget. Figure~\ref{fig:overview} provides an overview of the \textsc{BitsMoE} framework, and Table~\ref{tab:app_detailed_notation} summarizes the notation used in this section. Sections~\ref{sec:shared_basis_svd} and~\ref{sec:layer_ilp} then present the shared-basis decomposition and ILP-based bit allocation in detail.

\subsection{Shared-basis Spectral Decomposition}
\label{sec:shared_basis_svd}

Within an MoE layer, all experts share the same input and output feature spaces but implement distinct parameterized linear transformations. Therefore, a shared basis for each projection type in the MoE layer can be obtained via SVD.
We denote the projection types by \(\mathcal{H}\coloneqq\{\mathtt{gate\_proj},\mathtt{up\_proj},\mathtt{down\_proj}\}\), where \(\mathcal{H}_{\mathrm{in}}\coloneqq\{\mathtt{gate\_proj},\mathtt{up\_proj}\}\) and \(h_{\mathrm{dn}}\coloneqq\mathtt{down\_proj}\).
For \(h\in\mathcal{H}_{\mathrm{in}}\), we concatenate the expert weights along the output-channel dimension and decompose it as
\begin{equation}
    \label{eq:shared_basis_input_proj}
    \boldsymbol{W}_{\mathrm{cat}}^{(h)}
    \coloneqq
    \begin{bmatrix}
        \boldsymbol{W}_{1}^{(h)} \\
        \vdots \\
        \boldsymbol{W}_{E}^{(h)}
    \end{bmatrix}
    =
    \boldsymbol{U}_{\mathrm{cat}}^{(h)}
    \boldsymbol{\Sigma}^{(h)}
    \boldsymbol{\Phi}_{h}^{\top}
    =
    \widetilde{\boldsymbol{P}}_{\mathrm{cat}}^{(h)}
    \boldsymbol{\Phi}_{h}^{\top},
    \qquad
    \widetilde{\boldsymbol{P}}_{\mathrm{cat}}^{(h)}
    \coloneqq
    \boldsymbol{U}_{\mathrm{cat}}^{(h)}
    \boldsymbol{\Sigma}^{(h)}
    =
    \begin{bmatrix}
        \widetilde{\boldsymbol{P}}_{1}^{(h)} \\
        \vdots \\
        \widetilde{\boldsymbol{P}}_{E}^{(h)}
    \end{bmatrix}.
\end{equation}

\begin{definition}[Spectral component and energy matrix]
    \label{def:spectral_energy_matrix}
    Let \(\boldsymbol{\phi}_{h,k}\) be the \(k\)-th column of \(\boldsymbol{\Phi}_{h}\), and let \(\widetilde{\boldsymbol{p}}_{e,h,k}\coloneqq\widetilde{\boldsymbol{P}}_{e}^{(h)}[:,k]\). The corresponding shared-basis component is
    \begin{equation}
        \widetilde{\boldsymbol{p}}_{e,h,k}\boldsymbol{\phi}_{h,k}^{\top}.
    \end{equation}
    Its spectral energy and the associated diagonal energy matrix are defined as
    \begin{equation}
        \alpha_{e,h,k}
        \coloneqq
        \left\|\widetilde{\boldsymbol{p}}_{e,h,k}\right\|_2,
        \qquad
        \boldsymbol{A}_{e}^{(h)}
        \coloneqq
        \operatorname{diag}\!\left(
            \alpha_{e,h,1},\ldots,\alpha_{e,h,n_h}
        \right).
    \end{equation}
\end{definition}

\begin{definition}[Normalized expert-specific spectral matrix]
    \label{def:spectral_matrix}
    The normalized expert-specific spectral matrix is defined by
    \begin{equation}
        \boldsymbol{P}_{e}^{(h)}
        \coloneqq
        \widetilde{\boldsymbol{P}}_{e}^{(h)}
        \left(\boldsymbol{A}_{e}^{(h)}\right)^{-1}
        =
        \left[
            \boldsymbol{p}_{e,h,1},\ldots,\boldsymbol{p}_{e,h,n_h}
        \right],
        \qquad
        \boldsymbol{p}_{e,h,k}
        \coloneqq
        \frac{\widetilde{\boldsymbol{p}}_{e,h,k}}{\alpha_{e,h,k}} .
    \end{equation}
\end{definition}

By Definitions~\ref{def:spectral_energy_matrix} and~\ref{def:spectral_matrix}, each column of \(\boldsymbol{P}_{e}^{(h)}\) has unit \(\ell_2\)-norm.
The expert weight can then be written as
\begin{equation}
    \boldsymbol{W}_{e}^{(h)}
    =
    \boldsymbol{P}_{e}^{(h)}
    \boldsymbol{A}_{e}^{(h)}
    \boldsymbol{\Phi}_{h}^{\top}, \qquad h\in \mathcal{H}_{\mathrm{in}}
\end{equation}
For \(h=h_{\mathrm{dn}}\), expert weights share the same output feature space, so we concatenate them along the input-channel dimension:
\begin{equation}
    \boldsymbol{W}_{\mathrm{cat}}^{(h)}
    \coloneqq
    \left[
        \boldsymbol{W}_{1}^{(h)}
        \ \cdots\
        \boldsymbol{W}_{E}^{(h)}
    \right]
    =
    \boldsymbol{\Phi}_{h}
    \widetilde{\boldsymbol{P}}_{\mathrm{cat}}^{(h)\top},
    \qquad
    \widetilde{\boldsymbol{P}}_{\mathrm{cat}}^{(h)}
    =
    \begin{bmatrix}
        \widetilde{\boldsymbol{P}}_{1}^{(h)} \\
        \vdots \\
        \widetilde{\boldsymbol{P}}_{E}^{(h)}
    \end{bmatrix}.
\end{equation}
After the same column normalization of \(\widetilde{\boldsymbol{P}}_{e}^{(h)}\), each down-projection expert is written as
\begin{equation}
    \boldsymbol{W}_{e}^{(h)}
    =
    \boldsymbol{\Phi}_{h}
    \boldsymbol{A}_{e}^{(h)}
    \boldsymbol{P}_{e}^{(h)\top}, \qquad h=h_{\mathrm{dn}}
\end{equation}
Thus, across all projection types, \(\boldsymbol{P}_{e}^{(h)}\) denotes the expert-specific normalized spectral matrix assigned mixed bit-widths, while \(\boldsymbol{\Phi}_{h}\) denotes the shared basis retained without quantization.

\subsection{Spectral Energy-Guided Adaptive Bit Allocation}
\label{sec:layer_ilp}

\paragraph{Activation-aware reconstruction error.}

We first consider the loss of a single expert for a fixed projection type \(h\in\mathcal{H}_{\mathrm{in}}\), and omit the expert and projection indices for clarity. Let
\(\boldsymbol{P}=[\boldsymbol{p}_{1},\ldots,\boldsymbol{p}_{n}]\), \(\boldsymbol{A}=\operatorname{diag}(\alpha_{1},\ldots,\alpha_{n})\), and \(\boldsymbol{\Phi}=[\boldsymbol{\phi}_{1},\ldots,\boldsymbol{\phi}_{n}]\), so that
\begin{equation}
    \label{eq:svd_decomp}
    \boldsymbol{W}
    =
    \boldsymbol{P}\boldsymbol{A}\boldsymbol{\Phi}^{\top}
    =
    \sum_{k=1}^{n}
    \alpha_{k}\boldsymbol{p}_{k}\boldsymbol{\phi}_{k}^{\top}.
\end{equation}
Quantization is applied only to the expert-specific normalized spectral vectors:
\begin{equation}
    \label{eq:quantized_spectral_vector}
    \widehat{\boldsymbol{p}}_{k}
    =
    Q_b(\boldsymbol{p}_{k}),
    \qquad
    \boldsymbol{\varepsilon}_{k}(b)
    \coloneqq
    \boldsymbol{p}_{k}
    -
    \widehat{\boldsymbol{p}}_{k}.
\end{equation}
Let \(\widehat{\boldsymbol{P}}=[\widehat{\boldsymbol{p}}_{1},\ldots,\widehat{\boldsymbol{p}}_{n}]\) and \(\boldsymbol{E}_{P}\coloneqq\boldsymbol{P}-\widehat{\boldsymbol{P}}=[\boldsymbol{\varepsilon}_{1},\ldots,\boldsymbol{\varepsilon}_{n}]\). The reconstructed weight and the induced weight perturbation are
\begin{equation}
    \label{eq:reconstruction_error_compact}
    \widehat{\boldsymbol{W}}
    =
    \widehat{\boldsymbol{P}}\boldsymbol{A}\boldsymbol{\Phi}^{\top}
    =
    \sum_{k=1}^{n}
    \alpha_{k}
    \widehat{\boldsymbol{p}}_{k}
    \boldsymbol{\phi}_{k}^{\top},
    \qquad
    \boldsymbol{\Delta}
    \coloneqq
    \boldsymbol{W}-\widehat{\boldsymbol{W}}
    =
    \boldsymbol{E}_{P}\boldsymbol{A}\boldsymbol{\Phi}^{\top}
    =
    \sum_{k=1}^{n}
    \alpha_{k}
    \boldsymbol{\varepsilon}_{k}(b)
    \boldsymbol{\phi}_{k}^{\top}.
\end{equation}

\begin{lemma}[Spectrum-wise reconstruction error]
    \label{lem:spectrum_wise_error_decomp}
    Under the shared-basis decomposition in Eq.~\eqref{eq:svd_decomp} and the reconstruction error definition in Eq.~\eqref{eq:reconstruction_error_compact}, the routing-weighted reconstruction loss satisfies
    \begin{equation}
        \label{eq:total_loss_general}
        L(\widehat{\boldsymbol{W}})
        \coloneqq
        \mathbb{E}
        \left\|
            (\boldsymbol{W} - \widehat{\boldsymbol{W}})\boldsymbol{X}_{g}
        \right\|_F^2 
        =
        \sum_{k=1}^{n}
        \sum_{l=1}^{n}
        \alpha_{k}\alpha_{l}
        \left(
            \boldsymbol{\phi}_{k}^{\top}
            \boldsymbol{H}
            \boldsymbol{\phi}_{l}
        \right)
        \mathbb{E}
        \left[
            \boldsymbol{\varepsilon}_{k}^{\top}
            \boldsymbol{\varepsilon}_{l}
        \right],
    \end{equation}
    where \(\boldsymbol{X}_{g}\coloneqq \boldsymbol{X}\operatorname{Diag}(\boldsymbol{g})^{1/2}\) and \(\boldsymbol{H}\coloneqq\boldsymbol{X}_{g}\boldsymbol{X}_{g}^{\top}=\boldsymbol{X}\operatorname{Diag}(\boldsymbol{g})\boldsymbol{X}^{\top}\), so \(\boldsymbol{g}\) weights calibration activations according to the corresponding routing affinities.
\end{lemma}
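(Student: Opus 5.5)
The identity is a direct computation. The plan is to substitute the perturbation from Eq.~\eqref{eq:reconstruction_error_compact} into the loss, expand the squared Frobenius norm as a trace, and reorganize the trace into the stated double sum. The expectation is over the quantization noise, so the calibration activations \(\boldsymbol{X}\) and routing weights \(\boldsymbol{g}\) are treated as fixed. Hence \(\boldsymbol{H}\) is deterministic and can be pulled outside \(\mathbb{E}\). If instead \(\boldsymbol{X}\) is random and independent of the quantization noise, the same argument goes through with \(\boldsymbol{H}\) replaced by its expectation. I will state this convention explicitly at the start.

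\textbf{Step 1.} Write \(\boldsymbol{\Delta}=\boldsymbol{W}-\widehat{\boldsymbol{W}}=\sum_{k}\alpha_k\boldsymbol{\varepsilon}_k\boldsymbol{\phi}_k^{\top}\). Then
\[
\|\boldsymbol{\Delta}\boldsymbol{X}_g\|_F^2=\operatorname{tr}\!\left(\boldsymbol{\Delta}\boldsymbol{X}_g\boldsymbol{X}_g^{\top}\boldsymbol{\Delta}^{\top}\right)=\operatorname{tr}\!\left(\boldsymbol{\Delta}\boldsymbol{H}\boldsymbol{\Delta}^{\top}\right).
\]
The second equality uses \(\boldsymbol{X}_g\boldsymbol{X}_g^{\top}=\boldsymbol{X}\operatorname{Diag}(\boldsymbol{g})\boldsymbol{X}^{\top}\). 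This holds because \(\operatorname{Diag}(\boldsymbol{g})^{1/2}\) is diagonal with nonnegative entries, so it is symmetric and squares to \(\operatorname{Diag}(\boldsymbol{g})\).

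\textbf{Step 2.} Substitute the rank-one expansion and use bilinearity:
\[
\operatorname{tr}\!\left(\boldsymbol{\Delta}\boldsymbol{H}\boldsymbol{\Delta}^{\top}\right)=\sum_{k,l}\alpha_k\alpha_l\operatorname{tr}\!\left(\boldsymbol{\varepsilon}_k\boldsymbol{\phi}_k^{\top}\boldsymbol{H}\boldsymbol{\phi}_l\boldsymbol{\varepsilon}_l^{\top}\right).
\]
The middle factor \(\boldsymbol{\phi}_k^{\top}\boldsymbol{H}\boldsymbol{\phi}_l\) is a scalar, so it factors out of each trace. By the cyclic property, \(\operatorname{tr}(\boldsymbol{\varepsilon}_k\boldsymbol{\varepsilon}_l^{\top})=\boldsymbol{\varepsilon}_k^{\top}\boldsymbol{\varepsilon}_l\).

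\textbf{Step 3.} Take expectations. By linearity of \(\mathbb{E}\), and since \(\alpha_k\), \(\boldsymbol{\phi}_k\) and \(\boldsymbol{H}\) are deterministic, the expectation falls only on \(\boldsymbol{\varepsilon}_k^{\top}\boldsymbol{\varepsilon}_l\). This gives Eq.~\eqref{eq:total_loss_general}.

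There is no real obstacle. The only point needing care is Step 3: stating precisely what the expectation averages over, so that \(\boldsymbol{H}\) can be factored out. A second, minor point is that the energies \(\alpha_k\) stay outside the noise: the \(\boldsymbol{\varepsilon}_k(b)\) are defined on the normalized vectors \(\boldsymbol{p}_k\), so \(\alpha_k\) is not absorbed into the quantization error. I would also remark that the same computation applies to \(h=h_{\mathrm{dn}}\). There the roles transpose, \(\boldsymbol{\Delta}=\boldsymbol{\Phi}\boldsymbol{A}\boldsymbol{E}_P^{\top}\), so the shared basis sits on the output side and \(\boldsymbol{\varepsilon}\) sits on the input side; the Gram structure moves accordingly and is handled by the analogous trace manipulation.
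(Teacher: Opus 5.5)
Your proposal is correct and matches the paper's proof essentially line for line. You rewrite the squared Frobenius norm as $\operatorname{Tr}(\boldsymbol{\Delta}\boldsymbol{H}\boldsymbol{\Delta}^{\top})$, expand the rank-one sum bilinearly, pull out the scalar $\boldsymbol{\phi}_k^{\top}\boldsymbol{H}\boldsymbol{\phi}_l$, and use $\operatorname{Tr}(\boldsymbol{\varepsilon}_k\boldsymbol{\varepsilon}_l^{\top})=\boldsymbol{\varepsilon}_k^{\top}\boldsymbol{\varepsilon}_l$ before taking expectations; your explicit convention that $\mathbb{E}$ averages only over quantization noise, so that $\boldsymbol{H}$ can be factored out, states what the paper leaves implicit.
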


\begin{proof}
    Let
    \(
    \boldsymbol{\Delta}
    \coloneqq
    \boldsymbol{W}-\widehat{\boldsymbol{W}}
    =
    \sum_{k}
    \alpha_{k}
    \boldsymbol{\varepsilon}_{k}
    \boldsymbol{\phi}_{k}^{\top}
    \).
    Using \(\|\boldsymbol{A}\|_F^2=\operatorname{Tr}(\boldsymbol{A}\boldsymbol{A}^{\top})\), we obtain
    \begin{equation}
        L(\widehat{\boldsymbol{W}})
        =
        \mathbb{E}
        \left[
        \operatorname{Tr}
        \left(
            \boldsymbol{\Delta}
            \boldsymbol{H}
            \boldsymbol{\Delta}^{\top}
        \right)
        \right]
        =
        \sum_{k,l}
        \alpha_{k}\alpha_{l}
        \left(
            \boldsymbol{\phi}_{k}^{\top}
            \boldsymbol{H}
            \boldsymbol{\phi}_{l}
        \right)
        \mathbb{E}
        \left[
            \boldsymbol{\varepsilon}_{k}^{\top}
            \boldsymbol{\varepsilon}_{l}
        \right].
    \end{equation}
    This gives the spectrum-wise reconstruction-error decomposition in Eq.~\eqref{eq:total_loss_general}.
\end{proof}

To avoid cross-component interactions, which would make bit allocation a quadratic ILP, we adopt a diagonal approximation. We further assume that quantization errors associated with different spectral components are independent and zero-mean under symmetric quantization.
\begin{equation}
    \label{eq:uncorrelated_error_assumption}
    \mathbb{E}
    \left[
        \boldsymbol{\varepsilon}_k^{\top}
        \boldsymbol{\varepsilon}_l
    \right]
    \approx
    \mathbb{E}
    \left[
        \boldsymbol{\varepsilon}_k
    \right]^{\top}
    \mathbb{E}
    \left[
        \boldsymbol{\varepsilon}_l
    \right]
    \approx
    0,
    \qquad
    \forall\, k \neq l .
\end{equation}

\begin{corollary}[Additive spectrum-wise loss]
    \label{cor:additive_spectrum_wise_loss}
    Under the uncorrelated-error assumption in Eq.~\eqref{eq:uncorrelated_error_assumption}, the reconstruction loss reduces to
    \begin{equation}
        \label{eq:total_loss_simplified}
        \begin{aligned}
            L(\widehat{\boldsymbol{W}})
            &=
            \sum_{k}
            \alpha_{k}^2
            \left(
                \boldsymbol{\phi}_{k}^{\top}
                \boldsymbol{H}
                \boldsymbol{\phi}_{k}
            \right)
            \mathbb{E}
            \left\|
                \boldsymbol{\varepsilon}_{k}
            \right\|_2^2
            \approx
            \sum_{k}
            \alpha_{k}^2
            \beta_{k}
            \mathbb{E}
            \left\|
                \boldsymbol{\varepsilon}_{k}
            \right\|_2^2,
        \end{aligned}
    \end{equation}
    where
    \begin{equation}
        \label{eq:beta_def}
        \beta_{k}
        \coloneqq
        \boldsymbol{\phi}_{k}^{\top}
        \boldsymbol{H}
        \boldsymbol{\phi}_{k}.
    \end{equation}
\end{corollary}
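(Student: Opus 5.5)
The plan is to start from the exact double-sum expression of Lemma~\ref{lem:spectrum_wise_error_decomp} and split it into diagonal ($k=l$) and off-diagonal ($k\neq l$) terms:
\begin{equation*}
L(\widehat{\boldsymbol{W}})
=
\sum_{k}\alpha_k^2\,(\boldsymbol{\phi}_k^{\top}\boldsymbol{H}\boldsymbol{\phi}_k)\,\mathbb{E}\|\boldsymbol{\varepsilon}_k\|_2^2
+
\sum_{k\neq l}\alpha_k\alpha_l\,(\boldsymbol{\phi}_k^{\top}\boldsymbol{H}\boldsymbol{\phi}_l)\,\mathbb{E}[\boldsymbol{\varepsilon}_k^{\top}\boldsymbol{\varepsilon}_l].
\end{equation*}
The diagonal part comes directly from the lemma, since $\mathbb{E}[\boldsymbol{\varepsilon}_k^{\top}\boldsymbol{\varepsilon}_k]=\mathbb{E}\|\boldsymbol{\varepsilon}_k\|_2^2$. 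Nothing beyond rewriting is needed there.

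Next I will handle the cross terms with the uncorrelated-error assumption in Eq.~\eqref{eq:uncorrelated_error_assumption}. The coefficients $\alpha_k\alpha_l(\boldsymbol{\phi}_k^{\top}\boldsymbol{H}\boldsymbol{\phi}_l)$ are deterministic, because they depend only on the weights and the calibration Hessian. So each cross term equals a fixed scalar times $\mathbb{E}[\boldsymbol{\varepsilon}_k^{\top}\boldsymbol{\varepsilon}_l]$. Independence factorizes this expectation as $\mathbb{E}[\boldsymbol{\varepsilon}_k]^{\top}\mathbb{E}[\boldsymbol{\varepsilon}_l]$, and zero-mean symmetric quantization makes it vanish. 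The whole off-diagonal sum therefore drops out.

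Finally, I substitute the definition $\beta_k\coloneqq\boldsymbol{\phi}_k^{\top}\boldsymbol{H}\boldsymbol{\phi}_k$ from Eq.~\eqref{eq:beta_def} to obtain the stated additive form. I will also note that $\beta_k=\|\boldsymbol{X}_g^{\top}\boldsymbol{\phi}_k\|_2^2\ge 0$, since $\boldsymbol{H}=\boldsymbol{X}_g\boldsymbol{X}_g^{\top}$ is positive semidefinite. This makes each summand a nonnegative per-component cost, which is what the ILP needs.

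The main obstacle is interpreting the equality sign correctly. The cross terms only vanish under the stated modeling assumption, not exactly, so the result is an equality under that assumption and an approximation of the true loss. I would also remark that when $\boldsymbol{H}$ is close to diagonal in the $\boldsymbol{\Phi}$ basis, the off-diagonal factors $\boldsymbol{\phi}_k^{\top}\boldsymbol{H}\boldsymbol{\phi}_l$ are small, which further justifies the diagonal approximation.
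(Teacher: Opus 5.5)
Your proposal is correct and is essentially the paper's proof: split the double sum of Lemma~\ref{lem:spectrum_wise_error_decomp} into $k=l$ and $k\neq l$ parts, drop the cross terms by the uncorrelated-error assumption, and substitute $\beta_k$. Your observation $\beta_k=\|\boldsymbol{X}_g^{\top}\boldsymbol{\phi}_k\|_2^2\ge 0$ matches the paper's remark that $\boldsymbol{H}\succeq 0$. One caveat the paper makes explicit is that the zero-mean justification only covers $b\ge 1$: for eviction ($b=0$), $\boldsymbol{\varepsilon}_k=\boldsymbol{p}_k$ is deterministic, so the vanishing of cross terms there is purely the assumed diagonal approximation and not a consequence of symmetric quantization.
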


For \(h=h_{\mathrm{dn}}\), the shared basis is associated with the activation-output feature space, while the expert-specific normalized spectral vectors remain on the activation-input side. Therefore, we write the perturbation as
\begin{equation}
    \label{eq:down_perturbation}
    \boldsymbol{\Delta}
    =
    \sum_{k=1}^{n}
    \alpha_{k}
    \boldsymbol{\phi}_{k}
    \boldsymbol{\varepsilon}_{k}^{\top}
    =
    \boldsymbol{\Phi}\boldsymbol{A}\boldsymbol{E}_{P}^{\top},
\end{equation}
where \(\boldsymbol{\varepsilon}_{k}\) is the quantization error of the expert-specific vector \(\boldsymbol{p}_{k}\). Using the orthonormality of the shared basis \(\boldsymbol{\Phi}\), the activation-aware reconstruction loss becomes
\begin{equation}
    \label{eq:down_loss}
    L(\widehat{\boldsymbol{W}})
    =
    \mathbb{E}
    \left[
    \operatorname{Tr}
    \left(
        \boldsymbol{A}
        \boldsymbol{E}_{P}^{\top}
        \boldsymbol{H}
        \boldsymbol{E}_{P}
        \boldsymbol{A}
    \right)
    \right]
    =
    \sum_{k=1}^{n}
    \alpha_{k}^{2}
    \mathbb{E}
    \left[
        \boldsymbol{\varepsilon}_{k}^{\top}
        \boldsymbol{H}
        \boldsymbol{\varepsilon}_{k}
    \right].
\end{equation}

Since directly using Eq.~\ref{eq:down_loss} depends on the quantization-error direction, we use a tractable empirical surrogate based on the corresponding unquantized expert-specific spectral direction:

\begin{equation}
    \label{eq:down_beta_def}
    \mathbb{E}
    \left[
        \boldsymbol{\varepsilon}_{k}^{\top}
        \boldsymbol{H}
        \boldsymbol{\varepsilon}_{k}
    \right]
    \approx
    \beta_{k}
    \mathbb{E}
    \left\|
        \boldsymbol{\varepsilon}_{k}
    \right\|_2^2,
    \qquad
    \beta_{k}
    \coloneqq
    \boldsymbol{p}_{k}^{\top}
    \boldsymbol{H}
    \boldsymbol{p}_{k}.
\end{equation}

Therefore, for each expert and each projection in \(\mathcal{H}\), the remaining derivation uses the unified additive loss
\begin{equation}
    \label{eq:unified_additive_loss}
    L(\widehat{\boldsymbol{W}})
    \approx
    \sum_{k=1}^{n}
    \alpha_{k}^{2}
    \beta_{k}^{\gamma}
    \mathbb{E}
    \left\|
        \boldsymbol{\varepsilon}_{k}
    \right\|_2^2,
\end{equation}
where \(\boldsymbol{\varepsilon}_{k}\) denotes the quantization error of the expert-specific spectral vector \(\boldsymbol{p}_{k}\), and \(\gamma\in[0,1]\) smooths the activation-aware importance to prevent a few large \(\beta_k\) values from dominating the bit-allocation objective.

\paragraph{Piecewise reconstruction error for bit allocation.}
We define a piecewise reconstruction-error surrogate for allocating bit-widths to expert-specific spectral vectors over \(\mathcal{B}=\{16,8,6,4,3,2,1,0\}\).
For a single component \(k\), let \(\boldsymbol{\varepsilon}_{k}(b)\) denote the quantization-induced direction error at bit-width \(b\). Its normalized distortion is measured as
\begin{equation}
    \label{eq:spectral_vector_distortion}
    \mathcal{E}_{k}(b)
    \coloneqq
    \mathbb{E}
    \left\|
        \boldsymbol{\varepsilon}_{k}(b)
    \right\|_2^2 .
\end{equation}
The surrogate is specified by bit-width regime.

\begin{lemma}[High-bit distortion]
    \label{lem:high_bit_distortion}
    For \(b\in\{6,8,16\}\), let \(d\) denote the dimension of \(\boldsymbol{p}_{k}\), and define
    \[
        \rho_{k}
        \coloneqq
        \|\boldsymbol{p}_{k}\|_{\infty},
        \qquad
        \eta_{k}
        \coloneqq
        \frac{d\rho_{k}^{2}}{3}.
    \]
    The high-bit distortion is approximated as
    \begin{equation}
        \label{eq:high_bit_distortion}
        \mathcal{E}_{k}(b)
        \coloneqq
        \eta_{k}\exp(-\lambda b),
        \qquad
        \lambda=2\ln 2 .
    \end{equation}
\end{lemma}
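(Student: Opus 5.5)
The plan is to read Lemma~\ref{lem:high_bit_distortion} as the standard high-resolution (uniform-noise) model for a symmetric uniform quantizer $Q_b$ applied to the normalized vector $\boldsymbol{p}_k$, and to derive Eq.~\eqref{eq:high_bit_distortion} from it. Fix a component $k$ and assume $Q_b$ is a symmetric absmax quantizer with clipping range $[-\rho_k,\rho_k]$, where $\rho_k=\|\boldsymbol{p}_k\|_\infty$. Then $Q_b$ has roughly $2^b$ levels and step size
\[
\Delta_b = \frac{2\rho_k}{2^{b}}
\]
(or $2\rho_k/(2^b-1)$; the two agree to leading order for $b\ge 6$). Because there is no clipping, each coordinate error $\varepsilon_{k,i}(b)$ is bounded by $\Delta_b/2$.

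The key step is the high-resolution assumption. For $b\in\{6,8,16\}$ the step $\Delta_b$ is small compared with the spread of the entries of $\boldsymbol{p}_k$. Under the usual Bennett/Widrow argument, the rounding error of each coordinate is therefore approximately uniform on $[-\Delta_b/2,\Delta_b/2]$ and zero-mean. This is consistent with the symmetric-quantization assumption used before Corollary~\ref{cor:additive_spectrum_wise_loss}. The per-coordinate second moment is then $\Delta_b^2/12$.

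Summing over the $d$ coordinates by linearity of expectation gives
\[
\mathcal{E}_k(b)=\mathbb{E}\|\boldsymbol{\varepsilon}_k(b)\|_2^2=\frac{d\,\Delta_b^2}{12}=\frac{d\,(2\rho_k)^2}{12\cdot 4^{b}}=\frac{d\rho_k^2}{3}\,4^{-b}.
\]
Writing $4^{-b}=\exp(-2b\ln 2)$ then yields $\eta_k\exp(-\lambda b)$ with $\eta_k=d\rho_k^2/3$ and $\lambda=2\ln 2$, which is exactly the stated form. Two checks follow. First, since $\|\boldsymbol{p}_k\|_2=1$ by Definition~\ref{def:spectral_matrix}, we have $d\rho_k^2\ge 1$, so $\eta_k$ is on the right scale. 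Second, the same expression bounds the worst-case error up to the constant factor $3$, since $|\varepsilon_{k,i}|\le\Delta_b/2$ implies $\|\boldsymbol{\varepsilon}_k\|_2^2\le d\Delta_b^2/4$.

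I expect the main obstacle to be justifying the uniform-error model rather than the algebra. The error is deterministic given $\boldsymbol{p}_k$, so the expectation must be interpreted over a randomized dither, or empirically across coordinates. I would state this explicitly as the modeling assumption that makes the result an approximation, matching the paper's ``$\coloneqq$ \dots\ approximated as'' wording. I would also note that it is valid only in the high-bit regime. That restriction is why the lemma is limited to $b\ge 6$, with separate surrogates presumably used for $b\le 4$, where $\Delta_b$ is comparable to the entries and the uniform approximation fails.
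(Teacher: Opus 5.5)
Your proposal is correct and follows the paper's proof: you use the same high-resolution uniform-noise model to get a per-coordinate error of step$^2/12$, sum over the $d$ coordinates, and tie the step to $\rho_k$. Your $\Delta_b = 2\rho_k/2^b = \rho_k/2^{b-1}$ is exactly the paper's $s_k=\rho_k/q_{\max}$ with $q_{\max}=2^{b-1}-1\approx 2^{b-1}$, so you arrive at the same $\tfrac{d\rho_k^2}{3}\,4^{-b}=\eta_k\exp(-\lambda b)$. The paper leaves your extra remarks implicit, and they are sound: $d\rho_k^2\ge 1$, the worst-case factor of $3$, and reading the expectation via dither or an empirical average.
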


\begin{lemma}[Low-bit empirical distortion]
    \label{lem:low_bit_empirical_distortion}
    For \(b\in\{2,3,4\}\), define
    \begin{equation}
        \label{eq:low_bit_empirical_distortion}
        \mathcal{E}_{k}(b)
        \coloneqq
        \kappa_b ,
    \end{equation}
    where \(\kappa_b\) is a bit-dependent low-bit distortion coefficient estimated offline.
\end{lemma}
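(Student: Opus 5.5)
Lemma~\ref{lem:low_bit_empirical_distortion} is a modeling definition, not a derivation: at $b\in\{2,3,4\}$ the distortion $\mathcal{E}_k(b)$ is declared to equal a per-bit constant $\kappa_b$. What needs justifying is that (i) $\mathcal{E}_k(b)$ is approximately independent of $k$ at these bit-widths, so a single number per bit-width suffices, and (ii) $\kappa_b$ can be estimated consistently offline. The argument will be heuristic and will rest on the normalization in Definition~\ref{def:spectral_matrix} together with a law-of-large-numbers estimate.

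\emph{Step 1: the distortion is uniformly bounded.} By Definition~\ref{def:spectral_matrix}, every $\boldsymbol{p}_k$ lies on the unit sphere in $\mathbb{R}^d$. For any quantizer $Q_b$ with finite codebook and bounded scale, $\|\boldsymbol{\varepsilon}_k(b)\|_2^2=\|\boldsymbol{p}_k-Q_b(\boldsymbol{p}_k)\|_2^2$ is therefore bounded by a constant that depends only on $b$ and the quantizer, not on the energy $\alpha_k$. This is why the distortion can be separated from the factor $\alpha_k^2\beta_k^\gamma$ in Eq.~\eqref{eq:unified_additive_loss}.

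\emph{Step 2: why a constant replaces the high-bit formula.} At high bit-widths the uniform-noise model gives the $\eta_k e^{-\lambda b}$ form of Lemma~\ref{lem:high_bit_distortion}. At 2--4 bits that model fails: the step size is comparable to $\rho_k$, rounding errors are no longer uniform or independent of the signal, and clipping dominates. The argument is that, after unit normalization, the coordinates of $\boldsymbol{p}_k$ share a roughly common empirical distribution across $k$, approximately $\mathcal{N}(0,1/d)$ up to group-wise scaling. If so, the expected distortion depends only on $b$. I would then define
\[
\kappa_b \coloneqq \frac{1}{N}\sum_{(e,h,k)}\|\boldsymbol{p}_{e,h,k}-Q_b(\boldsymbol{p}_{e,h,k})\|_2^2 ,
\]
averaged over a sample of spectral vectors, which converges to the common expectation by the law of large numbers.

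\emph{Step 3: monotonicity.} Since the $b$-bit grid is nested in the $(b+1)$-bit grid, the estimates satisfy $\kappa_2\ge\kappa_3\ge\kappa_4$. This is what the ILP needs for the costs to be sensible.

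\textbf{Main obstacle.} The weak point is the homogeneity claim in Step 2. Vectors with spiky coordinates, where $\rho_k$ is large, will deviate from the average. I would first try to bound the deviation $|\mathcal{E}_k(b)-\kappa_b|$ in terms of the spread of $\rho_k$. Failing a clean bound, I would accept the lemma as an empirical definition and argue that only the relative ordering across $b$ matters for the allocation.
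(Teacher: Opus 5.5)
Your proposal is essentially the paper's argument: the paper likewise treats the lemma as a modeling choice and defines $\kappa_b$ as the average of $\|\boldsymbol{p}_{k}-Q_b(\boldsymbol{p}_{k})\|_2^2$ over a set of unit-norm spectral vectors, phrased via a relative distortion ratio that equals $\mathcal{E}_k(b)$ because $\|\boldsymbol{p}_k\|_2=1$. It then invokes an ``empirical coefficient-sharing'' approximation, supported by measured layer-wise CV and rMCSE rather than by your Gaussian-coordinate heuristic. Your Step~3 is an extra that the paper only checks numerically, and it holds because the paper uses the MSE-optimal scale $s^{\ast}_{k,b}$ (giving $\mathcal{E}_k(b+1)\le\mathcal{E}_k(b)$ for each vector); with per-bit absmax scales $\rho_k/q_{\max}$ the grids would not be nested in general.
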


\begin{lemma}[One-bit sign distortion]
    \label{lem:one_bit_sign_distortion}
    For \(b=1\), define
    \[
        \widehat{\boldsymbol{p}}^{(1)}_{k}
        \coloneqq
        \frac{\operatorname{sign}(\boldsymbol{p}_{k})}{\sqrt{d}},
        \qquad
        \cos\theta_{k}
        \coloneqq
        \boldsymbol{p}_{k}^{\top}
        \widehat{\boldsymbol{p}}^{(1)}_{k}.
    \]
    The one-bit distortion is defined by the angular mismatch
    \begin{equation}
        \label{eq:one_bit_sign_distortion}
        \mathcal{E}_{k}(1)
        \coloneqq
        \sin^2\theta_{k}.
    \end{equation}
\end{lemma}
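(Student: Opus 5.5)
The plan is to justify the definition by computing the distortion of the sign quantizer explicitly. Since \(\boldsymbol{p}_{k}\) has unit \(\ell_2\)-norm (Definition~\ref{def:spectral_matrix}), the natural comparison is with a unit vector. The vector \(\operatorname{sign}(\boldsymbol{p}_{k})/\sqrt{d}\) is also a unit vector, assuming no coordinate of \(\boldsymbol{p}_{k}\) is exactly zero or using the convention \(\operatorname{sign}(0)=\pm1\). Hence \(\cos\theta_{k}=\boldsymbol{p}_{k}^{\top}\widehat{\boldsymbol{p}}^{(1)}_{k}=\|\boldsymbol{p}_{k}\|_{1}/\sqrt{d}\in(0,1]\) is a genuine cosine. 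This is the first step, and it uses only that \(|p_{k,i}|=p_{k,i}\operatorname{sign}(p_{k,i})\) together with Cauchy--Schwarz.

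Second, I will argue that the quantity entering the loss \eqref{eq:unified_additive_loss} should be the residual after the best one-bit reconstruction. A 1-bit code stores only the signs, together with a per-vector scale that can be folded into the unquantized energy \(\alpha_k\) or chosen optimally. The optimal scale \(s\) minimizing \(\|\boldsymbol{p}_{k}-s\,\widehat{\boldsymbol{p}}^{(1)}_{k}\|_2^2\) is \(s^\star=\boldsymbol{p}_{k}^{\top}\widehat{\boldsymbol{p}}^{(1)}_{k}=\cos\theta_k\). Substituting back, the Pythagorean identity for orthogonal projection onto \(\operatorname{span}(\widehat{\boldsymbol{p}}^{(1)}_{k})\) gives
\[
\|\boldsymbol{\varepsilon}_{k}(1)\|_2^2=\|\boldsymbol{p}_{k}\|_2^2-\cos^2\theta_{k}=1-\cos^2\theta_k=\sin^2\theta_{k}.
\]
The error is deterministic given \(\boldsymbol{p}_{k}\), so the expectation in \eqref{eq:spectral_vector_distortion} is trivial. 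This yields \eqref{eq:one_bit_sign_distortion}.

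Finally, I will remark that \(\sin^2\theta_k=1-\|\boldsymbol{p}_k\|_1^2/d\) lies in \([0,1-1/d]\). It vanishes exactly when all coordinates of \(\boldsymbol{p}_{k}\) share the same magnitude. This makes the surrogate consistent with the \(b=0\) case, where the distortion is \(\|\boldsymbol{p}_k\|^2=1\). It also places the one-bit value below the \(b=0\) value, as the ILP ordering requires.

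The main obstacle is interpretive rather than computational. As stated, the lemma defines \(\mathcal{E}_k(1)\) instead of deriving it, so the content to prove is that this definition equals the true squared error under the optimally scaled sign quantizer. The point needing care is justifying that the scale \(\cos\theta_k\) is absorbed into the per-component energy, which is stored at high precision, so that the angular mismatch is the only remaining error. If the unscaled reconstruction were used instead, the error would be \(2-2\cos\theta_k\), which is larger than \(\sin^2\theta_k\). I would state explicitly that the rescaled convention is the one adopted.
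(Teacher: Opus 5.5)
Your proposal is correct and follows essentially the same route as the paper. The paper uses the sign quantizer with scale \(s_{k,1}=\frac{1}{d}\sum_{j}|p_{k,j}|\), notes that \(\cos\theta_k=s_{k,1}\sqrt{d}\) so the reconstruction equals \(\cos\theta_k\,\widehat{\boldsymbol{p}}^{(1)}_{k}\), and expands \(\|\boldsymbol{p}_k-\cos\theta_k\,\widehat{\boldsymbol{p}}^{(1)}_{k}\|_2^2=1-\cos^2\theta_k=\sin^2\theta_k\). The only difference is framing: you derive the scale as the least-squares optimum \(s^\star=\cos\theta_k\), whereas the paper fixes the mean-absolute-value scale, which gives exactly the same reconstruction.
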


\begin{lemma}[Zero-bit eviction distortion]
    \label{lem:zero_bit_eviction_distortion}
    For \(b=0\), the spectral vector is evicted and its normalized distortion is
    \begin{equation}
        \label{eq:zero_bit_eviction_distortion}
        \mathcal{E}_{k}(0)
        \coloneqq
        1 .
    \end{equation}
\end{lemma}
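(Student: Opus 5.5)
The zero-bit case is essentially definitional. I will make it consistent with the surrogate by working directly from Eq.~\eqref{eq:quantized_spectral_vector} and Eq.~\eqref{eq:spectral_vector_distortion}. At \(b=0\) the component is evicted, so I model the zero-bit quantizer as \(Q_0(\boldsymbol{p}_{k})\coloneqq\boldsymbol{0}\). This is the only choice consistent with storing no bits, since there is no data-dependent reconstruction and the symmetric, zero-mean convention behind Eq.~\eqref{eq:uncorrelated_error_assumption} fixes the reconstruction at the origin. Hence \(\widehat{\boldsymbol{p}}_{k}=\boldsymbol{0}\) and \(\boldsymbol{\varepsilon}_{k}(0)=\boldsymbol{p}_{k}\). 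The error is deterministic, so the expectation in Eq.~\eqref{eq:spectral_vector_distortion} is trivial, and \(\mathcal{E}_{k}(0)=\|\boldsymbol{p}_{k}\|_2^2\).

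Next I invoke Definition~\ref{def:spectral_matrix}. Since \(\boldsymbol{p}_{k}=\widetilde{\boldsymbol{p}}_{k}/\alpha_{k}\) with \(\alpha_{k}=\|\widetilde{\boldsymbol{p}}_{k}\|_2\) from Definition~\ref{def:spectral_energy_matrix}, each column has unit norm, as the paper notes after Definition~\ref{def:spectral_matrix}. This gives \(\mathcal{E}_{k}(0)=1\), which is Eq.~\eqref{eq:zero_bit_eviction_distortion}. The argument is the same for \(h\in\mathcal{H}_{\mathrm{in}}\) and \(h=h_{\mathrm{dn}}\), because in both cases the quantized object is the normalized vector \(\boldsymbol{p}_{k}\).

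As a consistency check, I substitute into Eq.~\eqref{eq:unified_additive_loss}. An evicted component then contributes \(\alpha_{k}^2\beta_{k}^{\gamma}\), which is exactly the activation-weighted energy of the removed rank-one term \(\alpha_{k}\boldsymbol{p}_{k}\boldsymbol{\phi}_{k}^{\top}\). I will also compare with Lemma~\ref{lem:one_bit_sign_distortion}. That lemma gives \(\sin^2\theta_k\le 1\), with equality only when the sign approximation is orthogonal, so the surrogate is monotone from \(b=1\) down to \(b=0\).

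The one point needing care is the case \(\alpha_{k}=0\), where the normalization in Definition~\ref{def:spectral_matrix} is undefined. I would adopt any unit-norm convention for \(\boldsymbol{p}_{k}\) there, or simply note that the loss weight \(\alpha_k^2\) vanishes, so the value of \(\mathcal{E}_k(0)\) is immaterial. Otherwise there is no real obstacle.
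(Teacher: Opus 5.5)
Your proposal is correct and follows the paper's proof. Set $Q_0(\boldsymbol{p}_k)\coloneqq\boldsymbol{0}$, so $\boldsymbol{\varepsilon}_k(0)=\boldsymbol{p}_k$, and then $\mathcal{E}_k(0)=\|\boldsymbol{p}_k\|_2^2=1$ by the unit normalization in Definition~\ref{def:spectral_matrix}. Two minor asides, neither affecting the lemma: your consistency check gives the exact activation-weighted energy of the removed rank-one term only when $\gamma=1$, and $\sin^2\theta_k<1$ always holds strictly, because $\cos\theta_k=\|\boldsymbol{p}_k\|_1/\sqrt{d}\ge 1/\sqrt{d}>0$.
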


Detailed proofs of Lemmas~\ref{lem:high_bit_distortion}--\ref{lem:zero_bit_eviction_distortion} are provided in Appendix~\ref{app:piecewise_ilp_details}.
Since the derivation of \(\mathcal{E}\) is identical for different experts and projection types, restoring the indices \(e\) and \(h\) gives the piecewise distortion surrogate:
\begin{equation}
    \label{eq:piecewise_error_surrogate}
    \mathcal{E}_{e,h,k}(b)
    =
    \begin{cases}
        \eta_{e,h,k}\exp(-\lambda b),
            & b \in \{6,8,16\}, \\[3pt]
        \kappa_b,
            & b \in \{2,3,4\}, \\[3pt]
        \sin^2\theta_{e,h,k},
            & b = 1, \\[3pt]
        1,
            & b = 0 .
    \end{cases}
\end{equation}

\paragraph{Component-wise ILP formulation.}
We uniformly allocate the bit budget across MoE layers and solve the bit-allocation problem independently for each projection type.
For each component \((e,h,k)\), let \(y_{e,h,k,b}\in\{0,1\}\) indicate whether bit-width \(b\) is assigned to this component.

For each projection type \(h\), let \(\boldsymbol{Y}^{(h)}\) collect \(y_{e,h,k,b}\), let \(\boldsymbol{C}^{(h)}\) collect \(C_{e,h,k,b}\coloneqq L_{e,h,k}(b)=\alpha_{e,h,k}^{2}\beta_{e,h,k}^{\gamma}\mathcal{E}_{e,h,k}(b)\), and let \(\boldsymbol{\Omega}^{(h)}\) collect the normalized bit costs \(\Omega_{e,h,k,b}\coloneqq b\).
Since \(B_h\) denotes the normalized component budget for projection type \(h\), the projection-wise ILP can be written as
\begin{equation}
    \label{eq:ilp_piecewise_projection}
    \begin{aligned}
        \min_{\boldsymbol{Y}^{(h)}}\quad
        &
        \left\langle
            \boldsymbol{Y}^{(h)},
            \boldsymbol{C}^{(h)}
        \right\rangle
        \\
        \mathrm{s.t.}\quad
        &
        \left\langle
            \boldsymbol{Y}^{(h)},
            \boldsymbol{\Omega}^{(h)}
        \right\rangle
        \le
        B_h,
        \\
        &
        \sum_{b\in\mathcal{B}}
        y_{e,h,k,b}
        =
        1,
        \qquad
        \forall\,e\in[E],\ k\in[n_h],
        \\
        &
        y_{e,h,k,b}
        \in
        \{0,1\},
        \qquad
        \forall\,e\in[E],\ k\in[n_h],\ b\in\mathcal{B}.
    \end{aligned}
\end{equation}
Here \(\langle\cdot,\cdot\rangle\) denotes the tensor inner product over \((e,k,b)\) for projection type \(h\). 
Eq.~\eqref{eq:ilp_piecewise_projection} is solved independently for each projection type to obtain component-level mixed-precision assignments under the piecewise reconstruction-error surrogate. Appendix~\ref{app:piecewise_ilp_details} provides the full ILP derivation.

\begin{table*}[t]
    \centering
    \small
    \caption{Evaluation results for DeepSeek-V2-Lite, Qwen3-30B-A3B-Base, and Qwen3-Next-80B-A3B-Instruct at 2-bit and 3-bit settings.}
    \label{tab:dsv2_qw3_qw3next_results}
    \resizebox{0.9\textwidth}{!}{%
    \begin{tabular}{c|c|c|ccccccc|c}
        \toprule
        \multirow{2}{*}{\textbf{Method}} & \multirow{2}{*}{\textbf{Bits}} & \multirow{2}{*}{\textbf{PPL}$\downarrow$} & \multicolumn{8}{c}{\textbf{Accuracy}$\uparrow$ (\%)} \\
        \cmidrule(lr){4-11}
        & & & HellaS. & MathQA & MMLU & Openb. & WinoG. & GSM8K & HumanE. & Avg. \\
        \midrule

        \multicolumn{11}{c}{\textbf{DeepSeek-V2-Lite}} \\
        \midrule
        FP16     & 16 & 8.69  & 77.70 & 39.03 & 55.60 & 44.40 & 70.88 & 39.12 & 26.83 & 50.51 \\
        \cmidrule(lr){1-11}
        HQQ      & 2  & 14.21 & 67.73 & 29.51 & 43.41 & 38.20 & 63.54 & 12.43 & 11.59 & 38.06 \\
        GPTQ     & 2  & 17.78 & 61.44 & 25.09 & 27.72 & 35.80 & 59.98 & 2.96  & 0.00  & 30.43 \\
        MiLo     & 2  & 13.87 & 69.42 & 30.82 & 41.80 & 37.20 & 65.59 & 11.37 & 8.54  & 37.82 \\
        MoEQuant & 2  & 11.83 & 66.25 & 32.19 & 46.29 & 39.60 & 69.85 & 15.85 & 12.80 & 40.40 \\
        \textsc{BitsMoE} & 2 & 12.20 & 69.96 & 33.37 & 46.41 & 39.20 & 68.82 & 15.47 & 14.02 & \textbf{41.04} \\
        \cmidrule(lr){1-11}
        HQQ      & 3  & 9.25  & 76.83 & 36.45 & 53.16 & 44.40 & 70.88 & 32.15 & 21.34 & 47.89 \\
        GPTQ     & 3  & 9.56  & 75.88 & 37.29 & 50.92 & 44.20 & 69.30 & 30.40 & 26.22 & 47.74 \\
        MiLo     & 3  & 9.18  & 76.40 & 37.29 & 53.58 & 43.00 & 70.56 & 34.57 & 20.12 & 47.93 \\
        MoEQuant & 3  & 9.53  & 76.15 & 38.39 & 54.64 & 43.60 & 70.17 & 33.66 & 25.00 & \textbf{48.80} \\
        \textsc{BitsMoE} & 3 & 9.38 & 75.06 & 38.16 & 53.59 & 43.20 & 70.88 & 30.33 & 27.44 & 48.38 \\
        \midrule

        \multicolumn{11}{c}{\textbf{Qwen3-30B-A3B-Base}} \\
        \midrule
        FP16     & 16 & 10.24 & 81.35 & 60.03 & 78.77 & 45.00 & 72.85 & 83.47 & 56.10 & 68.22 \\
        \cmidrule(lr){1-11}
        HQQ      & 2  & 23.65 & 63.05 & 33.17 & 48.82 & 36.40 & 60.85 & 22.67 & 11.59 & 39.51 \\
        GPTQ     & 2  & 15.63 & 70.16 & 24.89 & 39.17 & 39.40 & 60.62 & 4.32 & 0.00 & 34.08 \\
        MiLo     & 2  & 21.53 & 62.82 & 31.83 & 44.51 & 35.40 & 59.98 & 19.64 & 7.93  & 37.44 \\
        MoEQuant & 2  & 15.34 & 66.44 & 45.09 & 70.02 & 40.00 & 67.72 & 49.36 & 26.83 & 52.21 \\
        \textsc{BitsMoE} & 2 & 16.07 & 74.09 & 52.70 & 70.87 & 43.40 & 72.93 & 75.51 & 43.90 & \textbf{61.91} \\
        \cmidrule(lr){1-11}
        HQQ      & 3  & 11.45 & 78.55 & 49.01 & 75.37 & 44.40 & 71.67 & 79.53 & 43.29 & 63.12 \\
        GPTQ     & 3  & 10.90 & 79.92 & 54.07 & 75.83 & 43.40 & 72.14 & 79.45 & 38.41 & 63.32 \\
        MiLo     & 3  & 11.11 & 79.81 & 57.05 & 76.45 & 41.80 & 70.64 & 82.64 & 56.10 & 66.36 \\
        MoEQuant & 3  & 10.40 & 79.55 & 57.62 & 79.97 & 43.80 & 71.35 & 80.82 & 53.05 & 66.59 \\
        \textsc{BitsMoE} & 3 & 11.82 & 79.24 & 60.17 & 76.98 & 44.80 & 74.19 & 85.37 & 50.61 & \textbf{67.34} \\
        \midrule

        \multicolumn{11}{c}{\textbf{Qwen3-Next-80B-A3B-Instruct}} \\
        \midrule
        FP16     & 16 & 10.31 & 82.72 & 63.85 & 84.53 & 44.20 & 76.80 & 77.18 & 95.73 & 75.00 \\
        \cmidrule(lr){1-11}
        HQQ      & 2  & 12.13 & 78.73 & 50.95 & 79.68 & 43.80 & 70.40 & 66.49 & 91.46 & 68.79 \\
        GPTQ     & 2  & 15.37 & 70.24 & 27.47 & 54.63 & 38.60 & 65.59 & 18.35 & 1.22 & 39.44 \\
        MiLo     & 2  & 12.06 & 78.69 & 49.75 & 79.76 & 43.80 & 71.74 & 71.49 & 91.46 & 69.53 \\
        \textsc{BitsMoE} & 2 & 12.76 & 78.02 & 60.67 & 81.47 & 44.80 & 75.85 & 71.49 & 92.68 & \textbf{72.14} \\
        \cmidrule(lr){1-11}
        HQQ      & 3  & 10.55 & 82.11 & 61.17 & 83.81 & 45.60 & 77.03 & 76.57 & 92.68 & 74.14 \\
        GPTQ     & 3  & 10.83 & 81.42 & 59.40 & 82.52 & 44.20 & 76.09 & 76.65 & 92.07 & 73.19 \\
        MiLo     & 3  & 10.51 & 82.17 & 61.34 & 83.49 & 45.20 & 76.09 & 76.19 & 92.68 & 73.88 \\
        \textsc{BitsMoE} & 3 & 10.76 & 80.93 & 62.78 & 83.94 & 44.80 & 76.40 & 75.97 & 94.51 & \textbf{74.19} \\
        \bottomrule
    \end{tabular}%
    }
\end{table*}

\section{Experiments}
\label{sec:experiments}
We evaluate \textsc{BitsMoE} under a unified post-training compression setting in which compression is applied exclusively to MoE layers, while all attention layers are retained in FP16.
This configuration is shared by all baselines to ensure a fair comparison.
All evaluation experiments are conducted on NVIDIA A100-PCIe-80GB GPUs, and the ILP problems are solved using the Gurobi Optimizer~\cite{gurobi}.

\subsection{Experimental Setup}
\paragraph{Models and Datasets.}
We conduct experiments on DeepSeek-V2-Lite~\cite{liu2024deepseek}, Qwen3-30B-A3B-Base~\cite{qwen3}, Qwen3-Next-80B-A3B-Instruct~\cite{qwen3, qwen2.5-1m}, Qwen1.5-MoE-A2.7B~\cite{bai2023qwen, qwen15_moe_blog} and Mixtral-8x7B-v0.1~\cite{jiang2024mixtral}.
Our evaluation covers both base and instruction-tuned models to demonstrate the effectiveness of our method. In addition to perplexity on C4~\cite{raffel2020exploring}, we evaluate the proposed \textsc{BitsMoE} on a diverse suite of zero-shot tasks, including HellaSwag~\cite{zellers2019hellaswag}, MathQA~\cite{amini2019mathqa}, MMLU~\cite{hendrycks2009measuring}, OpenBookQA~\cite{mihaylov2018can} and WinoGrande~\cite{sakaguchi2021winogrande}. Furthermore, we evaluate \textsc{BitsMoE} using HumanEval~\cite{chen2021evaluating} and GSM8K~\cite{cobbe2021training}. HumanEval evaluates code generation capabilities, while GSM8K assesses multi-step mathematical reasoning skills.
We evaluate these seven tasks using the open-source tool lm-evaluation-harness (version 0.4.9.1)~\cite{eval-harness}. 

\paragraph{Baselines.}

Our baselines include representative LLM post-training quantization (PTQ) methods HQQ~\cite{badri2023hqq} and GPTQ~\cite{frantar-gptq} and the MoE-specific comparators MiLo~\cite{huang2025milo} and MoEQuant~\cite{chen2025moequant}. All methods quantize only MoE expert weights with group size 128. GPTQ and \textsc{BitsMoE} are calibrated on 1024 C4 samples, MoEQuant is calibrated on EBSS, and HQQ and MiLo are calibration-free. MoEQuant is excluded for Qwen3-Next-80B-A3B-Instruct because its released implementation does not support the linear-attention/FlashLinearAttention forward path required to quantize this model.

\subsection{Main Results}

\begin{wraptable}{r}{0.34\columnwidth}
    \vspace{-12pt}
    \centering
    \small
    \caption{Ultra-low-bit quantization results on Qwen3-30B-A3B-Base.}
    \label{tab:qwen3moe_lowbit_side}
    \resizebox{0.34\textwidth}{!}{
    \begin{tabular}{lccc}
        \toprule
        \multirow{2}{*}{Method}
        & \multirow{2}{*}{Bit}
        & \multicolumn{2}{c}{Accuracy$\uparrow$ (\%)} \\
        \cmidrule(lr){3-4}
        & & GSM8K & Avg. \\
        \midrule
        HQQ      & 2   & 22.67 & 39.51 \\
        GPTQ     & 2   &  4.32 & 34.08 \\
        MoEQuant & 2   & 49.36 & 52.21 \\
        MiLo     & 2   & 19.64 & 37.44 \\
        \midrule
        \multirow{4}{*}{\textsc{BitsMoE}}
                 & 2.0 & \textbf{75.51} & \textbf{61.91} \\
                 & 1.8 & 69.14 & 56.23 \\
                 & 1.6 & 63.53 & 53.45 \\
                 & 1.4 & 52.62 & 47.46 \\
        \bottomrule
    \end{tabular}
    }
    \vspace{-16pt}
\end{wraptable}

As shown in Table~\ref{tab:dsv2_qw3_qw3next_results} and Figure~\ref{fig:radar_compare}, \textsc{BitsMoE} consistently preserves downstream accuracy under 2-bit quantization across different MoE backbones. The gains are most pronounced on GSM8K and HumanEval, which indicates that \textsc{BitsMoE} better preserves reasoning and coding abilities under the ultra-low-bit regime. Although its PPL is not always the lowest, it remains comparable to strong baselines. These results indicate that fine-grained bit allocation over spectral components can better protect important weight directions, thereby reducing downstream degradation in ultra-low-bit MoE LLM quantization.

Table~\ref{tab:qwen3moe_lowbit_side} reports sub-2-bit results for \textsc{BitsMoE} on Qwen3-30B-A3B-Base, with average accuracy computed across seven tasks. At 1.4 bits, \textsc{BitsMoE} preserves strong GSM8K performance, which shows that the proposed allocation strategy remains effective under tighter bit budgets.

\begin{figure*}[t]
    \centering
    \begin{subfigure}[t]{0.47\textwidth}
        \centering
        \includegraphics[width=\linewidth]{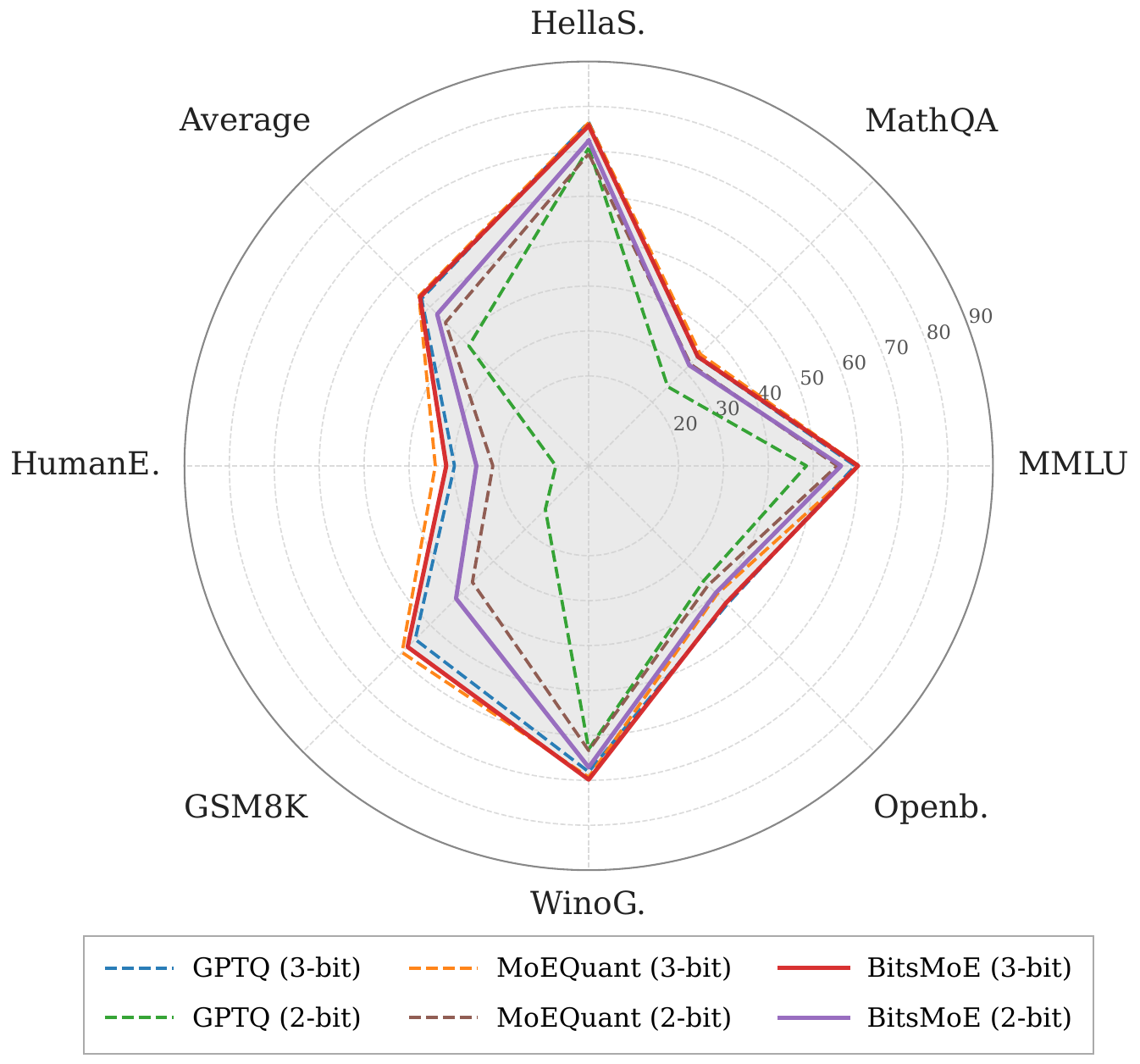}
        \caption{Qwen1.5-MoE-A2.7B}
        \label{fig:qwen2moe_radar}
    \end{subfigure}
    \hfill
    \begin{subfigure}[t]{0.47\textwidth}
        \centering
        \includegraphics[width=\linewidth]{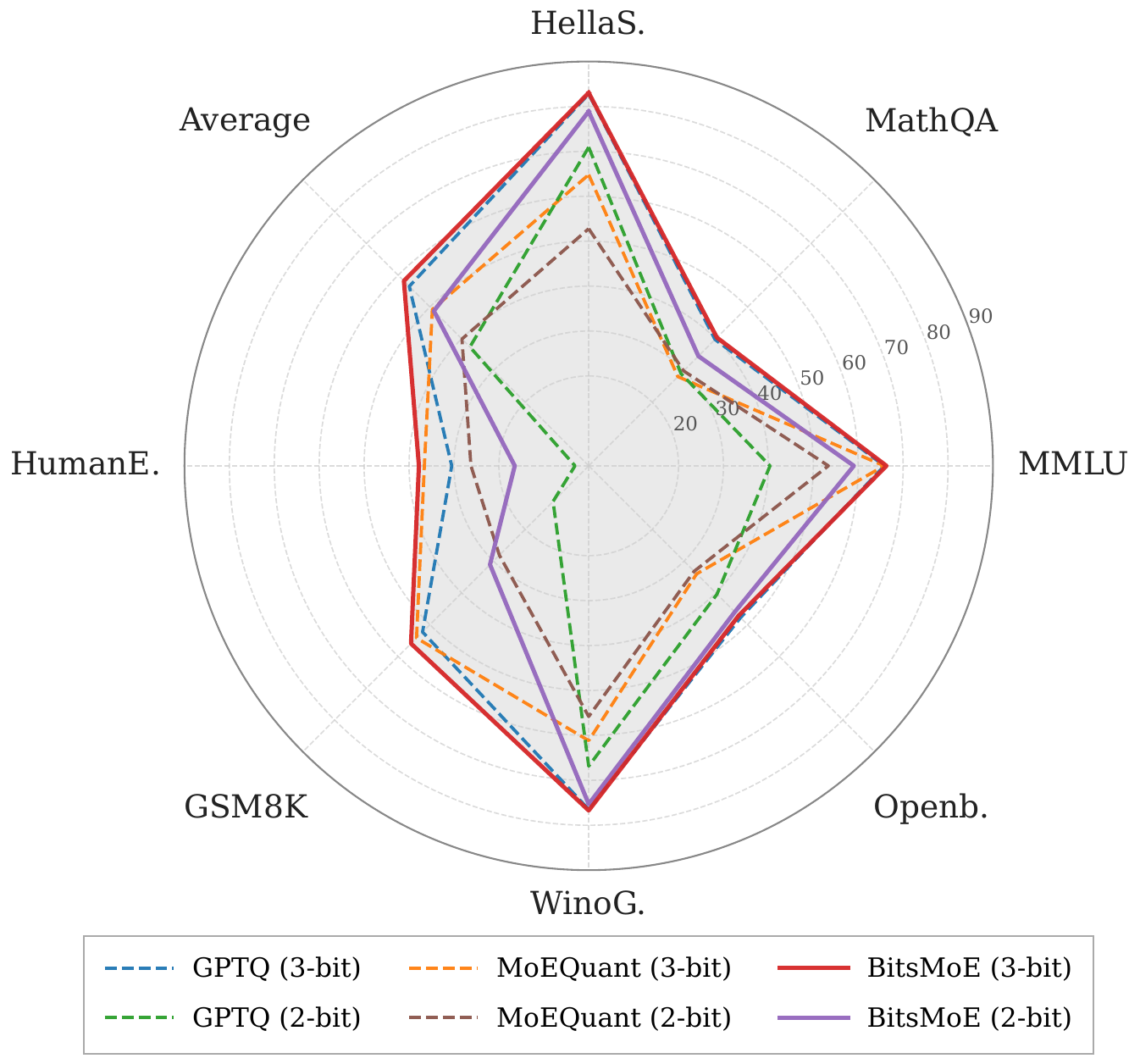}
        \caption{Mixtral-8$\times$7B}
        \label{fig:mixtral}
    \end{subfigure}

    \caption{
    Zero-shot accuracy (\%) on seven benchmarks for (a) Qwen1.5-MoE-A2.7B and (b) Mixtral-8$\times$7B under 2-bit and 3-bit quantization. 
    Compared with GPTQ and MoEQuant, \textsc{BitsMoE} generally preserves stronger accuracy across tasks, especially in the 2-bit regime.
    }
    \label{fig:radar_compare}
\end{figure*}

\subsection{Ablation Study}
\label{sec:ablation}

We evaluate four ablation settings under the same effective 2-bit budget to isolate the effects of basis sharing, FP16 shared-basis retention, and adaptive bit allocation:

\begin{compactenum}[(1)]
    \item \textbf{\texttt{NS/UniBit}}: independent SVD without basis sharing. Each expert is decomposed separately. Only the top-\(N\) spectral components are retained and uniformly quantized to 2 bits, while the remaining components are discarded.
    \item \textbf{\texttt{QS/UniBit}}: shared-basis SVD with a quantized shared basis. The shared basis is uniformly quantized to 2 bits. Only the expert-specific components selected according to spectral energy are retained and uniformly quantized to 2 bits, while the remaining expert-specific components are discarded.
    \item \textbf{\texttt{FS/UniBit}}: shared-basis SVD with an FP16 shared basis. The shared basis is kept in FP16. Only the expert-specific components selected according to spectral energy are retained and uniformly quantized to 2 bits, while the remaining expert-specific components are discarded.
    \item \textbf{\texttt{FS/AdaBit}}: the full \textsc{BitsMoE} setting. The shared basis is kept in FP16, and adaptive bit-widths are assigned to expert-specific spectral components by the activation-aware ILP under the same equivalent 2-bit budget.
\end{compactenum}

\begin{wraptable}{r}{0.42\columnwidth}
    \vspace{-8pt}
    \centering
    \caption{Ablation summary under 2-bit quantization.\protect\footnotemark}
    \label{tab:ablation_2bit_summary}
    \scriptsize
    \setlength{\tabcolsep}{3pt}
    \resizebox{\linewidth}{!}{
    \begin{tabular}{@{}lccc@{}}
        \toprule
        \textbf{Setting}
        & \textbf{DSV2-16B} 
        & \textbf{QW3-30B} 
        & \textbf{QW3-80B-I} \\
        \midrule
        \texttt{NS/UniBit}     
        & 29.72 & 36.83 & 20.82 \\
        \texttt{QS/UniBit}     
        & 21.22 & 21.46 & 21.31 \\
        \texttt{FS/UniBit}     
        & 30.56 & 43.92 & 67.69 \\
        \textbf{\texttt{FS/AdaBit}} 
        & \textbf{41.04} & \textbf{61.91} & \textbf{72.14} \\
        \bottomrule
    \end{tabular}
    }
    \vspace{1pt}
    
    \begin{minipage}{0.98\linewidth}
        \scriptsize
        \emph{Note.} NS/QS/FS denote no shared basis, quantized shared basis, and FP16 shared basis; UniBit/AdaBit denote uniform/adaptive bit allocation.
    \end{minipage}
    \vspace{-8pt}
\end{wraptable}
\footnotetext{ 
Model abbreviations: 
QW1.5-14B = Qwen1.5-MoE-A2.7B, 
DSV2-16B = DeepSeek-V2-Lite, 
QW3-30B = Qwen3-30B-A3B-Base, 
MI-8x7B = Mixtral-8$\times$7B-v0.1, 
and QW3-80B-I = Qwen3-Next-80B-A3B-Instruct. 
}

Table~\ref{tab:ablation_2bit_summary} summarizes the four ablation settings and average accuracy in the 2-bit setting. The comparison shows that a shared basis with quantization is insufficient: \texttt{QS/UniBit} performs poorly across all models, which indicates that the shared basis encodes common cross-expert information and should be retained without quantization. Under the same bit budget, preserving the shared basis in FP16 substantially improves average accuracy. \texttt{FS/AdaBit} outperforms \texttt{FS/UniBit} on three models, which demonstrates the effectiveness of spectrum-wise bit allocation under ultra-low-bit quantization. Full results are reported in Appendix~\ref{app:ablation_full}.

\subsection{Efficiency Analysis}
\label{sec:efficiency_analysis}
\begin{figure*}[t]
    \centering
    \begin{subfigure}[t]{0.49\textwidth}
        \centering
        \includegraphics[width=\linewidth]{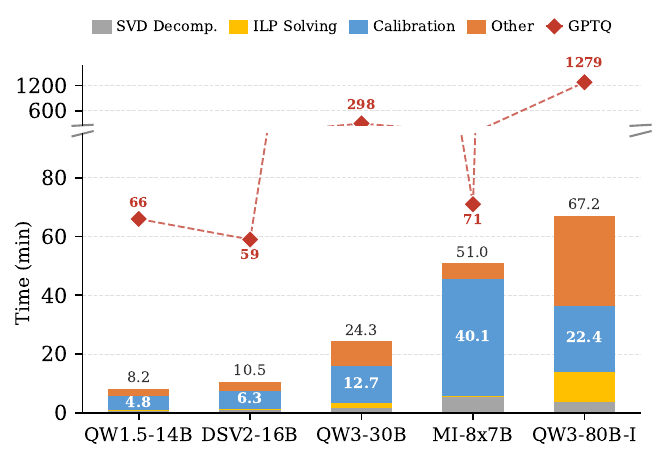}
        \caption{2-bit}
        \label{fig:time_breakdown_2bit}
    \end{subfigure}
    \hfill
    \begin{subfigure}[t]{0.49\textwidth}
        \centering
        \includegraphics[width=\linewidth]{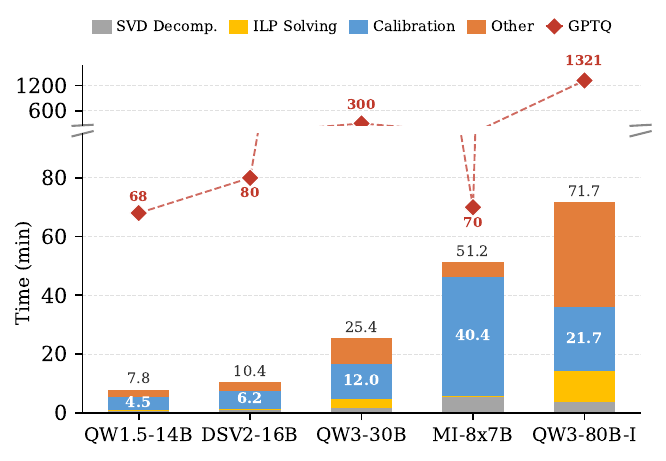}
        \caption{3-bit}
        \label{fig:time_breakdown_3bit}
    \end{subfigure}
    \caption{
    Time breakdown of the post-training quantization pipeline under 2-bit and 3-bit settings. 
    }
    \label{fig:time_breakdown}
\end{figure*}

\paragraph{ILP Breakdown and Quantization Overhead.}
Figure~\ref{fig:time_breakdown} reports the end-to-end offline quantization overhead of \textsc{BitsMoE}. On NVIDIA A100-PCIe-80GB GPUs, \textsc{BitsMoE} requires substantially less offline quantization time than GPTQ. In both 2-bit and 3-bit settings, most \textsc{BitsMoE} overhead is due to calibration-statistics collection, while SVD decomposition and ILP solving contribute only marginally. Thus, the proposed spectrum-wise allocation introduces no significant optimization bottleneck. The speedup over GPTQ stems from a compact per-layer ILP formulation, which avoids the Hessian-based error compensation required by GPTQ's sequential expert quantization.

\paragraph{Inference Efficiency.}
Table~\ref{tab:efficiency_memory} summarizes the online inference efficiency and memory footprint of \textsc{BitsMoE}. Since optimized GPTQ kernels such as Marlin~\cite{frantar2025marlin} and ExLlamaV2~\cite{exllamav2} are not applicable to the 2-bit GPTQ setting, we use the available GPTQ Triton backend~\cite{frantar-gptq,tillet2019triton} for evaluation. On NVIDIA A6000 GPUs, \textsc{BitsMoE} improves online inference efficiency by increasing decoding throughput, reducing TTFT, and lowering the MoE-layer memory footprint under 2-bit quantization. Inference is measured with batch size 1, prefill length 256, and generation length 128. Although \textsc{BitsMoE} introduces a shared basis, its projection is computed once per MoE layer and reused across routed experts. During inference, packed expert-specific spectral factors are unpacked and dequantized inside GEMM kernels without reconstructing full weights, while experts are executed in parallel within each MoE layer.

\begin{table*}[t]
    \centering
    \small
    \caption{Inference efficiency and GPU memory footprint of MoE LLMs. Decode speed is measured in tokens/sec, and TTFT denotes time to first token. Speedup is computed relative to FP16.}
    \label{tab:efficiency_memory}
    \resizebox{\textwidth}{!}{
    \begin{tabular}{lccccccccccc}
        \toprule
        \multirow{3}{*}{Model}
        & \multicolumn{6}{c}{\textbf{Inference Efficiency}}
        & \multicolumn{5}{c}{\textbf{GPU Memory (GB)}} \\
        \cmidrule(lr){2-7}
        \cmidrule(lr){8-12}
        & \multicolumn{3}{c}{Decode Speed~$\uparrow$ (tokens/sec)}
        & \multicolumn{3}{c}{TTFT~$\downarrow$ (sec)}
        & \multicolumn{3}{c}{FP16}
        & \textsc{BitsMoE}
        & Saving \\
        \cmidrule(lr){2-4}
        \cmidrule(lr){5-7}
        \cmidrule(lr){8-10}
        \cmidrule(lr){11-11}
        \cmidrule(lr){12-12}
        & FP16 & GPTQ & \textsc{BitsMoE}
        & FP16 & GPTQ & \textsc{BitsMoE}
        & Total & Attn & MoE
        & MoE
        & MoE \\
        \midrule
        DSV2-16B & 10.39 & 7.43{\scriptsize~(0.71$\times$)} & 12.46{\scriptsize~(\textbf{1.20$\times$})} & 0.47 & 1.27{\scriptsize~(0.37$\times$)} & 0.64{\scriptsize~(\textbf{0.73$\times$)}} & 29.51 & 0.69 & 27.65 & 5.08 & \textbf{5.44$\times$} \\
        QW3-30B & 3.07 & 3.25{\scriptsize~(1.06$\times$)} & 5.71{\scriptsize~(\textbf{1.86$\times$})} & 2.35 & 2.94{\scriptsize~(0.80$\times$)} & 1.51{\scriptsize~(\textbf{1.55$\times$})} & 56.95 & 1.69 & 54.00 & 8.58 & \textbf{6.29$\times$} \\
        QW3-80B-I & 1.65 & 2.59{\scriptsize~(1.57$\times$)} & 5.01{\scriptsize~(\textbf{3.04$\times$})} & 8.35 & 7.32{\scriptsize~(1.14$\times$)} & 1.06{\scriptsize~(\textbf{7.90$\times$})} & 148.69 & 0.61 & 144.28 & 21.98 & \textbf{6.56$\times$} \\
        \bottomrule
    \end{tabular}
    }
\end{table*}
\section{Limitations}
\label{sec:limitations}

\textsc{BitsMoE} has several limitations. First, its spectrum-wise ILP optimizes a tractable local activation-aware reconstruction surrogate rather than the fully coupled reconstruction objective. Although the diagonal-error approximation and empirical \texttt{down\_proj} heuristic make allocation linear and efficient, higher-order interactions among spectral components are not explicitly modeled. Second, the target bit budget is assigned uniformly across layers and projection types. This simple design does not exploit heterogeneous sensitivity across layers and projections, which suggests adaptive high-level budget allocation as future work. Third, \textsc{BitsMoE} compresses only MoE expert weights, whereas attention layers, activations, and the KV cache remain unquantized. These components can be compressed by general-purpose quantization or KV-cache compression methods that are orthogonal to \textsc{BitsMoE}.

\section{Conclusion}
We present \textsc{BitsMoE}, a shared-basis mixed-precision quantization framework for ultra-low-bit MoE LLM compression.
\textsc{BitsMoE} decomposes each MoE layer into a shared spectral basis and expert-specific spectral factors, retaining the shared basis without quantization while assigning mixed bit-widths to fine-grained expert-specific spectral components.
By formulating spectrum-wise bit allocation as an activation-aware reconstruction surrogate and solving the resulting ILP under a fixed bit budget, \textsc{BitsMoE} allocates limited bits according to spectral energy, activation importance, and bit-dependent distortion.
Experiments across multiple MoE backbones show that this design substantially reduces accuracy degradation in ultra-low-bit regimes, especially under 2-bit quantization, while also reducing MoE-layer memory footprint and improving inference efficiency.
These results suggest that shared spectral structure and activation-aware bit allocation provide a useful direction for future research on fine-grained, structure-aware compression of sparse LLMs.

\section*{Impact Statement}
\textsc{BitsMoE} aims to reduce the memory footprint and inference cost of MoE large language models by compressing expert weights under ultra-low-bit budgets. Its positive impacts include lowering hardware barriers, reducing deployment costs, and improving the accessibility and energy efficiency of large-scale MoE inference. At the same time, more efficient MoE deployment may also lower the cost of using powerful language models for harmful applications, such as misinformation generation, automated spam, or privacy-invasive applications. Since \textsc{BitsMoE} does not modify the safety alignment or usage policies of the underlying models, compressed models may inherit the risks and limitations of the original models. We therefore encourage users to follow the licenses, usage policies, and safety guidelines of the original models and to evaluate compressed models under task-specific safety and reliability requirements before deployment.

\begin{ack}
This work was partially supported by the Strategic Priority Research Program of the CAS under Grant XDB0660000, and in part by the National Natural Science Foundation of China under Grant 92473114.

This work was partially supported by the Ministry of Education, Singapore, under the Academic Research Fund Tier 2 (MOE-T2EP20224-0006).
\end{ack}

\bibliographystyle{plainnat}
\bibliography{references}


\clearpage
\appendix
\section{Positioning \textsc{BitsMoE} Among MoE Compression Methods}
\label{app:prior_method_comparison}

\begin{table*}[h]
    \centering
    \scriptsize
    \caption{Positioning of \textsc{BitsMoE} relative to representative MoE compression paradigms. \cmark{} and \xmark{} indicate whether each feature is a primary design component under the corresponding column definition.}
    \label{tab:app_prior_method_comparison}
    \setlength{\tabcolsep}{5pt}
    \renewcommand{\arraystretch}{1.4}
    \newcolumntype{L}[1]{>{\raggedright\arraybackslash}m{#1}}
    \newcolumntype{C}[1]{>{\centering\arraybackslash}m{#1}}
    \resizebox{0.98\textwidth}{!}{%
    \begin{tabularx}{\textwidth}{@{}C{0.16\textwidth}L{0.21\textwidth}C{0.058\textwidth}C{0.058\textwidth}C{0.058\textwidth}C{0.058\textwidth}X@{}}
        \toprule
        \multirowcell{2}[-3pt]{\textbf{Representative}\\\textbf{methods}}
        & \multirowcell{2}[-3pt]{\textbf{Core technique}}
        & \multicolumn{4}{c}{\textbf{Design features}}
        & \multirowcell{2}[-3pt]{\textbf{Compression /}\\\textbf{allocation unit}} \\
        \cmidrule(lr){3-6}
        & & \makecell{\textbf{Shared}\\\textbf{basis}} & \makecell{\textbf{Bit}\\\textbf{alloc.}} & \makecell{\textbf{Act.}\\\textbf{aware}} & \makecell{\textbf{MoE}\\\textbf{prior}} & \\
        \midrule
        MoE-I\textsuperscript{2}~\cite{yang2024moe} & Inter-expert pruning with intra-expert low-rank decomposition & \xmark & \xmark & \xmark & \xmark & Expert / intra-expert rank \\
        D\textsuperscript{2}-MoE~\cite{gu2025delta} & Fisher-weighted shared-base and expert-specific delta compression & \xmark & \xmark & \xmark & \xmark & Shared base / expert-specific delta rank \\
        MoE-SVD~\cite{li2025moesvd} & Low-rank decomposition with factor sharing & \cmark & \xmark & \cmark & \cmark & Layer / rank / low-rank factor \\
        GPTQ~\cite{frantar-gptq} & Hessian-based error-compensated PTQ & \xmark & \xmark & \cmark & \xmark & Original weight block / group \\
        HQQ~\cite{badri2023hqq} & Calibration-free half-quadratic quantization & \xmark & \xmark & \xmark & \xmark & Original weight group \\
        MoEQuant~\cite{chen2025moequant} & MoE-aware scalar quantization & \xmark & \xmark & \cmark & \cmark & Expert-wise weight group \\
        MxMoE~\cite{duanmu2025mxmoe} & Mixed precision with kernel co-design & \xmark & \cmark & \cmark & \cmark & Linear block, e.g., \texttt{gate\_proj}, \texttt{up\_proj}, \texttt{down\_proj} \\
        MiLo~\cite{huang2025milo} & Low-bit quantization with low-rank compensation & \xmark & \xmark & \xmark & \cmark & Compensator rank / layer--expert group \\
        \textbf{\textsc{BitsMoE}} & \textbf{Shared-basis spectrum-wise mixed precision} & \textbf{\cmark} & \textbf{\cmark} & \textbf{\cmark} & \textbf{\cmark} & \textbf{Spectral component under a shared basis} \\
        \bottomrule
    \end{tabularx}
    }
    \begin{minipage}{0.98\textwidth}
        \footnotesize
        \emph{Note.} ``Shared basis'' denotes an explicitly retained common spectral basis used as the compression or quantization parameterization space. ``Bit alloc.'' denotes explicit bit-width assignment across units. ``Act. aware'' denotes the use of calibration activations or activation-derived statistics. ``MoE prior'' denotes explicit use of routing frequency, token--expert affinity, or expert-utilization imbalance.
    \end{minipage}
\end{table*}

This appendix positions \textsc{BitsMoE} relative to representative MoE compression paradigms. Existing methods typically reduce memory by pruning structure, truncating rank, quantizing weights in the original space, or compensating after quantization. \textsc{BitsMoE} instead changes the allocation space: a shared spectral basis is extracted, expert-specific spectral components are used as fine-grained quantization units, and bit-widths are assigned to these components by an activation-aware ILP under a fixed budget.

This design defines a different decision unit. Pruning and rank-compression methods make hard structural decisions over experts, ranks, or low-rank factors. Scalar PTQ methods preserve the architecture but quantize weight groups or channels in the original weight space. MoEQuant adapts scalar PTQ with expert-balanced calibration and token--expert affinity, but it does not allocate bits adaptively. MxMoE is closer to \textsc{BitsMoE} because both use mixed precision, but it assigns precision at the linear-block level. MiLo follows a quantize-then-compensate path, in which low-rank compensators restore information lost under extreme quantization. By contrast, \textsc{BitsMoE} allocates mixed precision over spectral components, which enables finer granularity and treats component eviction as a budget-aware allocation decision rather than a predefined structural truncation.

\section{Detailed Derivation of the Error Model and ILP Formulation}
\label{app:piecewise_ilp_details}

This section provides a detailed derivation of the spectrum-wise reconstruction-error model and the resulting ILP formulation used in Section~\ref{sec:method}. Section~\ref{sec:method} presents the method in a compact form, whereas this section expands the derivation step by step, from the shared-basis decomposition to the spectrum-wise error objective and the ILP-based mixed-precision bit allocation. The notation used throughout this section is summarized in Table~\ref{tab:app_detailed_notation}.

\begin{table*}[t]
    \centering
    \small
    \caption{Detailed notation used in Appendix~\ref{app:piecewise_ilp_details}.}
    \label{tab:app_detailed_notation}
    \renewcommand{\arraystretch}{1.08}
    \begin{tabularx}{\textwidth}{@{}>{\raggedright\arraybackslash}p{0.15\textwidth}>{\raggedright\arraybackslash}p{0.29\textwidth}>{\raggedright\arraybackslash}X@{}}
        \toprule
        Category & Symbol & Meaning \\
        \midrule
        Indices
        & \(e\in[E],\ h\in\mathcal{H},\ k\in[n_h],\ b\in\mathcal{B}\)
        & Expert index, projection type, spectral-component index, and candidate bit-width. \\
        Projection types
        & \(\mathcal{H}\), \(\mathcal{H}_{\mathrm{in}}\), \(h_{\mathrm{dn}}\)
        & Projection set \(\mathcal{H}\coloneqq\{\mathtt{gate\_proj},\mathtt{up\_proj},\mathtt{down\_proj}\}\). We use \(\mathcal{H}_{\mathrm{in}}\coloneqq\{\mathtt{gate\_proj},\mathtt{up\_proj}\}\) and \(h_{\mathrm{dn}}\coloneqq\mathtt{down\_proj}\). \\
        Dimensions
        & \(d_h,\ n_h\)
        & Length of the expert-specific spectral vector and number of retained spectral components for projection type \(h\). \\
        Expert weights
        & \(\boldsymbol{W}_{e}^{(h)}\), \(\boldsymbol{W}_{\mathrm{cat}}^{(h)}\)
        & Expert weight matrix and the expert-concatenated matrix used to construct the layer-wise shared basis. \\
        Shared-basis decomposition
        & \(\widetilde{\boldsymbol{P}}_{e}^{(h)}\), \(\boldsymbol{P}_{e}^{(h)}\), \(\boldsymbol{A}_{e}^{(h)}\), \(\boldsymbol{\Phi}_{h}\)
        & Singular-value-absorbed expert-specific spectral matrix, its column-normalized version, diagonal spectral-energy matrix, and shared basis. Only \(\boldsymbol{P}_{e}^{(h)}\) is assigned mixed bit-widths; \(\boldsymbol{\Phi}_{h}\) is kept unquantized. \\
        Spectral directions
        & \(\widetilde{\boldsymbol{p}}_{e,h,k}\), \(\boldsymbol{p}_{e,h,k}\), \(\boldsymbol{\phi}_{h,k}\)
        & Unnormalized expert-specific spectral vector, normalized direction with \(\|\boldsymbol{p}_{e,h,k}\|_2=1\), and corresponding shared-basis direction. \\
        Spectral energy
        & \(\alpha_{e,h,k}\), \(\boldsymbol{W}_{e,h,k}\)
        & Component energy and the associated rank-one spectral component. \\
        Calibration statistics
        & \(\boldsymbol{X}_{e,h}\), \(\boldsymbol{g}_{e}\), \(\boldsymbol{X}_{g,e,h}\), \(\boldsymbol{H}_{e,h}\)
        & Routed calibration activations, routing weights, affinity-weighted activations, and the corresponding activation Gram matrix. \\
        Component importance
        & \(\beta_{e,h,k}\), \(\gamma\in[0,1]\)
        & Activation-aware component importance and its smoothing exponent in the ILP objective. \\
        Quantization error
        & \(Q_b(\cdot;s_k)\), \(\widehat{\boldsymbol{p}}_{e,h,k}\), \(\boldsymbol{\varepsilon}_{e,h,k}(b)\)
        & Symmetric uniform quantizer, quantized spectral vector, and vector-valued quantization error. \\
        Direction distortion
        & \(\mathcal{E}_{e,h,k}(b)\), \(\rho_{e,h,k}\), \(\eta_{e,h,k}\), \(\kappa_b\), \(\theta_{e,h,k}\)
        & Scalar distortion \(\mathcal{E}_{e,h,k}(b)\coloneqq\mathbb{E}\|\boldsymbol{\varepsilon}_{e,h,k}(b)\|_2^2\). The remaining symbols are auxiliary coefficients in the piecewise distortion model. \\
        Component cost
        & \(L_{e,h,k}(b)\)
        & Reconstruction-loss surrogate for assigning \(b\) bits to component \((e,h,k)\). \\
        ILP variables
        & \(y_{e,h,k,b}\), \(\boldsymbol{Y}^{(h)}\), \(\boldsymbol{C}^{(h)}\), \(\boldsymbol{\Omega}^{(h)}\)
        & Binary bit-assignment variable and its projection-wise collections, with \(C_{e,h,k,b}\coloneqq L_{e,h,k}(b)\) and \(\Omega_{e,h,k,b}\coloneqq b\). \\
        Bit budgets
        & \(\mathfrak{b}_{\mathrm{eq}}\), \(B\), \(B_h^{\mathrm{bit}}\), \(B_h\)
        & Target equivalent bit-width, layer-level remaining bit budget, projection-level physical bit budget, and normalized component budget used by the ILP. \\
        \bottomrule
    \end{tabularx}
\end{table*}

\subsection{Shared-basis Spectral Decomposition}
\label{app:shared_basis_decomposition}

Within an MoE layer, experts share the same feature spaces but implement different parameterized transformations. This motivates constructing a layer-wise shared spectral basis across experts. We denote the projection types by
\(\mathcal{H}\coloneqq\{\mathtt{gate\_proj},\mathtt{up\_proj},\mathtt{down\_proj}\}\), with
\(\mathcal{H}_{\mathrm{in}}\coloneqq\{\mathtt{gate\_proj},\mathtt{up\_proj}\}\) and \(h_{\mathrm{dn}}\coloneqq\mathtt{down\_proj}\).
The concatenation direction determines whether the shared basis is defined over the input or output feature space.

For \(h\in\mathcal{H}_{\mathrm{in}}\), expert weights share the same input feature space. We concatenate expert weights along the output-channel dimension:
\begin{equation}
    \label{eq:app_w_cat}
    \boldsymbol{W}_{\mathrm{cat}}^{(h)}
    \coloneqq
    \begin{bmatrix}
        \boldsymbol{W}_{1}^{(h)} \\
        \vdots \\
        \boldsymbol{W}_{E}^{(h)}
    \end{bmatrix}.
\end{equation}
We then compute the SVD of the concatenated matrix and merge the singular values into the left factor:
\begin{equation}
    \label{eq:app_cat_svd_absorbed}
    \boldsymbol{W}_{\mathrm{cat}}^{(h)}
    =
    \boldsymbol{U}_{\mathrm{cat}}^{(h)}
    \boldsymbol{\Sigma}^{(h)}
    \boldsymbol{\Phi}_{h}^{\top}
    =
    \widetilde{\boldsymbol{P}}_{\mathrm{cat}}^{(h)}
    \boldsymbol{\Phi}_{h}^{\top},
    \qquad
    \widetilde{\boldsymbol{P}}_{\mathrm{cat}}^{(h)}
    \coloneqq
    \boldsymbol{U}_{\mathrm{cat}}^{(h)}
    \boldsymbol{\Sigma}^{(h)}.
\end{equation}
After merging the singular values, we partition the resulting matrix according to expert blocks:
\begin{equation}
    \label{eq:app_u_tilde_partition}
    \widetilde{\boldsymbol{P}}_{\mathrm{cat}}^{(h)}
    =
    \begin{bmatrix}
        \widetilde{\boldsymbol{P}}_{1}^{(h)} \\
        \vdots \\
        \widetilde{\boldsymbol{P}}_{E}^{(h)}
    \end{bmatrix},
    \qquad
    \boldsymbol{W}_{e}^{(h)}
    =
    \widetilde{\boldsymbol{P}}_{e}^{(h)}
    \boldsymbol{\Phi}_{h}^{\top}.
\end{equation}

For \(h=h_{\mathrm{dn}}\), expert weights share the same output feature space. We concatenate expert weights along the input-channel dimension:
\begin{equation}
    \label{eq:app_down_w_cat}
    \boldsymbol{W}_{\mathrm{cat}}^{(h)}
    \coloneqq
    \begin{bmatrix}
        \boldsymbol{W}_{1}^{(h)}
        &
        \cdots
        &
        \boldsymbol{W}_{E}^{(h)}
    \end{bmatrix}.
\end{equation}
The corresponding SVD-and-absorption step is
\begin{equation}
    \label{eq:app_down_cat_svd_absorbed}
    \boldsymbol{W}_{\mathrm{cat}}^{(h)}
    =
    \boldsymbol{\Phi}_{h}
    \boldsymbol{\Sigma}^{(h)}
    \boldsymbol{V}_{\mathrm{cat}}^{(h)\top}
    =
    \boldsymbol{\Phi}_{h}
    \widetilde{\boldsymbol{P}}_{\mathrm{cat}}^{(h)\top},
    \qquad
    \widetilde{\boldsymbol{P}}_{\mathrm{cat}}^{(h)}
    \coloneqq
    \boldsymbol{V}_{\mathrm{cat}}^{(h)}
    \boldsymbol{\Sigma}^{(h)}.
\end{equation}
Partitioning \(\widetilde{\boldsymbol{P}}_{\mathrm{cat}}^{(h)}\) according to expert input-channel blocks gives
\begin{equation}
    \label{eq:app_down_p_tilde_partition}
    \widetilde{\boldsymbol{P}}_{\mathrm{cat}}^{(h)}
    =
    \begin{bmatrix}
        \widetilde{\boldsymbol{P}}_{1}^{(h)} \\
        \vdots \\
        \widetilde{\boldsymbol{P}}_{E}^{(h)}
    \end{bmatrix},
    \qquad
    \boldsymbol{W}_{e}^{(h)}
    =
    \boldsymbol{\Phi}_{h}
    \widetilde{\boldsymbol{P}}_{e}^{(h)\top}.
\end{equation}

\begin{definition}[Spectral component and energy matrix]
\label{def:app_spectral_component_energy}
Let \(\boldsymbol{\phi}_{h,k}\) denote the \(k\)-th column of the shared basis \(\boldsymbol{\Phi}_{h}\), and let \(\widetilde{\boldsymbol{p}}_{e,h,k}\coloneqq\widetilde{\boldsymbol{P}}_{e}^{(h)}[:,k]\) denote the corresponding expert-specific spectral vector. We define its spectral energy as
\begin{equation}
    \label{eq:app_spectral_energy}
    \alpha_{e,h,k}
    \coloneqq
    \left\|
        \widetilde{\boldsymbol{p}}_{e,h,k}
    \right\|_2.
\end{equation}
The component energies are collected into a diagonal matrix
\begin{equation}
    \label{eq:app_energy_matrix}
    \boldsymbol{A}_{e}^{(h)}
    \coloneqq
    \operatorname{diag}
    (\alpha_{e,h,1},\ldots,\alpha_{e,h,n_h}).
\end{equation}
The corresponding rank-one spectral component is
\begin{equation}
    \label{eq:app_spectral_component}
    \boldsymbol{W}_{e,h,k}
    \coloneqq
    \begin{cases}
        \widetilde{\boldsymbol{p}}_{e,h,k}\boldsymbol{\phi}_{h,k}^{\top},
            & h\in\mathcal{H}_{\mathrm{in}}, \\[3pt]
        \boldsymbol{\phi}_{h,k}\widetilde{\boldsymbol{p}}_{e,h,k}^{\top},
            & h=h_{\mathrm{dn}}.
    \end{cases}
\end{equation}
\end{definition}

\begin{definition}[Normalized expert-specific spectral matrix]
\label{def:app_normalized_spectral_matrix}
To decouple component magnitude from direction, each column of \(\widetilde{\boldsymbol{P}}_{e}^{(h)}\) is normalized by its spectral energy:
\begin{equation}
    \label{eq:app_normalized_spectral_matrix}
    \boldsymbol{P}_{e}^{(h)}
    \coloneqq
    \widetilde{\boldsymbol{P}}_{e}^{(h)}
    \left(\boldsymbol{A}_{e}^{(h)}\right)^{-1}
    =
    [\boldsymbol{p}_{e,h,1},\ldots,\boldsymbol{p}_{e,h,n_h}],
    \qquad
    \boldsymbol{p}_{e,h,k}
    \coloneqq
    \frac{\widetilde{\boldsymbol{p}}_{e,h,k}}{\alpha_{e,h,k}}.
\end{equation}
Thus, \(\|\boldsymbol{p}_{e,h,k}\|_2=1\) for every component. The expert weight admits the unified normalized shared-basis form
\begin{equation}
    \label{eq:app_normalized_decomposition}
    \boldsymbol{W}_{e}^{(h)}
    =
    \begin{cases}
        \boldsymbol{P}_{e}^{(h)}\boldsymbol{A}_{e}^{(h)}\boldsymbol{\Phi}_{h}^{\top}
        =
        \sum_{k=1}^{n_h}\alpha_{e,h,k}\boldsymbol{p}_{e,h,k}\boldsymbol{\phi}_{h,k}^{\top},
            & h\in\mathcal{H}_{\mathrm{in}}, \\[5pt]
        \boldsymbol{\Phi}_{h}\boldsymbol{A}_{e}^{(h)}\boldsymbol{P}_{e}^{(h)\top}
        =
        \sum_{k=1}^{n_h}\alpha_{e,h,k}\boldsymbol{\phi}_{h,k}\boldsymbol{p}_{e,h,k}^{\top},
            & h=h_{\mathrm{dn}}.
    \end{cases}
\end{equation}
\end{definition}

In this unified notation, \(\boldsymbol{P}_{e}^{(h)}\) always denotes the expert-specific normalized spectral matrix assigned mixed bit-widths, whereas \(\boldsymbol{\Phi}_{h}\) always denotes the shared basis retained without quantization.

\subsection{Activation-aware Reconstruction Loss}
\label{app:activation_weighted_loss}

We first consider the loss of a single expert for \(h\in\mathcal{H}_{\mathrm{in}}\). Let \(\boldsymbol{P}=[\boldsymbol{p}_{1},\ldots,\boldsymbol{p}_{n}]\), \(\boldsymbol{A}=\operatorname{diag}(\alpha_{1},\ldots,\alpha_{n})\), and \(\boldsymbol{\Phi}=[\boldsymbol{\phi}_{1},\ldots,\boldsymbol{\phi}_{n}]\). Quantization is applied only to the expert-specific normalized spectral vectors:
\begin{equation}
    \label{eq:app_quantized_vector_error}
    \widehat{\boldsymbol{p}}_{k}
    =
    Q_{b_{k}}(\boldsymbol{p}_{k}),
    \qquad
    \boldsymbol{\varepsilon}_{k}
    \coloneqq
    \boldsymbol{p}_{k}
    -
    \widehat{\boldsymbol{p}}_{k}.
\end{equation}
The reconstructed weight and the induced weight perturbation are
\begin{equation}
    \label{eq:app_reconstructed_weight}
    \widehat{\boldsymbol{W}}
    =
    \sum_{k=1}^{n}
    \alpha_{k}
    \widehat{\boldsymbol{p}}_{k}
    \boldsymbol{\phi}_{k}^{\top},
    \qquad
    \boldsymbol{\Delta}
    \coloneqq
    \boldsymbol{W}-\widehat{\boldsymbol{W}}
    =
    \sum_{k=1}^{n}
    \alpha_{k}
    \boldsymbol{\varepsilon}_{k}
    \boldsymbol{\phi}_{k}^{\top}.
\end{equation}
Equivalently, if \(\boldsymbol{E}_{P}\coloneqq\boldsymbol{P}-\widehat{\boldsymbol{P}}=[\boldsymbol{\varepsilon}_{1},\ldots,\boldsymbol{\varepsilon}_{n}]\), then \(\boldsymbol{\Delta}=\boldsymbol{E}_{P}\boldsymbol{A}\boldsymbol{\Phi}^{\top}\).

\begin{definition}[Activation-aware reconstruction loss]
\label{def:app_activation_output_loss}
Given the input activation matrix \(\boldsymbol{X}\) routed to this expert, the activation-output reconstruction loss is defined as
\begin{equation}
    \label{eq:app_activation_output_loss}
    L(\widehat{\boldsymbol{W}})
    \coloneqq
    \mathbb{E}
    \left\|
        (\boldsymbol{W}-\widehat{\boldsymbol{W}})
        \boldsymbol{X}_{g}
    \right\|_F^2.
\end{equation}
\end{definition}

Following the affinity-guided calibration idea for MoEQuant~\cite{chen2025moequant}, we incorporate token-expert routing affinity into the activation statistics by defining
\begin{equation}
    \label{eq:app_affinity_gram}
    \boldsymbol{H}
    \coloneqq
    \boldsymbol{X}_{g}
    \boldsymbol{X}_{g}^{\top}
    =
    \boldsymbol{X}
    \operatorname{Diag}(\boldsymbol{g})
    \boldsymbol{X}^{\top}
    =
    \sum_{t=1}^{T}
    g_{t}
    \boldsymbol{x}_{t}
    \boldsymbol{x}_{t}^{\top},
\end{equation}
where \(\boldsymbol{X}=[\boldsymbol{x}_{1},\ldots,\boldsymbol{x}_{T}]\) contains the activations routed to this expert, \(\boldsymbol{g}=[g_{1},\ldots,g_{T}]^{\top}\) contains the corresponding routing weights, and \(\boldsymbol{X}_{g}\coloneqq\boldsymbol{X}\operatorname{Diag}(\boldsymbol{g})^{1/2}\). Since \(g_t\ge0\), \(\boldsymbol{H}\succeq0\). This matrix measures the routing-affinity-weighted activation distribution for this expert.

\begin{lemma}[Spectrum-wise reconstruction error]
\label{lem:app_full_spectrum_error}
For \(h\in\mathcal{H}_{\mathrm{in}}\), under the shared-basis decomposition in Eq.~\eqref{eq:app_normalized_decomposition} and the perturbation in Eq.~\eqref{eq:app_reconstructed_weight}, the reconstruction loss satisfies
\begin{equation}
    \label{eq:app_full_spectrum_error}
    L(\widehat{\boldsymbol{W}})
    =
    \sum_{k=1}^{n}
    \sum_{l=1}^{n}
    \alpha_{k}\alpha_{l}
    \left(
        \boldsymbol{\phi}_{k}^{\top}
        \boldsymbol{H}
        \boldsymbol{\phi}_{l}
    \right)
    \mathbb{E}
    \left[
        \boldsymbol{\varepsilon}_{k}^{\top}
        \boldsymbol{\varepsilon}_{l}
    \right].
\end{equation}
\end{lemma}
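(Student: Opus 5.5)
The plan is to expand the Frobenius norm as a trace, substitute the rank-one expansion of the perturbation $\boldsymbol{\Delta}$ from Eq.~\eqref{eq:app_reconstructed_weight}, and use linearity of trace and expectation to reach the double sum in Eq.~\eqref{eq:app_full_spectrum_error}. The randomness lives only in the quantization errors $\boldsymbol{\varepsilon}_k$. The quantities $\alpha_k$, $\boldsymbol{\phi}_k$ and the calibration Gram matrix $\boldsymbol{H}$ from Eq.~\eqref{eq:app_affinity_gram} are deterministic for a fixed expert and projection, so they can be pulled out of the expectation.

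First, I write $\|\boldsymbol{\Delta}\boldsymbol{X}_g\|_F^2=\operatorname{Tr}(\boldsymbol{\Delta}\boldsymbol{X}_g\boldsymbol{X}_g^{\top}\boldsymbol{\Delta}^{\top})=\operatorname{Tr}(\boldsymbol{\Delta}\boldsymbol{H}\boldsymbol{\Delta}^{\top})$, using $\boldsymbol{H}=\boldsymbol{X}_g\boldsymbol{X}_g^{\top}$. Second, I substitute
\[
\boldsymbol{\Delta}=\sum_k\alpha_k\boldsymbol{\varepsilon}_k\boldsymbol{\phi}_k^{\top},
\qquad
\boldsymbol{\Delta}^{\top}=\sum_l\alpha_l\boldsymbol{\phi}_l\boldsymbol{\varepsilon}_l^{\top},
\]
which gives
\[
\operatorname{Tr}(\boldsymbol{\Delta}\boldsymbol{H}\boldsymbol{\Delta}^{\top})=\sum_{k,l}\alpha_k\alpha_l\operatorname{Tr}\bigl(\boldsymbol{\varepsilon}_k\boldsymbol{\phi}_k^{\top}\boldsymbol{H}\boldsymbol{\phi}_l\boldsymbol{\varepsilon}_l^{\top}\bigr).
\]

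Third, I note that $\boldsymbol{\phi}_k^{\top}\boldsymbol{H}\boldsymbol{\phi}_l$ is a scalar and can be factored out. Cyclicity of the trace then gives $\operatorname{Tr}(\boldsymbol{\varepsilon}_k\boldsymbol{\varepsilon}_l^{\top})=\boldsymbol{\varepsilon}_l^{\top}\boldsymbol{\varepsilon}_k=\boldsymbol{\varepsilon}_k^{\top}\boldsymbol{\varepsilon}_l$. Finally, I take the expectation. The sum is finite and the coefficients are deterministic, so the expectation moves inside and lands only on $\boldsymbol{\varepsilon}_k^{\top}\boldsymbol{\varepsilon}_l$.

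I do not expect a substantive obstacle; the argument is algebraic. The only point needing care is to state explicitly what the expectation ranges over. I will treat $\boldsymbol{X}$ and $\boldsymbol{g}$, and hence $\boldsymbol{H}$, as fixed calibration data, with the expectation taken over the quantization noise. If instead one wants the expectation over activations as well, the same steps go through provided $\boldsymbol{H}$ is read as $\mathbb{E}[\boldsymbol{X}_g\boldsymbol{X}_g^{\top}]$ and the activations are independent of the quantization errors. In that case the expectation factorizes termwise, and I would add a remark saying so.
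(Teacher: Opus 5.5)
Your proposal is correct and follows the paper's proof step for step: rewrite the loss as $\mathbb{E}[\operatorname{Tr}(\boldsymbol{\Delta}\boldsymbol{H}\boldsymbol{\Delta}^{\top})]$, substitute the rank-one expansion of $\boldsymbol{\Delta}$, factor out the scalar $\boldsymbol{\phi}_k^{\top}\boldsymbol{H}\boldsymbol{\phi}_l$, and apply $\operatorname{Tr}(\boldsymbol{a}\boldsymbol{b}^{\top})=\boldsymbol{b}^{\top}\boldsymbol{a}$. Your remark that $\boldsymbol{H}$ is held fixed as calibration data, with the expectation taken over the quantization noise, is a useful clarification that the paper leaves implicit.
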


\begin{proof}
Using \(\|\boldsymbol{A}\|_F^2=\operatorname{Tr}(\boldsymbol{A}\boldsymbol{A}^{\top})\), Eq.~\eqref{eq:app_activation_output_loss} becomes
\begin{equation}
    \label{eq:app_trace_loss}
    L(\widehat{\boldsymbol{W}})
    =
    \mathbb{E}
    \left[
    \operatorname{Tr}
    \left(
        \boldsymbol{\Delta}
        \boldsymbol{H}
        \boldsymbol{\Delta}^{\top}
    \right)
    \right].
\end{equation}
Substituting Eq.~\eqref{eq:app_reconstructed_weight} into Eq.~\eqref{eq:app_trace_loss} gives
\begin{equation}
    \label{eq:app_trace_expand}
    \begin{aligned}
    L(\widehat{\boldsymbol{W}})
    & =
    \sum_{k=1}^{n}
    \sum_{l=1}^{n}
    \alpha_{k}\alpha_{l}
    \mathbb{E}
    \left[
    \operatorname{Tr}
    \left(
        \boldsymbol{\varepsilon}_{k}
        \boldsymbol{\phi}_{k}^{\top}
        \boldsymbol{H}
        \boldsymbol{\phi}_{l}
        \boldsymbol{\varepsilon}_{l}^{\top}
    \right)
    \right].
    \end{aligned}
\end{equation}
The middle term \(\boldsymbol{\phi}_{k}^{\top}\boldsymbol{H}\boldsymbol{\phi}_{l}\) is scalar, and \(\operatorname{Tr}(\boldsymbol{a}\boldsymbol{b}^{\top})=\boldsymbol{b}^{\top}\boldsymbol{a}\). Therefore,
\begin{equation}
    \operatorname{Tr}
    \left(
        \boldsymbol{\varepsilon}_{k}
        \boldsymbol{\phi}_{k}^{\top}
        \boldsymbol{H}
        \boldsymbol{\phi}_{l}
        \boldsymbol{\varepsilon}_{l}^{\top}
    \right)
    =
    \left(
        \boldsymbol{\phi}_{k}^{\top}
        \boldsymbol{H}
        \boldsymbol{\phi}_{l}
    \right)
    \boldsymbol{\varepsilon}_{k}^{\top}
    \boldsymbol{\varepsilon}_{l}.
\end{equation}
Substituting this identity into Eq.~\eqref{eq:app_trace_expand} proves Eq.~\eqref{eq:app_full_spectrum_error}.
\end{proof}

To obtain an additive spectrum-wise loss and avoid a quadratic integer program, we use the following diagonal component-error approximation.

\begin{assumption}[Diagonal component-error approximation]
\label{assump:app_diagonal_error}
For distinct spectral components \(k\neq l\), the corresponding quantization errors are treated as approximately uncorrelated:
\begin{equation}
    \label{eq:app_diagonal_error}
    \mathbb{E}
    \left[
        \boldsymbol{\varepsilon}_{k}^{\top}
        \boldsymbol{\varepsilon}_{l}
    \right]
    \approx 0,
    \qquad
    \forall\,k\ne l.
\end{equation}
\end{assumption}

For \(b\geq 1\), this approximation is supported by the standard symmetric-quantization model, under which separately scaled component-wise quantizers induce approximately zero-mean errors. Specifically,
\begin{equation}
    \label{eq:app_uncorrelated_error}
    \mathbb{E}
    \left[
        \boldsymbol{\varepsilon}_{k}^{\top}
        \boldsymbol{\varepsilon}_{l}
    \right]
    \approx
    \mathbb{E}
    \left[
        \boldsymbol{\varepsilon}_{k}
    \right]^{\top}
    \mathbb{E}
    \left[
        \boldsymbol{\varepsilon}_{l}
    \right]
    \approx 0,
    \qquad
    \forall\,k\ne l.
\end{equation}
This approximation removes cross-component error terms and makes the spectrum-wise objective additive. For \(b=0\), the same removal should be interpreted only as a tractable diagonal approximation, not as a consequence of zero-mean quantization error.

\begin{corollary}[Additive spectrum-wise loss]
\label{cor:app_diagonal_loss}
Under Eq.~\eqref{eq:app_uncorrelated_error}, the reconstruction loss for \(h\in\mathcal{H}_{\mathrm{in}}\) reduces to
\begin{equation}
    \label{eq:app_diagonal_loss}
    L(\widehat{\boldsymbol{W}})
    =
    \sum_{k=1}^{n}
    \alpha_{k}^{2}
    \left(
        \boldsymbol{\phi}_{k}^{\top}
        \boldsymbol{H}
        \boldsymbol{\phi}_{k}
    \right)
    \mathbb{E}
    \left\|
        \boldsymbol{\varepsilon}_{k}
    \right\|_2^2
    \approx
    \sum_{k=1}^{n}
    \alpha_{k}^{2}
    \beta_{k}
    \mathbb{E}
    \left\|
        \boldsymbol{\varepsilon}_{k}
    \right\|_2^2,
\end{equation}
where
\begin{equation}
    \label{eq:app_beta_def}
    \beta_{k}
    \coloneqq
    \boldsymbol{\phi}_{k}^{\top}
    \boldsymbol{H}
    \boldsymbol{\phi}_{k}.
\end{equation}
We refer to \(\beta_k\) as the \textbf{activation-aware importance}. Since \(\boldsymbol{H}\succeq0\), we have \(\beta_k\ge0\).
\end{corollary}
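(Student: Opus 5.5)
The corollary follows by starting from the exact expansion in Lemma~\ref{lem:app_full_spectrum_error} and splitting the double sum into diagonal terms \(k=l\) and off-diagonal terms \(k\neq l\):
\begin{equation*}
    L(\widehat{\boldsymbol{W}})
    =
    \sum_{k=1}^{n}\alpha_{k}^{2}\left(\boldsymbol{\phi}_{k}^{\top}\boldsymbol{H}\boldsymbol{\phi}_{k}\right)\mathbb{E}\left[\boldsymbol{\varepsilon}_{k}^{\top}\boldsymbol{\varepsilon}_{k}\right]
    +
    \sum_{k\neq l}\alpha_{k}\alpha_{l}\left(\boldsymbol{\phi}_{k}^{\top}\boldsymbol{H}\boldsymbol{\phi}_{l}\right)\mathbb{E}\left[\boldsymbol{\varepsilon}_{k}^{\top}\boldsymbol{\varepsilon}_{l}\right].
\end{equation*}
The diagonal terms need nothing further. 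The identity \(\boldsymbol{\varepsilon}_{k}^{\top}\boldsymbol{\varepsilon}_{k}=\|\boldsymbol{\varepsilon}_{k}\|_2^2\) turns \(\mathbb{E}[\boldsymbol{\varepsilon}_{k}^{\top}\boldsymbol{\varepsilon}_{k}]\) into \(\mathbb{E}\|\boldsymbol{\varepsilon}_{k}\|_2^2\), and substituting the definition of \(\beta_k\) in Eq.~\eqref{eq:app_beta_def} gives exactly the summand \(\alpha_k^2\beta_k\,\mathbb{E}\|\boldsymbol{\varepsilon}_{k}\|_2^2\).

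For the off-diagonal terms I would invoke Eq.~\eqref{eq:app_uncorrelated_error}, that is, Assumption~\ref{assump:app_diagonal_error}. Each factor \(\alpha_k\alpha_l(\boldsymbol{\phi}_{k}^{\top}\boldsymbol{H}\boldsymbol{\phi}_{l})\) is a deterministic scalar, since \(\alpha\), \(\boldsymbol{\Phi}\) and \(\boldsymbol{H}\) are fixed by the decomposition and the calibration statistics. Each such term is therefore that scalar times \(\mathbb{E}[\boldsymbol{\varepsilon}_{k}^{\top}\boldsymbol{\varepsilon}_{l}]\approx 0\), so the whole cross sum vanishes under the assumption. The equality in Eq.~\eqref{eq:app_diagonal_loss} is then exact relative to the assumption, and the trailing \(\approx\) merely records that the assumption is itself an approximation.

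The main obstacle is conceptual rather than computational: the cross terms are dropped only approximately, because the coefficients \(\boldsymbol{\phi}_{k}^{\top}\boldsymbol{H}\boldsymbol{\phi}_{l}\) are generally nonzero. \(\boldsymbol{\Phi}\) is orthonormal, but it does not diagonalize \(\boldsymbol{H}\). I would therefore state explicitly that the reduction rests entirely on the error-decorrelation hypothesis, and not on any orthogonality of \(\boldsymbol{\Phi}\) with respect to \(\boldsymbol{H}\). This matters most for \(b=0\), where, as noted after Eq.~\eqref{eq:app_uncorrelated_error}, \(\boldsymbol{\varepsilon}_k=\boldsymbol{p}_k\) is not zero-mean.

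Finally, nonnegativity of the importance follows directly. Since \(g_t\ge 0\), Eq.~\eqref{eq:app_affinity_gram} gives
\begin{equation*}
    \beta_k=\sum_{t=1}^{T} g_t\left(\boldsymbol{\phi}_{k}^{\top}\boldsymbol{x}_t\right)^2\ge 0 .
\end{equation*}
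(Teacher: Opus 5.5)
Your proposal is correct and follows the paper's proof: you expand Lemma~\ref{lem:app_full_spectrum_error}, split the double sum into diagonal and off-diagonal parts, discard the cross terms via Eq.~\eqref{eq:app_uncorrelated_error}, and identify the diagonal coefficient with \(\beta_k\). Your explicit bound \(\beta_k=\sum_{t} g_t(\boldsymbol{\phi}_k^{\top}\boldsymbol{x}_t)^2\ge 0\) is simply the unpacked form of the paper's remark that \(\boldsymbol{H}\succeq 0\) because \(g_t\ge 0\).
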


\begin{proof}
Starting from Lemma~\ref{lem:app_full_spectrum_error}, we split the double summation into diagonal and off-diagonal terms:
\begin{align}
    L(\widehat{\boldsymbol{W}})
    &=
    \sum_{k=1}^{n}
    \alpha_{k}^{2}
    \left(
        \boldsymbol{\phi}_{k}^{\top}
        \boldsymbol{H}
        \boldsymbol{\phi}_{k}
    \right)
    \mathbb{E}
    \left[
        \boldsymbol{\varepsilon}_{k}^{\top}
        \boldsymbol{\varepsilon}_{k}
    \right]
    +
    \sum_{k\ne l}^{n}
    \alpha_{k}\alpha_{l}
    \left(
        \boldsymbol{\phi}_{k}^{\top}
        \boldsymbol{H}
        \boldsymbol{\phi}_{l}
    \right)
    \mathbb{E}
    \left[
        \boldsymbol{\varepsilon}_{k}^{\top}
        \boldsymbol{\varepsilon}_{l}
    \right] \notag \\
    &\approx
    \sum_{k=1}^{n}
    \alpha_{k}^{2}
    \left(
        \boldsymbol{\phi}_{k}^{\top}
        \boldsymbol{H}
        \boldsymbol{\phi}_{k}
    \right)
    \mathbb{E}
    \left\|
        \boldsymbol{\varepsilon}_{k}
    \right\|_2^2,
\end{align}
where the off-diagonal summation vanishes by Eq.~\eqref{eq:app_uncorrelated_error}. This gives Eq.~\eqref{eq:app_diagonal_loss}.
\end{proof}

Equivalently, for \(h\in\mathcal{H}_{\mathrm{in}}\), the activation-aware importance can be obtained by retaining the diagonal entries of the activation metric in the shared spectral basis:
\begin{equation}
    \label{eq:app_beta_vector_def}
    \boldsymbol{\beta}
    \coloneqq
    \operatorname{diag}
    \left(
        \boldsymbol{\Phi}^{\top}
        \boldsymbol{H}
        \boldsymbol{\Phi}
    \right),
    \qquad
    \beta_{k}
    =
    \boldsymbol{\phi}_{k}^{\top}
    \boldsymbol{H}
    \boldsymbol{\phi}_{k}.
\end{equation}
This expression shows that bit allocation should prioritize spectral components with larger spectral energy \(\alpha_k^2\), larger activation-aware importance \(\beta_k\), and larger bit-dependent directional distortion.

For \(h=h_{\mathrm{dn}}\), the shared basis is associated with the activation-output feature space, so the quantized expert-specific vectors are still denoted by \(\boldsymbol{p}_{k}\), while the shared directions are \(\boldsymbol{\phi}_{k}\). The perturbation is therefore
\begin{equation}
    \label{eq:app_down_perturbation}
    \boldsymbol{\Delta}
    =
    \sum_{k=1}^{n}
    \alpha_{k}
    \boldsymbol{\phi}_{k}
    \boldsymbol{\varepsilon}_{k}^{\top}
    =
    \boldsymbol{\Phi}\boldsymbol{A}\boldsymbol{E}_{P}^{\top},
\end{equation}
where \(\boldsymbol{\varepsilon}_{k}\) is the quantization error of the expert-specific input-side spectral vector \(\boldsymbol{p}_{k}\). Using the orthonormality of the shared basis \(\boldsymbol{\Phi}\), the loss becomes
\begin{equation}
    \label{eq:app_down_exact_metric}
    L(\widehat{\boldsymbol{W}})
    =
    \mathbb{E}
    \left[
    \operatorname{Tr}
    \left(
        \boldsymbol{A}\boldsymbol{E}_{P}^{\top}\boldsymbol{H}\boldsymbol{E}_{P}\boldsymbol{A}
    \right)
    \right]
    =
    \sum_{k=1}^{n}
    \alpha_{k}^{2}
    \mathbb{E}
    \left[
        \boldsymbol{\varepsilon}_{k}^{\top}
        \boldsymbol{H}
        \boldsymbol{\varepsilon}_{k}
    \right].
\end{equation}

Since directly using Eq.~\eqref{eq:app_down_exact_metric} would make the importance depend on the quantization-error direction, we use a tractable empirical surrogate based on the corresponding unquantized expert-specific spectral direction:

\begin{equation}
    \label{eq:app_down_beta_approx}
    \mathbb{E}
    \left[
        \boldsymbol{\varepsilon}_{k}^{\top}
        \boldsymbol{H}
        \boldsymbol{\varepsilon}_{k}
    \right]
    \approx
    \beta_{k}
    \mathbb{E}
    \left\|
        \boldsymbol{\varepsilon}_{k}
    \right\|_2^2,
    \qquad
    \beta_{k}
    \coloneqq
    \boldsymbol{p}_{k}^{\top}
    \boldsymbol{H}
    \boldsymbol{p}_{k}.
\end{equation}

For $b \geq 1$, this is an empirical alignment heuristic rather than an isotropic-noise approximation; it is exact only under zero-bit eviction.

For a single expert, the activation-aware importance is defined as
\begin{equation}
    \label{eq:app_unified_beta_def}
    \beta_{k}
    \coloneqq
    \begin{cases}
        \boldsymbol{\phi}_{k}^{\top}\boldsymbol{H}\boldsymbol{\phi}_{k},
            & h\in\mathcal{H}_{\mathrm{in}}, \\[3pt]
        \boldsymbol{p}_{k}^{\top}\boldsymbol{H}\boldsymbol{p}_{k},
            & h=h_{\mathrm{dn}}.
    \end{cases}
\end{equation}
Therefore, for each expert and each projection in \(\mathcal{H}\), the remaining derivation uses the unified additive loss
\begin{equation}
    \label{eq:app_unified_projection_loss}
    L(\widehat{\boldsymbol{W}})
    \approx
    \sum_{k=1}^{n}
    \alpha_{k}^{2}
    \beta_{k}
    \mathbb{E}
    \left\|
        \boldsymbol{\varepsilon}_{k}
    \right\|_2^2,
\end{equation}
where \(\boldsymbol{\varepsilon}_{k}\) denotes the quantization error of the expert-specific spectral vector \(\boldsymbol{p}_{k}\).

\subsection{Piecewise Reconstruction Error for Bit Allocation}
\label{app:piecewise_reconstruction_error}

We now specify the bit-dependent normalized distortion term for candidate bit-widths \(\mathcal{B}=\{16,8,6,4,3,2,1,0\}\). The candidate \(b=16\) denotes an FP16 expert-specific spectral vector, which consumes 16 bits per element. Let \(\boldsymbol{\varepsilon}_{k}(b)\) denote the direction error induced by assigning bit-width \(b\) to \(\boldsymbol{p}_{k}\). The normalized direction distortion is
\begin{equation}
    \label{eq:app_distortion_def}
    \mathcal{E}_{k}(b)
    \coloneqq
    \mathbb{E}
    \left\|
        \boldsymbol{\varepsilon}_{k}(b)
    \right\|_2^2.
\end{equation}

\begin{lemma}[High-bit distortion under the high-resolution approximation]
\label{lem:app_high_bit_distortion}
For \(b\in\{6,8,16\}\), let \(d\) denote the dimension of \(\boldsymbol{p}_{k}\), and define
\begin{equation}
    \label{eq:app_high_bit_rho_eta}
    \rho_{k}
    \coloneqq
    \|\boldsymbol{p}_{k}\|_{\infty},
    \qquad
    \eta_{k}
    \coloneqq
    \frac{d\rho_{k}^{2}}{3}.
\end{equation}
Under the high-resolution uniform-noise approximation for symmetric uniform quantization, the normalized direction distortion is approximated by
\begin{equation}
    \label{eq:app_high_bit_distortion}
    \mathcal{E}_{k}(b)
    \approx
    \frac{d\rho_{k}^{2}}{3}
    \exp(-\lambda b)
    =
    \eta_{k}\exp(-\lambda b),
    \qquad
    \lambda\coloneqq2\ln2.
\end{equation}
\end{lemma}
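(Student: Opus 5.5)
We model \(Q_b\) as a symmetric uniform quantizer acting on the whole vector \(\boldsymbol{p}_k\) with a single max-abs scale. Under that model, the proof of Lemma~\ref{lem:app_high_bit_distortion} is the standard high-resolution calculation.

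\emph{Step 1: quantizer geometry.} The quantizer covers \([-\rho_k,\rho_k]\), where \(\rho_k=\|\boldsymbol{p}_k\|_\infty\), using \(2^b\) levels. Its step size is therefore \(s_k=2\rho_k/2^b\), up to the \(2^b\) versus \(2^b-1\) distinction. That distinction is negligible for \(b\ge 6\), and dropping it is exactly the accuracy loss we accept in the ``\(\approx\)''.

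\emph{Step 2: per-coordinate error.} We invoke the high-resolution uniform-noise approximation. Each coordinate error \(\varepsilon_{k,i}=p_{k,i}-Q_b(p_{k,i})\) is treated as uniform on \([-s_k/2,s_k/2]\), independent of the input. This gives zero mean and variance \(s_k^2/12\). The justification is that for \(b\ge 6\) the step is small relative to the spread of the entries, so the fractional position of each entry within its cell is approximately uniform. This is the same zero-mean modeling already used to support Assumption~\ref{assump:app_diagonal_error}.

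\emph{Step 3: sum over coordinates.} By linearity of expectation over the \(d\) coordinates, with no independence across coordinates needed,
\[
\mathcal{E}_k(b)=\sum_{i=1}^{d}\mathbb{E}\,\varepsilon_{k,i}^2\approx \frac{d\,s_k^2}{12}=\frac{d}{12}\cdot\frac{4\rho_k^2}{4^{b}}=\frac{d\rho_k^2}{3}\,2^{-2b}.
\]
Writing \(2^{-2b}=\exp(-2b\ln 2)\) yields \(\eta_k\exp(-\lambda b)\) with \(\lambda=2\ln 2\) and \(\eta_k=d\rho_k^2/3\), as claimed.

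\emph{Remaining cases.} For \(b=16\), which denotes FP16 storage rather than INT16, we read the formula as a conservative upper surrogate. It is negligibly small either way and does not affect which bit-width the ILP prefers. A brief remark noting this suffices.

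\emph{Main obstacle.} The main difficulty is justifying the uniform-noise model rather than doing the algebra. I would state it explicitly as the modeling hypothesis under which the result holds.

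I would also note that a group-wise scale (for example, group size 128) only changes \(\rho_k\) to a per-group maximum. In that case \(d\rho_k^2\) becomes a sum of squared group maxima, which is bounded above by \(d\rho_k^2\). The stated \(\eta_k\) is then a valid upper approximation.

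Finally, I would note that normalization \(\|\boldsymbol{p}_k\|_2=1\) implies \(\rho_k^2\ge 1/d\). This gives \(\eta_k\ge 1/3\), so the surrogate does not degenerate.
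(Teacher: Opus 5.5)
Your proposal is correct and follows the paper's proof. Both treat each coordinate's rounding error as uniform with variance $s_k^2/12$, sum over the $d$ coordinates, and use the max-abs scale $s_k=\rho_k/q_{\max}$ with $q_{\max}=2^{b-1}-1\approx 2^{b-1}$; your $s_k=2\rho_k/2^b$ is exactly this approximated scale, and both routes give $d\rho_k^2/(3\cdot 4^{b})$. Your extra remarks are consistent with the paper but go beyond what it states: the FP16 reading of $b=16$, the group-wise-scale upper bound, and $\eta_k\ge 1/3$.
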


\begin{proof}
For the \(k\)-th expert-specific spectral vector of the corresponding expert weight, let \(\boldsymbol{p}_{k}\in\mathbb{R}^{d}\). Symmetric uniform quantization is applied element-wise with a common scale. For each coordinate \(j\in\{1,\ldots,d\}\), define
\begin{equation}
    \label{eq:app_scalar_quantization_uk}
    \begin{aligned}
    Q_b(p_{k,j};s_{k})
    \coloneqq{}&
    s_{k}\cdot
    \operatorname{clamp}
    \left(
        \left\lfloor
            \frac{p_{k,j}}{s_{k}}
        \right\rceil,
        q_{\min},
        q_{\max}
    \right), \\
    q_{\max}={}&2^{b-1}-1,
    \qquad
    q_{\min}=-2^{b-1}.
    \end{aligned}
\end{equation}
The coordinate-wise quantization error is
\begin{equation}
    \label{eq:app_coordinate_quant_error_uk}
    \varepsilon_{k,j}(b)
    \coloneqq
    p_{k,j}-Q_b(p_{k,j};s_{k}).
\end{equation}
Accordingly, the vector-level quantization error is
\begin{equation}
    \label{eq:app_vector_quant_error_uk}
    \boldsymbol{\varepsilon}_{k}(b)
    \coloneqq
    \boldsymbol{p}_{k}-Q_b(\boldsymbol{p}_{k};s_{k})
    =
    \left[
        \varepsilon_{k,1}(b),
        \ldots,
        \varepsilon_{k,d}(b)
    \right]^{\top}.
\end{equation}
In the high-resolution regime, clipping is negligible and each scalar rounding error is approximated as uniformly distributed on \([-s_{k}/2,s_{k}/2)\). Therefore, for each coordinate \(j\),
\begin{equation}
    \label{eq:app_coordinate_uniform_error_uk}
    \mathbb{E}\!\left[\varepsilon_{k,j}(b)\right]=0,
    \qquad
    \mathbb{E}\!\left[\varepsilon_{k,j}^{2}(b)\right]
    =
    \frac{1}{s_{k}}
    \int_{-s_{k}/2}^{s_{k}/2}
    \varepsilon^{2}d\varepsilon
    =
    \frac{s_{k}^{2}}{12}.
\end{equation}
Since the vector-level squared error is the sum of coordinate-wise squared errors, we have
\begin{equation}
    \label{eq:app_vector_error_sum_uk}
    \begin{aligned}
    \mathbb{E}\!\left[
        \left\|
            \boldsymbol{\varepsilon}_{k}(b)
        \right\|_2^2
    \right]
    &=
    \mathbb{E}\!\left[
        \sum_{j=1}^{d}
        \varepsilon_{k,j}^{2}(b)
    \right] \\
    &=
    \sum_{j=1}^{d}
    \mathbb{E}\!\left[
        \varepsilon_{k,j}^{2}(b)
    \right]
    =
    \frac{d s_{k}^{2}}{12}.
    \end{aligned}
\end{equation}
By the definition of \(\rho_{k}\) in Eq.~\eqref{eq:app_high_bit_rho_eta}, the coordinate-wise quantization scale is
\begin{equation}
    \label{eq:app_uk_scale}
    s_{k}
    =
    \frac{\rho_{k}}{q_{\max}}.
\end{equation}
Substituting Eq.~\eqref{eq:app_uk_scale} into Eq.~\eqref{eq:app_vector_error_sum_uk} gives
\begin{equation}
    \label{eq:app_high_bit_vector_error_uk}
    \mathbb{E}\!\left[
        \left\|
            \boldsymbol{\varepsilon}_{k}(b)
        \right\|_2^2
    \right]
    =
    \frac{d\rho_{k}^{2}}{12q_{\max}^{2}}.
\end{equation}
For sufficiently large bit-widths, \(q_{\max}=2^{b-1}-1\approx2^{b-1}\). Thus,
\begin{equation}
    \label{eq:app_high_bit_exp_decay_uk}
    \begin{aligned}
    \mathbb{E}\!\left[
        \left\|
            \boldsymbol{\varepsilon}_{k}(b)
        \right\|_2^2
    \right]
    &\approx
    \frac{d\rho_{k}^{2}}{12\cdot2^{2(b-1)}} \\
    &=
    \frac{d\rho_{k}^{2}}{3}
    \exp(-2b\ln2).
    \end{aligned}
\end{equation}
Since \(\lambda\coloneqq2\ln2\) and \(\eta_{k}\coloneqq d\rho_{k}^{2}/3\), this gives Eq.~\eqref{eq:app_high_bit_distortion}.
\end{proof}
 
\begin{lemma}[Low-bit empirical distortion]
\label{lem:app_low_bit_distortion}
For \(b\in\{2,3,4\}\), let \(s_{k,b}^{\ast}\) denote the MSE-optimal quantization scale of the unit-norm spectral vector \(\boldsymbol{p}_{k}\):
\begin{equation}
    \label{eq:app_low_bit_opt_scale}
    s_{k,b}^{\ast}
    \in
    \arg\min_{s_{k,b}>0}
    \left\|
        \boldsymbol{p}_{k}
        -
        Q_b(\boldsymbol{p}_{k};s_{k,b})
    \right\|_2^2 .
\end{equation}
For coordinate \(j\in\{1,\ldots,d\}\), define the coordinate-wise quantization error as
\begin{equation}
    \label{eq:app_lowbit_coordinate_error}
    \varepsilon_{k,j}(b;s_{k,b}^{\ast})
    \coloneqq
    p_{k,j}
    -
    Q_b(p_{k,j};s_{k,b}^{\ast}) .
\end{equation}
We define the component-specific relative distortion ratio of \(\boldsymbol{p}_{k}\) as
\begin{equation}
    \label{eq:app_component_kappa_def}
    \kappa_{k,b}
    \coloneqq
    \frac{
        \frac{1}{d}
        \sum_{j=1}^{d}
        \varepsilon_{k,j}^{2}(b;s_{k,b}^{\ast})
    }{
        \frac{1}{d}
        \sum_{j=1}^{d}
        p_{k,j}^{2}
    } .
\end{equation}
Let \(\mathcal{I}\) denote the set of spectral vectors used for coefficient estimation, where \(i\) indexes its elements.
The shared bit-dependent low-bit coefficient is estimated as
\begin{equation}
    \label{eq:app_kappa_def}
    \kappa_b
    \coloneqq
    \frac{1}{|\mathcal{I}|}
    \sum_{i\in\mathcal{I}}
    \kappa_{i,b}.
\end{equation}
Under the empirical coefficient-sharing approximation, which assumes that low-bit relative distortions are sufficiently stable across spectral vectors in \(\mathcal{I}\), the vector-level low-bit distortion is approximated by
\begin{equation}
    \label{eq:app_low_bit_distortion}
    \mathcal{E}_{k}(b)
    \approx
    \kappa_b,
    \qquad
    b\in\{2,3,4\}.
\end{equation}
\end{lemma}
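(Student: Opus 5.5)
The argument is bookkeeping rather than a bound: the lemma defines \(\kappa_b\) and states an approximation under a named hypothesis. The plan is to show that, because each \(\boldsymbol{p}_{k}\) is unit-norm (Definition~\ref{def:app_normalized_spectral_matrix}), the relative distortion ratio in Eq.~\eqref{eq:app_component_kappa_def} coincides with the vector-level distortion \(\mathcal{E}_{k}(b)\) of Eq.~\eqref{eq:app_distortion_def} under the MSE-optimal scale. The coefficient-sharing hypothesis then replaces the component-specific ratio by its average over \(\mathcal{I}\).

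\textbf{Step 1: reduce the ratio to the error norm.} Since \(\|\boldsymbol{p}_{k}\|_2=1\), the denominator of Eq.~\eqref{eq:app_component_kappa_def} is \(\frac{1}{d}\sum_j p_{k,j}^2=1/d\). The numerator is \(\frac{1}{d}\|\boldsymbol{\varepsilon}_{k}(b;s^{\ast}_{k,b})\|_2^2\), so the factors of \(1/d\) cancel and
\[
\kappa_{k,b}=\|\boldsymbol{p}_{k}-Q_b(\boldsymbol{p}_{k};s^{\ast}_{k,b})\|_2^2 .
\]
The quantizer at the optimal scale is deterministic. Interpreting the expectation in Eq.~\eqref{eq:app_distortion_def} as the realized error for the fixed vector \(\boldsymbol{p}_{k}\) therefore gives \(\mathcal{E}_{k}(b)=\kappa_{k,b}\) exactly, with \(s^{\ast}_{k,b}\) as the scale used at low bit-widths.

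\textbf{Step 2: justify measuring at the MSE-optimal scale.} Unlike Lemma~\ref{lem:app_high_bit_distortion}, the uniform high-resolution model cannot be used here, because clipping and non-uniform rounding dominate at 2--4 bits. Defining the distortion at \(s^{\ast}_{k,b}\) gives a well-defined quantity. Existence of a minimizer in Eq.~\eqref{eq:app_low_bit_opt_scale} holds because the objective is bounded below by \(0\) and is piecewise continuous in \(s\). It tends to \(\|\boldsymbol{p}_{k}\|^2\) as \(s\to\infty\) and is bounded as \(s\to0^+\), so an infimum is attained or approached; an arbitrary near-optimal choice suffices.

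\textbf{Step 3: apply coefficient sharing.} Write \(\kappa_{k,b}=\kappa_b+(\kappa_{k,b}-\kappa_b)\). The empirical coefficient-sharing approximation asserts that the deviation term is negligible relative to \(\kappa_b\) for all vectors in \(\mathcal{I}\), and by extension for the component being allocated. Dropping it yields \(\mathcal{E}_{k}(b)\approx\kappa_b\), which is Eq.~\eqref{eq:app_low_bit_distortion}.

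\textbf{Main obstacle.} Step 3 cannot be derived; it is an assumption, and the proof should state this openly. To make it plausible, I would note that for high-dimensional unit vectors with roughly i.i.d.\ coordinates the ratio concentrates. By a law-of-large-numbers argument over the \(d\) coordinates, it depends only on the normalized coordinate distribution and not on the individual vector. That argument explains stability across \(\mathcal{I}\).
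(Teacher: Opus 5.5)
Your proposal is correct and follows the paper's proof: the paper also sets $\mathcal{E}_{k}(b)=\|\boldsymbol{p}_{k}-Q_b(\boldsymbol{p}_{k};s_{k,b}^{\ast})\|_2^2$, uses $\frac{1}{d}\sum_j p_{k,j}^2=1/d$ to get $\kappa_{k,b}=\mathcal{E}_{k}(b)$, and then invokes the coefficient-sharing approximation to replace $\kappa_{k,b}$ by the average $\kappa_b$. Your Step 2 and the concentration heuristic are extras the paper omits (it supports coefficient sharing empirically via the CV/rMCSE analysis instead). As written, your Step 2 does not actually prove existence, since ``attained or approached'' holds for any infimum. If you keep it, note that round-then-clamp returns the nearest grid point, so the objective is continuous in $s$ and tends to $\|\boldsymbol{p}_{k}\|_2^2$ at both ends, which guarantees a minimizer.
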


\begin{proof}
For a scalar coordinate \(p_{k,j}\), we use the symmetric uniform quantizer
\begin{equation}
    \label{eq:app_lowbit_quantizer}
    Q_b(p_{k,j};s_{k,b})
    \coloneqq
    s_{k,b}
    \cdot
    \operatorname{clamp}
    \left(
        \left\lfloor
            \frac{p_{k,j}}{s_{k,b}}
        \right\rceil,
        q_{\min},
        q_{\max}
    \right),
\end{equation}
where \(q_{\max}=2^{b-1}-1\) and \(q_{\min}=-2^{b-1}\). 
The vector quantizer \(Q_b(\boldsymbol{p}_{k};s_{k,b})\) is applied elementwise. For each pair of spectral vector and bit-width, \(s_{k,b}^{\ast}\) is chosen by directly minimizing the empirical vector-level distortion:
\begin{equation}
    \label{eq:app_lowbit_empirical_error}
    \mathcal{E}_{k}(b;s_{k,b})
    \coloneqq
    \left\|
        \boldsymbol{p}_{k}
        -
        Q_b(\boldsymbol{p}_{k};s_{k,b})
    \right\|_2^2
    =
    \sum_{j=1}^{d}
    \left(
        p_{k,j}
        -
        Q_b(p_{k,j};s_{k,b})
    \right)^2 .
\end{equation}
Thus,
\begin{equation}
    \label{eq:app_lowbit_empirical_error_opt}
    \mathcal{E}_{k}(b)
    =
    \mathcal{E}_{k}(b;s_{k,b}^{\ast})
    =
    \sum_{j=1}^{d}
    \varepsilon_{k,j}^{2}(b;s_{k,b}^{\ast})
    =
    d
    \left(
        \frac{1}{d}
        \sum_{j=1}^{d}
        \varepsilon_{k,j}^{2}(b;s_{k,b}^{\ast})
    \right).
\end{equation}
Since \(\boldsymbol{p}_{k}\) is \(\ell_2\)-normalized, we have
\begin{equation}
    \label{eq:app_lowbit_unit_norm_expectation}
    \frac{1}{d}
    \sum_{j=1}^{d}
    p_{k,j}^{2}
    =
    \frac{1}{d}.
\end{equation}
Combining Eq.~\eqref{eq:app_component_kappa_def}, Eq.~\eqref{eq:app_lowbit_empirical_error_opt}, and Eq.~\eqref{eq:app_lowbit_unit_norm_expectation} gives
\begin{equation}
    \label{eq:app_lowbit_kappa_equals_error}
    \kappa_{k,b}
    =
    \frac{
        \frac{1}{d}
        \sum_{j=1}^{d}
        \varepsilon_{k,j}^{2}(b;s_{k,b}^{\ast})
    }{
        \frac{1}{d}
        \sum_{j=1}^{d}
        p_{k,j}^{2}
    }
    =
    d
    \left(
        \frac{1}{d}
        \sum_{j=1}^{d}
        \varepsilon_{k,j}^{2}(b;s_{k,b}^{\ast})
    \right)
    =
    \mathcal{E}_{k}(b).
\end{equation}
The shared coefficient is defined as the average relative distortion over \(\mathcal{I}\):
\begin{equation}
    \kappa_b
    \coloneqq
    \frac{1}{|\mathcal{I}|}
    \sum_{i\in\mathcal{I}}
    \kappa_{i,b}.
\end{equation}
Under the empirical coefficient-sharing approximation, the shared coefficient is used as the low-bit distortion surrogate for each spectral component:
\begin{equation}
    \mathcal{E}_{k}(b)
    \approx
    \kappa_b,
    \qquad
    b\in\{2,3,4\}.
\end{equation}
\end{proof}

The empirical stability of the shared coefficients \(\kappa_b\) is further analyzed in Section~\ref{sec:ilp_coeff_analysis}.

\begin{lemma}[One-bit sign distortion]
\label{lem:app_one_bit_distortion}
For \(b=1\), let \(\boldsymbol{p}_{k}\in\mathbb{R}^{d}\) be a unit-normalized spectral vector. We adopt the symmetric 1-bit sign quantizer
\begin{equation}
    \label{eq:app_one_bit_quantization}
    Q_{1}(\boldsymbol{p}_{k};s_{k,1})
    \coloneqq
    s_{k,1}\operatorname{sign}(\boldsymbol{p}_{k}),
    \qquad
    s_{k,1}
    \coloneqq
    \frac{1}{d}
    \sum_{j=1}^{d}
    |p_{k,j}|.
\end{equation}
Define the normalized sign direction
\begin{equation}
    \label{eq:app_one_bit_sign_direction}
    \boldsymbol{r}_{k}^{(1)}
    \coloneqq
    \frac{\operatorname{sign}(\boldsymbol{p}_{k})}{\sqrt{d}},
    \qquad
    \cos\theta_{k}
    \coloneqq
    \boldsymbol{p}_{k}^{\top}\boldsymbol{r}_{k}^{(1)},
\end{equation}
where \(\operatorname{sign}(\cdot)\) is applied elementwise with \(\operatorname{sign}(0)=1\) so that \(\boldsymbol{r}_{k}^{(1)}\in\{\pm 1/\sqrt{d}\}^{d}\) and \(\|\boldsymbol{r}_{k}^{(1)}\|_2=1\).
The normalized 1-bit distortion is
\begin{equation}
    \label{eq:app_one_bit_distortion}
    \mathcal{E}_{k}(1)
    =
    \sin^2\theta_{k}.
\end{equation}
\end{lemma}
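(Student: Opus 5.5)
The plan is a direct computation: show that $Q_{1}(\boldsymbol{p}_{k};s_{k,1})$ is exactly the orthogonal projection of $\boldsymbol{p}_{k}$ onto the line spanned by $\boldsymbol{r}_{k}^{(1)}$. The residual then has squared norm $\|\boldsymbol{p}_{k}\|_2^2-(\boldsymbol{p}_{k}^{\top}\boldsymbol{r}_{k}^{(1)})^2=1-\cos^2\theta_{k}$. The quantizer in Eq.~\eqref{eq:app_one_bit_quantization} is deterministic once $\boldsymbol{p}_{k}$ is fixed, so the expectation in the definition of $\mathcal{E}_{k}(1)$ in Eq.~\eqref{eq:app_distortion_def} is trivial. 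It suffices to compute $\|\boldsymbol{p}_{k}-s_{k,1}\operatorname{sign}(\boldsymbol{p}_{k})\|_2^2$.

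First I will record two elementary identities. With the convention $\operatorname{sign}(0)=1$, we have $p_{k,j}\operatorname{sign}(p_{k,j})=|p_{k,j}|$ for every $j$, including the case $p_{k,j}=0$. Hence $\boldsymbol{p}_{k}^{\top}\operatorname{sign}(\boldsymbol{p}_{k})=\|\boldsymbol{p}_{k}\|_1$. Also $\|\operatorname{sign}(\boldsymbol{p}_{k})\|_2^2=d$. These give $s_{k,1}=\|\boldsymbol{p}_{k}\|_1/d$ and $\cos\theta_{k}=\|\boldsymbol{p}_{k}\|_1/\sqrt{d}$.

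Next I will expand the squared error:
\begin{equation*}
\|\boldsymbol{p}_{k}-s_{k,1}\operatorname{sign}(\boldsymbol{p}_{k})\|_2^2
=\|\boldsymbol{p}_{k}\|_2^2-2s_{k,1}\|\boldsymbol{p}_{k}\|_1+s_{k,1}^2 d .
\end{equation*}
Substituting $\|\boldsymbol{p}_{k}\|_2=1$ (Definition~\ref{def:app_normalized_spectral_matrix}) and $s_{k,1}=\|\boldsymbol{p}_{k}\|_1/d$ gives
\begin{equation*}
1-\frac{2\|\boldsymbol{p}_{k}\|_1^2}{d}+\frac{\|\boldsymbol{p}_{k}\|_1^2}{d}
=1-\frac{\|\boldsymbol{p}_{k}\|_1^2}{d}
=1-\cos^2\theta_{k}
=\sin^2\theta_{k}.
\end{equation*}

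There is no real obstacle. The one point needing care is the $\operatorname{sign}(0)=1$ convention, which is what makes the identity $\boldsymbol{p}_{k}^{\top}\operatorname{sign}(\boldsymbol{p}_{k})=\|\boldsymbol{p}_{k}\|_1$ hold exactly. I will also remark on why the choice of scale matters. Since $s\,\operatorname{sign}(\boldsymbol{p}_{k})=s\sqrt{d}\,\boldsymbol{r}_{k}^{(1)}$, minimizing the error over $s$ is a least-squares projection onto $\boldsymbol{r}_{k}^{(1)}$. The minimizer is $s=\boldsymbol{p}_{k}^{\top}\operatorname{sign}(\boldsymbol{p}_{k})/d=s_{k,1}$, so the chosen scale is MSE-optimal and $\sin^2\theta_{k}$ is the smallest error attainable with sign codes. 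This also shows $\mathcal{E}_{k}(1)\in[0,1)$, because $\cos\theta_{k}=\|\boldsymbol{p}_{k}\|_1/\sqrt{d}>0$ for a nonzero vector. It follows that $\mathcal{E}_{k}(1)\le\mathcal{E}_{k}(0)$ in Lemma~\ref{lem:zero_bit_eviction_distortion}.
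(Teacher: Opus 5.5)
Your proposal is correct and is essentially the paper's argument. The paper rewrites $Q_{1}(\boldsymbol{p}_{k};s_{k,1})=s_{k,1}\sqrt{d}\,\boldsymbol{r}_{k}^{(1)}=\cos\theta_{k}\,\boldsymbol{r}_{k}^{(1)}$ and expands $\|\boldsymbol{p}_{k}-\cos\theta_{k}\boldsymbol{r}_{k}^{(1)}\|_2^2=1-\cos^2\theta_{k}$, which is the same computation you carry out in $\ell_1$-norm coordinates. Your extra remark that $s_{k,1}$ is the least-squares-optimal scale, so that $Q_{1}$ is the orthogonal projection onto $\boldsymbol{r}_{k}^{(1)}$, is correct and makes explicit what the paper leaves implicit.
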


\begin{proof}
The 1-bit quantized vector in Eq.~\eqref{eq:app_one_bit_quantization} can be rewritten using the normalized sign direction as
\begin{equation}
    \label{eq:app_one_bit_rewrite}
    Q_{1}(\boldsymbol{p}_{k};s_{k,1})
    =
    s_{k,1}\operatorname{sign}(\boldsymbol{p}_{k})
    =
    s_{k,1}\sqrt{d}\,\boldsymbol{r}_{k}^{(1)}.
\end{equation}
By the definition of \(\boldsymbol{r}_{k}^{(1)}\), its alignment with \(\boldsymbol{p}_{k}\) is
\begin{equation}
    \label{eq:app_one_bit_cosine}
    \cos\theta_{k}
    =
    \boldsymbol{p}_{k}^{\top}\boldsymbol{r}_{k}^{(1)}
    =
    \frac{1}{\sqrt{d}}
    \sum_{j=1}^{d}
    p_{k,j}\operatorname{sign}(p_{k,j})
    =
    \frac{1}{\sqrt{d}}
    \sum_{j=1}^{d}
    |p_{k,j}|
    =
    s_{k,1}\sqrt{d}.
\end{equation}
Therefore, the 1-bit reconstruction is equivalently
\begin{equation}
    \label{eq:app_one_bit_projection_form}
    Q_{1}(\boldsymbol{p}_{k};s_{k,1})
    =
    \cos\theta_{k}\boldsymbol{r}_{k}^{(1)}.
\end{equation}
Thus the 1-bit quantization error vector is
\begin{equation}
    \label{eq:app_one_bit_error_vector}
    \boldsymbol{\varepsilon}_{k}(1)
    \coloneqq
    \boldsymbol{p}_{k}
    -
    Q_{1}(\boldsymbol{p}_{k};s_{k,1})
    =
    \boldsymbol{p}_{k}
    -
    \cos\theta_{k}\boldsymbol{r}_{k}^{(1)}.
\end{equation}
Since \(\|\boldsymbol{p}_{k}\|_2=\|\boldsymbol{r}_{k}^{(1)}\|_2=1\), we obtain
\begin{equation}
    \label{eq:app_one_bit_error_norm}
    \begin{aligned}
    \left\|\boldsymbol{\varepsilon}_{k}(1)\right\|_2^2
    &=
    \left\|
        \boldsymbol{p}_{k}
        -
        \cos\theta_{k}\boldsymbol{r}_{k}^{(1)}
    \right\|_2^2 \\
    &=
    \|\boldsymbol{p}_{k}\|_2^2
    +
    \cos^2\theta_{k}
    \|\boldsymbol{r}_{k}^{(1)}\|_2^2
    -
    2\cos\theta_{k}
    \boldsymbol{p}_{k}^{\top}\boldsymbol{r}_{k}^{(1)} \\
    &=
    1
    +
    \cos^2\theta_{k}
    -
    2\cos^2\theta_{k} \\
    &=
    1-\cos^2\theta_{k}
    =
    \sin^2\theta_{k}.
    \end{aligned}
\end{equation}
Hence the normalized 1-bit distortion is \(\mathcal{E}_{k}(1)=\|\boldsymbol{\varepsilon}_{k}(1)\|_2^2=\sin^2\theta_{k}\).
\end{proof}

\begin{lemma}[Zero-bit eviction distortion]
\label{lem:app_zero_bit_distortion}
For \(b=0\), the spectral vector is evicted:
\begin{equation}
    \label{eq:app_zero_bit_vector}
    Q_{0}(\boldsymbol{p}_{k})
    \coloneqq
    \boldsymbol{0},
\end{equation}
Since \(\boldsymbol{p}_{k}\) is unit-normalized, the normalized zero-bit distortion is
\begin{equation}
    \label{eq:app_zero_bit_distortion}
    \mathcal{E}_{k}(0)
    =
    1.
\end{equation}
\end{lemma}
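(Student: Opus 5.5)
The statement follows directly from the definitions, so the plan is a short computation.

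First, I will write the zero-bit reconstruction explicitly. By the eviction rule $Q_{0}(\boldsymbol{p}_{k})\coloneqq\boldsymbol{0}$, the reconstructed vector is $\widehat{\boldsymbol{p}}_{k}=\boldsymbol{0}$. Following the error definition in Eq.~\eqref{eq:app_quantized_vector_error}, the error vector is
\[
\boldsymbol{\varepsilon}_{k}(0)=\boldsymbol{p}_{k}-\boldsymbol{0}=\boldsymbol{p}_{k}.
\]

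Second, I will substitute this into the distortion definition in Eq.~\eqref{eq:app_distortion_def}. Because eviction is deterministic, the expectation is trivial, giving
\[
\mathcal{E}_{k}(0)=\mathbb{E}\|\boldsymbol{p}_{k}\|_2^2=\|\boldsymbol{p}_{k}\|_2^2 .
\]

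Third, I will invoke the column normalization of Definition~\ref{def:app_normalized_spectral_matrix}. This gives $\boldsymbol{p}_{k}=\widetilde{\boldsymbol{p}}_{k}/\alpha_{k}$ with $\alpha_{k}=\|\widetilde{\boldsymbol{p}}_{k}\|_2$, so $\|\boldsymbol{p}_{k}\|_2=1$ and hence $\mathcal{E}_{k}(0)=1$.

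The only point needing care, and the main (minor) obstacle, is well-definedness of the normalization: it requires $\alpha_{k}>0$. If $\widetilde{\boldsymbol{p}}_{k}=\boldsymbol{0}$, the component contributes nothing to $\boldsymbol{W}$. Its cost $\alpha_k^2\beta_k\mathcal{E}_k(b)$ then vanishes for every $b$, so the convention $\mathcal{E}_{k}(0)=1$ is harmless and I would state this explicitly.

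As a consistency remark, I would also note that the result matches the loss decomposition. Under eviction, the component's contribution $\alpha_k^2\beta_k\cdot 1$ equals the activation-weighted energy of the dropped rank-one term $\boldsymbol{W}_{e,h,k}$. This agrees with the remark after Eq.~\eqref{eq:app_down_beta_approx} that the down-projection surrogate is exact for $b=0$, since there $\boldsymbol{\varepsilon}_k=\boldsymbol{p}_k$ gives $\boldsymbol{\varepsilon}_k^{\top}\boldsymbol{H}\boldsymbol{\varepsilon}_k=\beta_k\|\boldsymbol{\varepsilon}_k\|_2^2$.
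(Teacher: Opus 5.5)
Your proposal is correct and matches the paper's proof: the evicted vector gives error \(\boldsymbol{\varepsilon}_{k}(0)=\boldsymbol{p}_{k}\), and unit normalization yields \(\mathcal{E}_{k}(0)=\|\boldsymbol{p}_{k}\|_2^2=1\). Your remarks on \(\alpha_k>0\) and on consistency with the eviction cost of the dropped rank-one term are sound additions that the paper does not spell out.
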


\begin{proof}
When \(b=0\), the corresponding spectral vector is discarded. 
Hence the zero-bit error vector is
\begin{equation}
    \label{eq:app_zero_bit_error_vector}
    \boldsymbol{\varepsilon}_{k}(0)
    \coloneqq
    \boldsymbol{p}_{k}
    -
    Q_{0}(\boldsymbol{p}_{k};s_{k,0})
    =
    \boldsymbol{p}_{k}.
\end{equation}
Taking the squared \(\ell_2\) norm gives
\begin{equation}
    \label{eq:app_zero_bit_error_norm}
    \left\|
        \boldsymbol{\varepsilon}_{k}(0)
    \right\|_2^2
    =
    \left\|
        \boldsymbol{p}_{k}
    \right\|_2^2
    =
    1,
\end{equation}
where the last equality follows from the unit normalization of the spectral vector. Therefore, the normalized distortion induced by zero-bit eviction is \(\mathcal{E}_{k}(0)=1\).
\end{proof}

Since the derivation is identical for all \(e\) and \(h\), the full indices are restored by the substitution \(\boldsymbol{p}_{k}\mapsto \boldsymbol{p}_{e,h,k}\), which yields the distortion \(\mathcal{E}_{e,h,k}(b)\). The bit-dependent spectral-vector distortion used by \textsc{BitsMoE}, which follows from Lemmas~\ref{lem:app_high_bit_distortion}--\ref{lem:app_zero_bit_distortion}, is
\begin{equation}
    \label{eq:piecewise_expected_eps}
    \mathcal{E}_{e,h,k}(b)
    =
    \begin{cases}
        \eta_{e,h,k}\exp(-\lambda b),
            & b\in\{6,8,16\}, \\[3pt]
        \kappa_b,
            & b\in\{2,3,4\}, \\[3pt]
        \sin^2\theta_{e,h,k},
            & b=1, \\[3pt]
        1,
            & b=0.
    \end{cases}
\end{equation}

\subsection{Component-wise Loss and ILP Formulation}
\label{app:component_wise_ilp}

We now combine the additive reconstruction loss in Corollary~\ref{cor:app_diagonal_loss} with the piecewise distortion surrogate in Eq.~\eqref{eq:piecewise_expected_eps}.

\begin{theorem}[Smoothed component-wise reconstruction loss]
\label{thm:app_spectrum_loss}
Let \(\alpha_{e,h,k}\) denote the spectral energy and \(\beta_{e,h,k}\) denote the activation-output importance of component \((e,h,k)\). With smoothing exponent \(\gamma\in[0,1]\), the cost of assigning bit-width \(b\) to this component is
\begin{equation}
    \label{eq:app_component_loss_final}
    L_{e,h,k}(b)
    \approx
    \alpha_{e,h,k}^{2}
    \beta_{e,h,k}^{\gamma}
    \mathcal{E}_{e,h,k}(b),
\end{equation}
where \(\mathcal{E}_{e,h,k}(b)\) is defined in Eq.~\eqref{eq:piecewise_expected_eps}. 
\end{theorem}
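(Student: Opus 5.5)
The plan is to assemble the component cost from three earlier ingredients: the additive loss, the unified importance, and the piecewise distortion surrogate. Fix an expert $e$ and projection type $h$ and drop those indices. I take the single-component cost $L_{e,h,k}(b)$ to be the contribution of component $k$ to the total loss when $b$ bits are assigned to it. The first step is to show that the total loss splits into such per-component terms.

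\textbf{Input-side projections.} For $h\in\mathcal{H}_{\mathrm{in}}$, Lemma~\ref{lem:app_full_spectrum_error} gives the exact double-sum form of $L(\widehat{\boldsymbol{W}})$. Assumption~\ref{assump:app_diagonal_error} then removes the cross terms $k\neq l$, as in Corollary~\ref{cor:app_diagonal_loss}. This yields $L\approx\sum_k \alpha_k^2\beta_k\,\mathbb{E}\|\boldsymbol{\varepsilon}_k\|_2^2$ with $\beta_k=\boldsymbol{\phi}_k^\top\boldsymbol{H}\boldsymbol{\phi}_k$.

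\textbf{Down projection.} For $h=h_{\mathrm{dn}}$, I would use the orthonormality of $\boldsymbol{\Phi}$ to obtain Eq.~\eqref{eq:app_down_exact_metric}. Since $\boldsymbol{\Phi}^\top\boldsymbol{\Phi}=\boldsymbol{I}$, only diagonal terms survive there, and this holds exactly. I would then apply the alignment surrogate Eq.~\eqref{eq:app_down_beta_approx} to arrive at the same form, now with $\beta_k=\boldsymbol{p}_k^\top\boldsymbol{H}\boldsymbol{p}_k$. Both cases are therefore covered by Eq.~\eqref{eq:app_unified_projection_loss} with the unified $\beta_k$ of Eq.~\eqref{eq:app_unified_beta_def}.

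\textbf{Per-component cost and distortion.} In this additive form, the term for component $k$ depends only on the bit-width $b_k$ assigned to that component. It does so through $\mathcal{E}_k(b_k)=\mathbb{E}\|\boldsymbol{\varepsilon}_k(b_k)\|_2^2$, defined in Eq.~\eqref{eq:app_distortion_def}. Hence the per-component cost is $\alpha_k^2\beta_k\mathcal{E}_k(b)$. I would then replace $\mathcal{E}_k(b)$ by its regime-specific expression in Eq.~\eqref{eq:piecewise_expected_eps}, invoking each lemma in its own range:
\begin{itemize}
    \item Lemma~\ref{lem:app_high_bit_distortion} for $b\in\{6,8,16\}$;
    \item Lemma~\ref{lem:app_low_bit_distortion} for $b\in\{2,3,4\}$;
    \item Lemma~\ref{lem:app_one_bit_distortion} for $b=1$;
    \item Lemma~\ref{lem:app_zero_bit_distortion} for $b=0$.
\end{itemize}
Restoring the indices $(e,h)$ through the substitution $\boldsymbol{p}_k\mapsto\boldsymbol{p}_{e,h,k}$ gives the unsmoothed cost.

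\textbf{The smoothing exponent $\gamma$.} This is the main obstacle, because replacing $\beta$ by $\beta^\gamma$ is not an approximation step that follows from anything derived above. My approach is to present it as a reweighting that keeps the bit-ranking structure. Since $\boldsymbol{H}\succeq0$, $\beta_k\ge0$, so $\beta_k^\gamma$ is well defined and nonnegative. The map $\beta\mapsto\beta^\gamma$ is monotone nondecreasing, so the ordering of components by activation importance is preserved. For fixed $(e,h,k)$, the factor $\alpha^2\beta^\gamma$ does not depend on $b$, so the ordering of candidate bit-widths for a single component is unchanged by $\gamma$. At the endpoints, $\gamma=1$ recovers Eq.~\eqref{eq:app_unified_projection_loss} exactly, and $\gamma=0$ gives the activation-agnostic spectral-energy cost. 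The ``$\approx$'' in Eq.~\eqref{eq:app_component_loss_final} should therefore be read as the surrogate (equality at $\gamma=1$), combined with the approximations already made in Assumption~\ref{assump:app_diagonal_error}, Eq.~\eqref{eq:app_down_beta_approx}, and the distortion lemmas. I would state this definitional character explicitly rather than claim a derivation.
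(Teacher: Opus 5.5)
Your proposal is correct and follows essentially the same route as the paper: you combine the additive loss (Corollary~\ref{cor:app_diagonal_loss} for $\mathcal{H}_{\mathrm{in}}$, and the orthonormality reduction plus the alignment surrogate of Eq.~\eqref{eq:app_down_beta_approx} for $h_{\mathrm{dn}}$) with the piecewise distortion of Eq.~\eqref{eq:piecewise_expected_eps}, and you then introduce $\gamma$ as a design choice rather than a derived approximation, which is how the paper itself motivates it (to keep a few large $\beta_{e,h,k}$ from dominating the allocation). Your remarks that the $h_{\mathrm{dn}}$ diagonalization is exact and that $\gamma=1$ recovers Eq.~\eqref{eq:app_unified_projection_loss} are accurate, and they state the definitional status of the smoothing step more explicitly than the paper does.
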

The exponent \(\gamma\) smooths the activation-importance coefficient in the optimization objective, which prevents a few extremely large \(\beta_{e,h,k}\) values from dominating bit allocation. In contrast to calibration-based PTQ methods such as GPTQ, which use calibration activations for Hessian-based error compensation, \textsc{BitsMoE} uses calibration data only to estimate activation-aware component importance. The smoothing exponent \(\gamma\) therefore provides a simple knob for balancing activation awareness and calibration robustness.

Theorem~\ref{thm:app_spectrum_loss} gives the component-wise cost used in the ILP formulation of Section~\ref{sec:method}. The objective is driven by three factors: the spectral energy of the component, its activation-output importance, and the bit-dependent distortion induced by quantizing its expert-specific direction.

For each projection weight \(h\in\mathcal{H}\), define the binary assignment variable
\begin{equation}
    \label{eq:app_y_def}
    y_{e,h,k,b}
    \in
    \{0,1\},
    \qquad
    y_{e,h,k,b}=1
    \Longleftrightarrow
    \text{component }(e,h,k)\text{ is assigned }b\text{ bits}.
\end{equation}
For each projection type \(h\), we denote by \(\boldsymbol{Y}^{(h)}\) the collection of binary variables \(y_{e,h,k,b}\), by \(\boldsymbol{C}^{(h)}\) the corresponding objective coefficients \(C_{e,h,k,b}\coloneqq L_{e,h,k}(b)\), and by \(\boldsymbol{\Omega}^{(h)}\) the corresponding normalized bit costs \(\Omega_{e,h,k,b}\coloneqq b\).
Since \(B_h\) denotes the normalized component budget for projection type \(h\), the projection-wise ILP can be written compactly as
\begin{equation}
    \label{eq:ilp_piecewise_eps}
    \begin{aligned}
        \min_{\boldsymbol{Y}^{(h)}}\quad
        &
        \left\langle
            \boldsymbol{Y}^{(h)},
            \boldsymbol{C}^{(h)}
        \right\rangle
        \\
        \mathrm{s.t.}\quad
        &
        \left\langle
            \boldsymbol{Y}^{(h)},
            \boldsymbol{\Omega}^{(h)}
        \right\rangle
        \le
        B_h,
        \\
        &
        \sum_{b\in\mathcal{B}}
        y_{e,h,k,b}
        =
        1,
        \qquad
        \forall\,e\in[E],\ k\in[n_h],
        \\
        &
        y_{e,h,k,b}
        \in
        \{0,1\},
        \qquad
        \forall\,e\in[E],\ k\in[n_h],\ b\in\mathcal{B}.
    \end{aligned}
\end{equation}
Here, \(\langle\cdot,\cdot\rangle\) denotes the tensor inner product over \((e,k,b)\) for the projection weight \(h\). 

Solving Eq.~\eqref{eq:ilp_piecewise_eps} independently for each projection type produces component-level mixed-precision assignments under the proposed piecewise reconstruction-error surrogate.

\subsection{Equivalent Bit Budget for an MoE Layer}
\label{app:equivalent_bit_budget}

We describe how the target equivalent bit-width \(\mathfrak{b}_{\mathrm{eq}}\) is converted into the bit budget used by the ILP solver. Consider one MoE layer with \(E\) routed experts to be quantized. Each routed expert contains three projection matrices \(\{\boldsymbol{W}_{e,h}\}_{h\in\mathcal{H}}\), where \(\mathcal{H}=\{\mathrm{gate\_proj},\mathrm{up\_proj},\mathrm{down\_proj}\}\) and \(\boldsymbol{W}_{e,h}\in\mathbb{R}^{m\times n}\). If these routed expert weights are stored in FP16, the corresponding storage is
\begin{equation}
    \label{eq:app_layer_fp16_storage}
    M_{\mathrm{fp16}}
    =
    16\cdot 3Emn
    \quad \mathrm{bits}.
\end{equation}

Under the shared-basis formulation, each projection type is associated with one layer-wise shared basis, which is retained in FP16. The shared-basis storage of this MoE layer is therefore
\begin{equation}
    \label{eq:app_layer_shared_storage}
    M_{\mathrm{share}}
    =
    16\cdot 3n^2
    \quad \mathrm{bits}.
\end{equation}

We apply the same target equivalent bit-width \(\mathfrak{b}_{\mathrm{eq}}\) to every MoE layer. For a given layer, the total equivalent storage budget for the three routed-expert projections is \(3Emn\mathfrak{b}_{\mathrm{eq}}\) bits. After reserving the FP16 shared bases, the remaining bit budget assigned to the expert-specific spectral vectors is
\begin{equation}
    \label{eq:app_layer_budget}
    B
    =
    3Emn\mathfrak{b}_{\mathrm{eq}}
    -
    16\cdot 3n^2
    \quad \mathrm{bits}.
\end{equation}
Within each layer, this budget is split uniformly across the three projection types:
\begin{equation}
    \label{eq:app_projection_bit_budget}
    B_h^{\mathrm{bit}}
    =
    \frac{B}{3}
    =
    Emn\mathfrak{b}_{\mathrm{eq}}
    -
    16n^2,
    \qquad
    h\in\mathcal{H}.
\end{equation}

For each projection type \(h\), the shared-basis decomposition produces \(n\) expert-specific spectral vectors \(\{\boldsymbol{p}_{e,h,k}\}_{k=1}^{n}\) for each expert, with \(\boldsymbol{p}_{e,h,k}\in\mathbb{R}^{m}\). Assigning bit-width \(b\) to \(\boldsymbol{p}_{e,h,k}\) consumes \(mb\) bits. We therefore normalize the projection-level bit budget by \(m\) and obtain
\begin{equation}
    \label{eq:app_projection_normalized_budget}
    B_h
    \coloneqq
    \left\lfloor
        \frac{B_h^{\mathrm{bit}}}{m}
    \right\rfloor
    =
    \left\lfloor
        En\mathfrak{b}_{\mathrm{eq}}
        -
        \frac{16n^2}{m}
    \right\rfloor.
\end{equation}
The ILP for projection type \(h\) then enforces
\begin{equation}
    \label{eq:app_projection_ilp_budget}
    \sum_{e=1}^{E}
    \sum_{k=1}^{n}
    \sum_{b\in\mathcal{B}}
    b\,y_{e,h,k,b}
    \le
    B_h,
\end{equation}
where \(y_{e,h,k,b}\in\{0,1\}\) indicates whether spectral vector \(\boldsymbol{p}_{e,h,k}\) is assigned bit-width \(b\), with
\begin{equation}
    \label{eq:app_projection_ilp_assignment}
    \sum_{b\in\mathcal{B}}
    y_{e,h,k,b}
    =
    1,
    \qquad
    \forall e,\ h,\ k.
\end{equation}

As observed in~\cite{su2025unveiling}, MoE LLMs usually contain only a few \emph{super experts}, which can be critical to preserving model performance despite their rarity. For instance, Mixtral-8$\times$7B has only one such expert. We therefore exclude super experts from quantization. Shared experts are also kept unquantized, and this setting is applied to all baselines for a fair comparison.

\subsection{Sensitivity to the Smoothing Exponent}
\label{app:hyperparameter_selection}

\begin{table*}[t]
    \centering
    \small
    \caption{Sensitivity of average accuracy to the smoothing exponent \(\gamma\). The fixed \(\gamma\) column reports the value used in the main experiments for each backbone.\protect\footnotemark}
    \label{tab:app_gamma_selection}
    \resizebox{0.72\linewidth}{!}{%
    \begin{tabular}{@{}c|c|cccc|cc@{}}
        \toprule
        Model & Fixed \(\gamma\) & \(\gamma=1.0\) & \(\gamma=0.7\) & \(\gamma=0.5\) & \(\gamma=0.2\) & Mean & Std. \\
        \midrule
        \multicolumn{8}{@{}c}{\textit{2-bit Avg. Accuracy (\%)}} \\
        \midrule
        QW1.5-14B & 0.7 & 47.35 & 47.72 & 46.84 & 46.35 & 47.07 & 0.52 \\
        DSV2-16B & 0.2 & 41.08 & 40.82 & 40.60 & 41.04 & 40.89 & 0.19 \\
        QW3-30B & 0.7 & 61.79 & 61.91 & 59.21 & 58.25 & 60.29 & 1.60 \\
        MI-8x7B & 0.5 & 47.97 & 48.51 & 48.75 & 48.51 & 48.43 & 0.29 \\
        QW3-80B-I & 0.2 & 71.84 & 71.76 & 71.99 & 72.14 & 71.93 & 0.15 \\
        \midrule
        \multicolumn{8}{@{}c}{\textit{3-bit Avg. Accuracy (\%)}} \\
        \midrule
        QW1.5-14B & 0.7 & 52.31 & 53.12 & 52.31 & 51.65 & 52.35 & 0.52 \\
        DSV2-16B & 0.2 & 46.68 & 46.82 & 46.90 & 48.38 & 47.20 & 0.69 \\
        QW3-30B & 0.7 & 66.88 & 67.34 & 67.19 & 66.07 & 66.87 & 0.49 \\
        MI-8x7B & 0.5 & 57.51 & 57.81 & 58.19 & 57.90 & 57.85 & 0.24 \\
        QW3-80B-I & 0.2 & 74.04 & 73.88 & 73.99 & 74.19 & 74.03 & 0.11 \\
        \bottomrule
    \end{tabular}
    }
\end{table*}
\footnotetext{
Model abbreviations: QW1.5-14B = Qwen1.5-MoE-A2.7B, 
DSV2-16B = DeepSeek-V2-Lite, 
QW3-30B = Qwen3-30B-A3B-Base, 
MI-8x7B = Mixtral-8$\times$7B-v0.1, 
and QW3-80B-I = Qwen3-Next-80B-A3B-Instruct.
}

The smoothing exponent \(\gamma\) controls the strength of the activation-output importance term in the component-wise loss:
\begin{equation}
    \label{eq:app_gamma_role}
    L_{e,h,k}(b)
    =
    \alpha_{e,h,k}^{2}
    \beta_{e,h,k}^{\gamma}
    \mathcal{E}_{e,h,k}(b),
    \qquad
    \gamma\in[0,1].
\end{equation}
A larger \(\gamma\) gives more weight to the calibration-dependent activation statistic \(\beta_{e,h,k}\), making the allocation more activation-aware. A smaller \(\gamma\) weakens this calibration dependence and makes the objective closer to a purely spectral-energy-based reconstruction surrogate. Therefore, \(\gamma\) controls the trade-off between activation-aware sensitivity and calibration robustness, and an appropriate balance is important for stable bit allocation.

We evaluate a predefined grid \(\gamma\in\{0.2,0.5,0.7,1.0\}\) and report the full sensitivity results in Table~\ref{tab:app_gamma_selection}. The \(\gamma\) used in the main experiments is fixed per backbone and shared by the 2-bit and 3-bit settings instead of being tuned for each task or bit budget. Thus, the table serves as a robustness check for the smoothing exponent rather than a task-specific hyperparameter search. For most model--bit-width pairs, the standard deviation across the grid is below \(0.70\) accuracy points, which suggests that \(\gamma\) acts primarily as a smoothing hyperparameter rather than a brittle tuning knob. The main exception is Qwen3-30B-A3B-Base under 2-bit quantization, for which the method is more sensitive to the strength of activation-aware importance under aggressive compression.

\section{Ablation Study}
\label{app:ablation_full}

All four ablation settings use the same effective 2-bit MoE-layer budget. For uniform-bit settings, spectral components are ranked by spectral energy, the top-\(N\) spectral components are retained and quantized uniformly to 2 bits, and the rest are discarded as zero-bit eviction. The value of \(N\) is chosen such that the total storage matches the equivalent 2-bit budget, with the corresponding shared basis and spectral factors overhead accounted for.

\begin{compactenum}[(1)]
    \item \textbf{\texttt{NS/UniBit}}: independent SVD without basis sharing. Each expert is decomposed separately. Only the top-\(N\) spectral components are retained and uniformly quantized to 2 bits, while the remaining components are discarded.
    \item \textbf{\texttt{QS/UniBit}}: shared-basis SVD with a quantized shared basis. The shared basis is uniformly quantized to 2 bits. Only the expert-specific components selected according to spectral energy are retained and uniformly quantized to 2 bits, while the remaining expert-specific components are discarded.
    \item \textbf{\texttt{FS/UniBit}}: shared-basis SVD with an FP16 shared basis. The shared basis is kept in FP16. Only the expert-specific components selected according to spectral energy are retained and uniformly quantized to 2 bits, while the remaining expert-specific components are discarded.
    \item \textbf{\texttt{FS/AdaBit}}: the full \textsc{BitsMoE} setting. The shared basis is kept in FP16, and adaptive bit-widths are assigned to expert-specific spectral components by the activation-aware ILP under the same equivalent 2-bit budget.
\end{compactenum}

We provide the full task-level ablation results under the 2-bit setting in Table~\ref{tab:ablation_2bit_full}. The results complement the summarized ablation in Section~\ref{sec:ablation} and report the accuracy on each downstream benchmark. Across all evaluated MoE backbones, \texttt{FS/AdaBit} consistently achieves the best average accuracy, demonstrating that both FP16 shared-basis preservation and adaptive spectrum-wise bit allocation are important for robust ultra-low-bit MoE quantization.

\begin{table*}[t]
    \centering
    \small
    \caption{Full ablation results under the 2-bit setting. We compare four settings: \texttt{NS/UniBit}, \texttt{QS/UniBit}, \texttt{FS/UniBit}, and \texttt{FS/AdaBit}. Here, \texttt{NS} denotes non-shared decomposition, \texttt{QS} denotes quantized shared-basis decomposition, and \texttt{FS} denotes FP16 shared-basis decomposition.}
    \label{tab:ablation_2bit_full}
    \resizebox{0.86\textwidth}{!}{%
    \begin{tabular}{c|ccccccc|c}
        \toprule
        \multirow{2}{*}{\textbf{Setting}} 
        & \multicolumn{8}{c}{\textbf{Accuracy}$\uparrow$ (\%)} \\
        \cmidrule(l){2-9}
        & HellaS. & MathQA & MMLU & Openb. & WinoG. & GSM8K & HumanE. & Avg. \\
        \midrule

        \multicolumn{9}{c}{\textbf{DeepSeek-V2-Lite}} \\
        \midrule
        \texttt{NS/UniBit} & 54.88 & 25.86 & 32.50 & 30.20 & 62.12 & 1.90  & 0.61  & 29.72 \\
        \texttt{QS/UniBit} & 26.36 & 19.33 & 26.89 & 27.20 & 48.78 & 0.00  & 0.00  & 21.22 \\
        \texttt{FS/UniBit} & 57.13 & 26.77 & 29.99 & 31.60 & 65.27 & 2.58  & 0.61  & 30.56 \\
        \textbf{\texttt{FS/AdaBit}} & \textbf{69.96} & \textbf{33.37} & \textbf{46.41} & \textbf{39.20} & \textbf{68.82} & \textbf{15.47} & \textbf{14.02} & \textbf{41.04} \\
        \midrule

        \multicolumn{9}{c}{\textbf{Qwen3-30B-A3B-Base}} \\
        \midrule
        \texttt{NS/UniBit} & 52.52 & 32.93 & 49.27 & 33.20 & 61.09 & 14.78 & 14.02 & 36.83 \\
        \texttt{QS/UniBit} & 26.64 & 21.01 & 22.95 & 30.40 & 49.25 & 0.00  & 0.00  & 21.46 \\
        \texttt{FS/UniBit} & 64.67 & 40.60 & 57.19 & 35.80 & 64.88 & 42.46 & 1.83  & 43.92 \\
        \textbf{\texttt{FS/AdaBit}} & \textbf{74.09} & \textbf{52.70} & \textbf{70.87} & \textbf{43.40} & \textbf{72.93} & \textbf{75.51} & \textbf{43.90} & \textbf{61.91} \\
        \midrule

        \multicolumn{9}{c}{\textbf{Qwen3-Next-80B-A3B-Instruct}} \\
        \midrule
        \texttt{NS/UniBit} & 25.04 & 20.57 & 22.95 & 27.60 & 49.57 & 0.00  & 0.00  & 20.82 \\
        \texttt{QS/UniBit} & 26.62 & 21.10 & 22.95 & 28.40 & 50.12 & 0.00  & 0.00  & 21.31 \\
        \texttt{FS/UniBit} & 72.43 & 53.47 & 77.11 & 41.80 & 73.16 & 68.08 & 87.80 & 67.69 \\
        \textbf{\texttt{FS/AdaBit}} & \textbf{78.02} & \textbf{60.67} & \textbf{81.47} & \textbf{44.80} & \textbf{75.85} & \textbf{71.49} & \textbf{92.68} & \textbf{72.14} \\
        \bottomrule
    \end{tabular}
    }
\end{table*}

\section{ILP Coefficient Calibration and Stability Analysis}
\label{sec:ilp_coeff_analysis}

\paragraph{Consistency of piecewise ILP coefficients.}
Figure~\ref{fig:deepseekv2_ch_coeffs} shows representative ILP loss coefficients for selected layers and experts in DeepSeek-V2-Lite. Across all cases, the coefficients remain consistently ordered by bit-width, indicating that the piecewise surrogate preserves a stable penalty hierarchy across precision regimes without introducing scale mismatch into the ILP objective.

\paragraph{Dispersion of \(\kappa_b\) estimates.}
For each layer \(\ell\) and bit-width \(b\), we treat the normalized spectral-vector quantization distortions across all projection types, experts, and spectral components as samples from an empirical component distribution. The bit-dependent coefficient is estimated as
\begin{equation}
    \label{eq:kappa_empirical_estimate}
    \kappa_b
    =
    \frac{1}{E\sum_{h\in\mathcal{H}} n_h}
    \sum_{h\in\mathcal{H}}
    \sum_{e=1}^{E}
    \sum_{k=1}^{n_h}
    \left\|
    \boldsymbol{p}_{e,h,k}
    -
    Q_b(\boldsymbol{p}_{e,h,k})
    \right\|_2^2,
\end{equation}
where \(E\) is the number of routed experts, \(\mathcal{H}\) is the set of projection types, and \(n_h\) is the number of spectral components for projection type \(h\). We measure the relative layer-wise dispersion of these component-wise distortions using the coefficient of variation (CV):
\begin{equation}
    \label{eq:cv_kappa}
    \mathrm{CV}_{\ell,b}
    =
    \frac{s_{\ell,b}}
    {\left|\kappa_b\right|}
    \times 100\%,
\end{equation}
where \(s_{\ell,b}\) denotes the sample standard deviation of the component-wise distortions over all projection types, experts, and spectral components in layer \(\ell\). Figure~\ref{fig:kappa_cv_overview} reports the layer-wise CV under 2/3/4-bit quantization on Qwen1.5-MoE-A2.7B, DeepSeek-V2-Lite, and Mixtral-8$\times$7B. The CV values are almost all below \(15\%\), suggesting that the empirical low-bit distortion scale remains stable at the layer level.

\paragraph{rMCSE of \(\kappa_b\) estimation.}
The uncertainty of the averaged coefficient is further measured by the relative Monte Carlo standard error (rMCSE)~\cite{koehler2009assessment}:
\begin{equation}
    \label{eq:relative_mcse_kappa}
    \mathrm{rMCSE}_{\ell,h,b}
    =
    \frac{s_{\ell,h,b}}
    {\sqrt{En_h}\,
    \left|\kappa_b\right|}
    \times 100\%.
\end{equation}
Figure~\ref{fig:kappa_mcse_overview} reports the layer-wise rMCSE of \(\kappa_b\) under 2/3/4-bit quantization. For all evaluated models, the rMCSE is below \(0.15\%\), indicating negligible relative uncertainty in the averaged bit-dependent coefficients. These results support using a shared empirical coefficient \(\kappa_b\) as a stable low-bit distortion scale in the ILP objective, while component-specific magnitude and activation-aware importance are captured by \(\alpha_{e,h,k}^2\) and \(\beta_{e,h,k}^{\gamma}\).

Table~\ref{tab:kappa_values} further shows that the estimated coefficients preserve the expected ordering \(\kappa_2>\kappa_3>\kappa_4\), assigning larger ILP penalties to lower bit-widths. This shared-coefficient design also reduces construction cost, since empirical distortions need not be explicitly computed for every candidate component in the large allocation space; instead, \(\kappa_b\) is combined with the component-specific factors \(\alpha_{e,h,k}^2\) and \(\beta_{e,h,k}^{\gamma}\).

\begin{figure}[p]
    \centering

    \begin{subfigure}[t]{\linewidth}
        \centering
        \includegraphics[width=\linewidth]{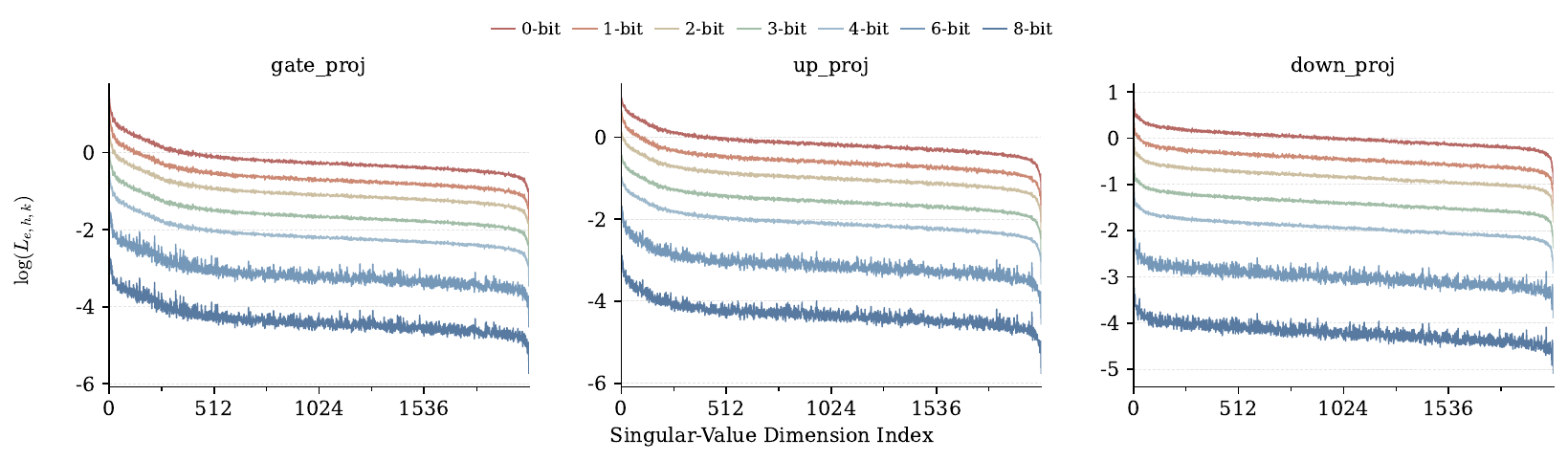}
        \caption{Layer 1, Expert 0.}
        \label{fig:deepseekv2_ch_l1_e0}
    \end{subfigure}

    \vspace{4pt}

    \begin{subfigure}[t]{\linewidth}
        \centering
        \includegraphics[width=\linewidth]{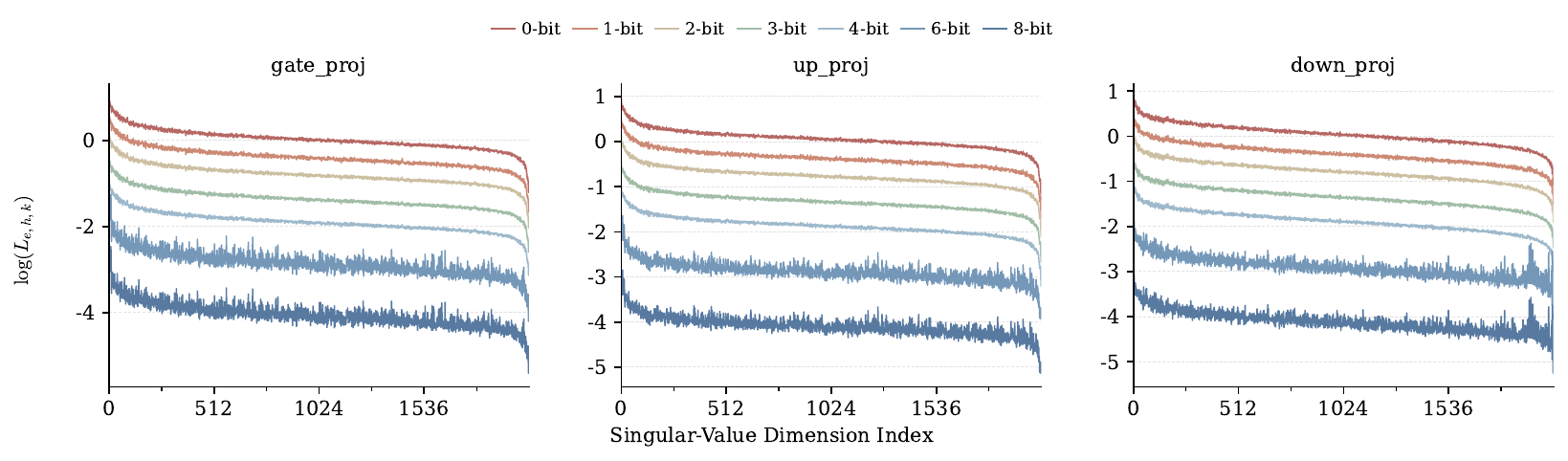}
        \caption{Layer 9, Expert 21.}
        \label{fig:deepseekv2_ch_l9_e21}
    \end{subfigure}

    \vspace{4pt}

    \begin{subfigure}[t]{\linewidth}
        \centering
        \includegraphics[width=\linewidth]{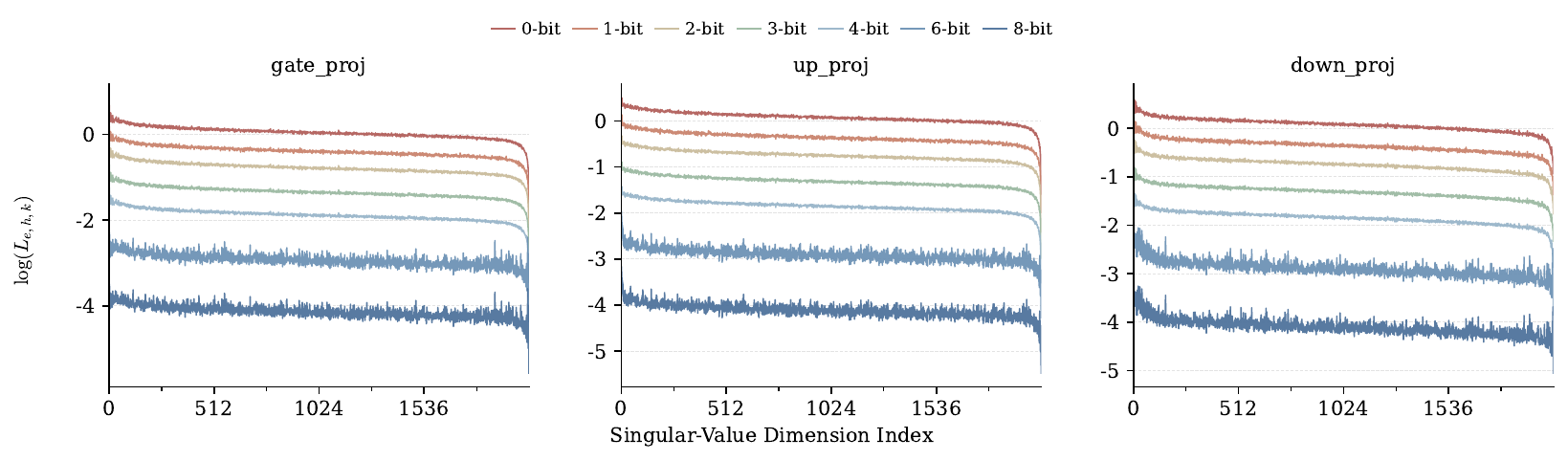}
        \caption{Layer 17, Expert 42.}
        \label{fig:deepseekv2_ch_l17_e42}
    \end{subfigure}

    \vspace{4pt}

    \begin{subfigure}[t]{\linewidth}
        \centering
        \includegraphics[width=\linewidth]{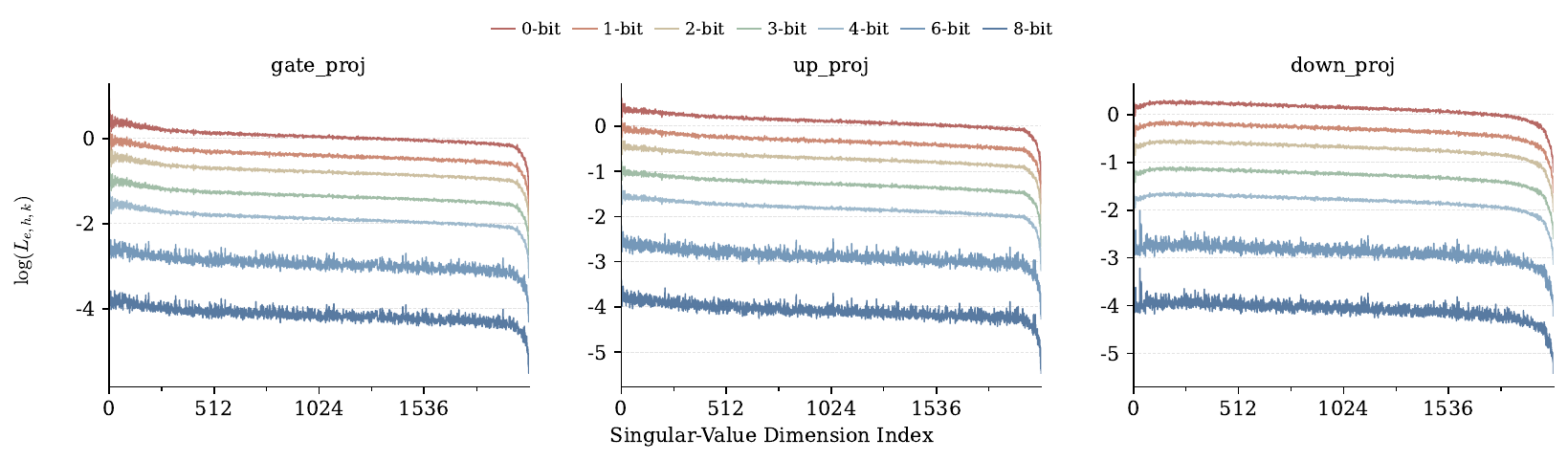}
        \caption{Layer 23, Expert 63.}
        \label{fig:deepseekv2_ch_l23_e63}
    \end{subfigure}

    \caption{
    Representative ILP loss coefficients across bit-widths for different layers and experts in DeepSeek-V2-Lite.
    }
    \label{fig:deepseekv2_ch_coeffs}
\end{figure}

\begin{figure}[t]
    \centering
    \begin{subfigure}{0.48\linewidth}
        \centering
        \includegraphics[width=\linewidth]{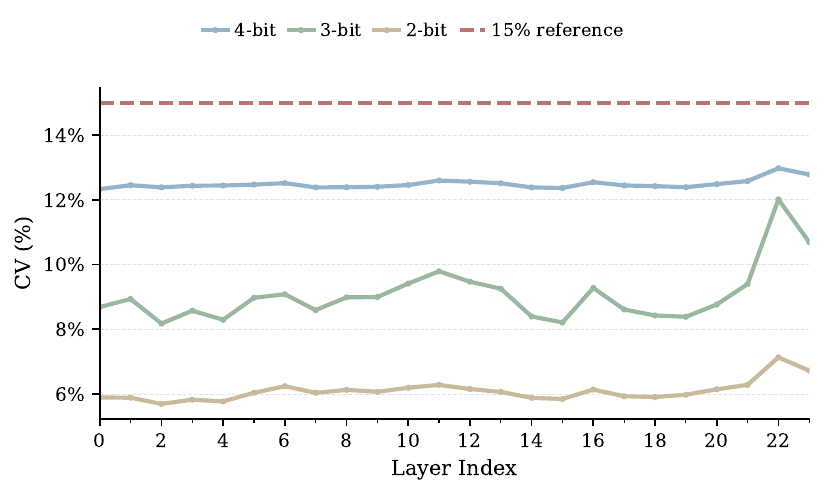}
        \caption{Qwen1.5-MoE-A2.7B}
        \label{fig:kappa_cv_qwen2moe}
    \end{subfigure}
    \hfill
    \begin{subfigure}{0.48\linewidth}
        \centering
        \includegraphics[width=\linewidth]{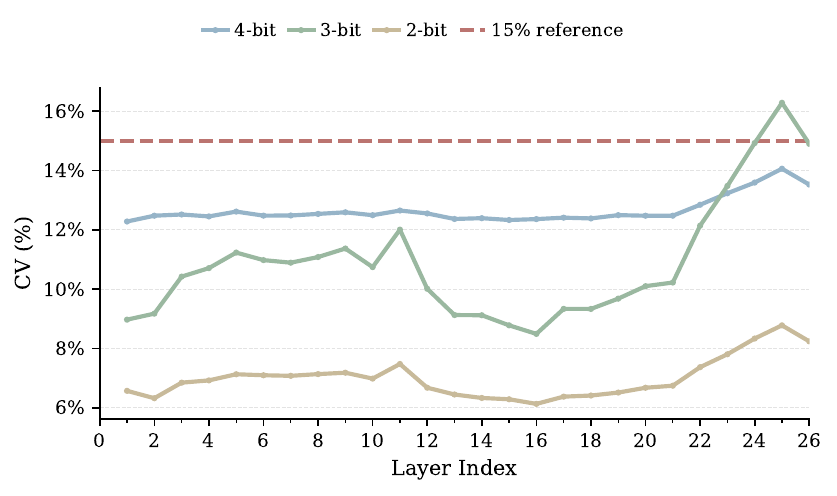}
        \caption{DeepSeek-V2-Lite}
        \label{fig:kappa_cv_deepseekv2}
    \end{subfigure}
    \hfill
    \begin{subfigure}{0.48\linewidth}
        \centering
        \includegraphics[width=\linewidth]{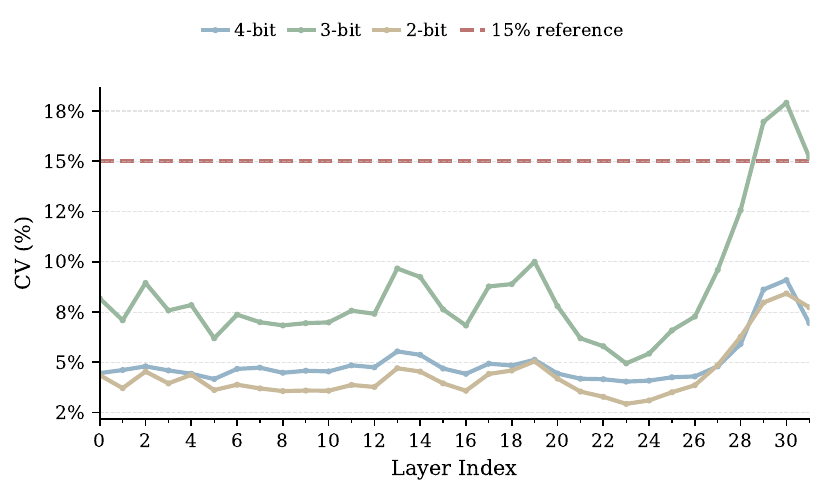}
        \caption{Mixtral-8$\times$7B}
        \label{fig:kappa_cv_mixtral}
    \end{subfigure}
    \caption{
    Layer-wise CV of the empirical \(\kappa_b\) estimates on Qwen1.5-MoE-A2.7B, DeepSeek-V2-Lite, and Mixtral-8$\times$7B.
    }
    \label{fig:kappa_cv_overview}
\end{figure}

\begin{table*}[t]
    \centering
    \small
    \caption{
    Bit-dependent quantization-error coefficients \(\kappa_b\) used in the
    ILP objective.
    }
    \label{tab:kappa_values}
    \resizebox{0.46\textwidth}{!}{
    \begin{tabular}{@{}cccc@{}}
        \toprule
        Bit-width \(b\) & 4-bit & 3-bit & 2-bit \\
        \midrule
        \(\kappa_b\) & 0.01184786 & 0.04067890 & 0.14949200 \\
        \bottomrule
    \end{tabular}
    }
\end{table*}

\begin{figure}[t]
    \centering
    \begin{subfigure}{0.98\linewidth}
        \centering
        \includegraphics[width=\linewidth]{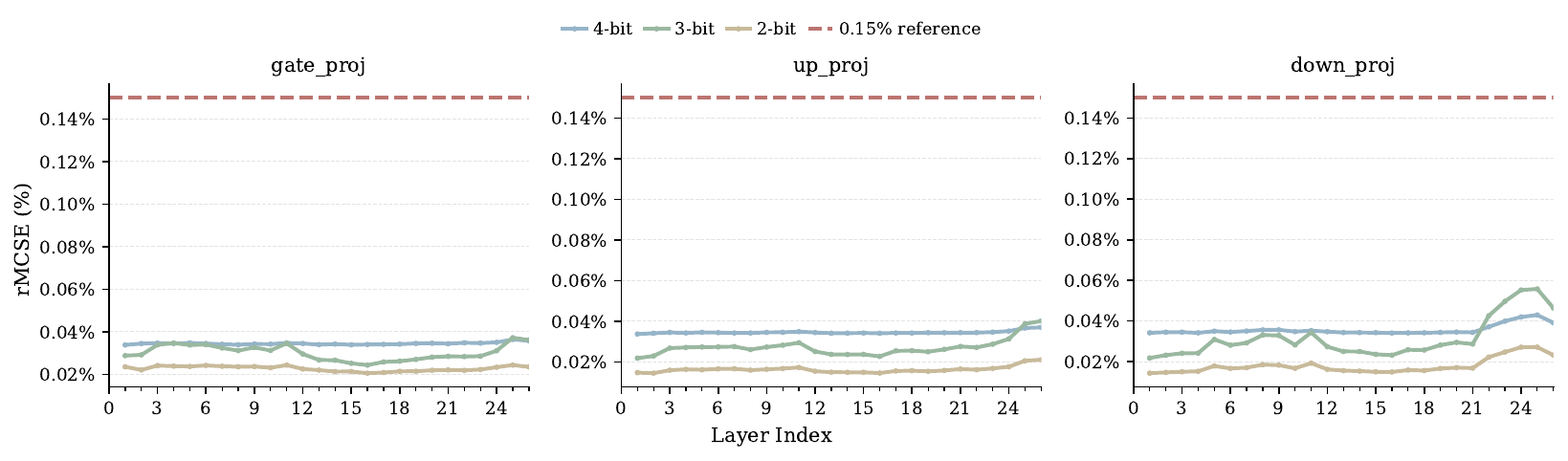}%
        \caption{DeepSeek-V2-Lite}
        \label{fig:kappa_mcse_deepseekv2}
    \end{subfigure}

    \begin{subfigure}{0.98\linewidth}
        \centering
        \includegraphics[width=\linewidth]{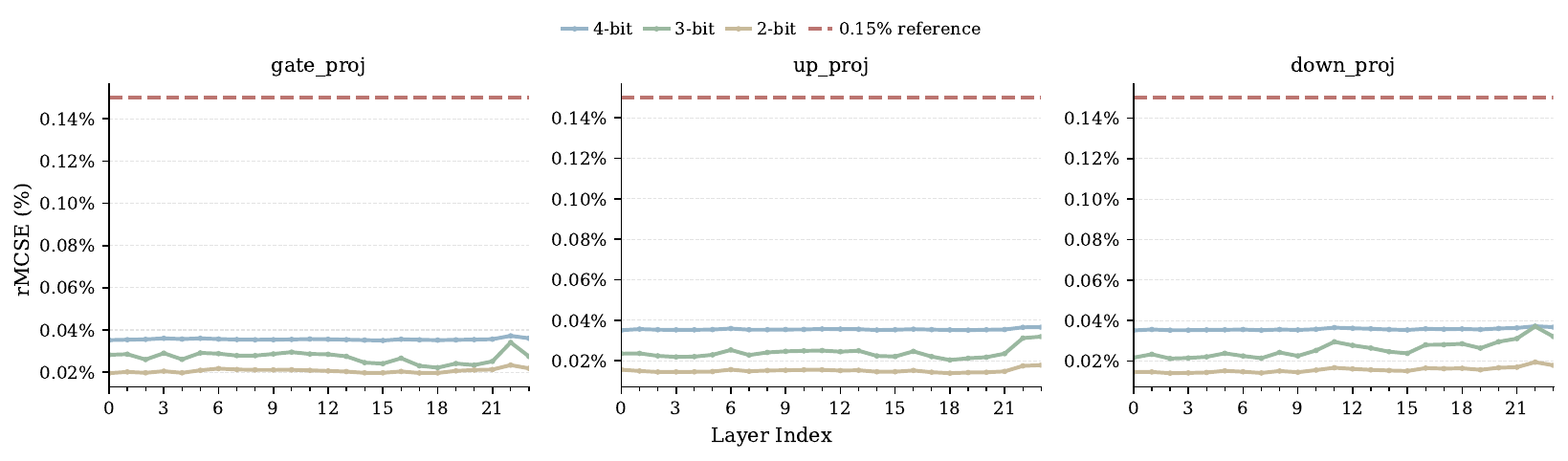}%
        \caption{Qwen1.5-MoE-A2.7B}
        \label{fig:kappa_mcse_qwen2moe}
    \end{subfigure}

    \begin{subfigure}{0.98\linewidth}
        \centering
        \includegraphics[width=\linewidth]{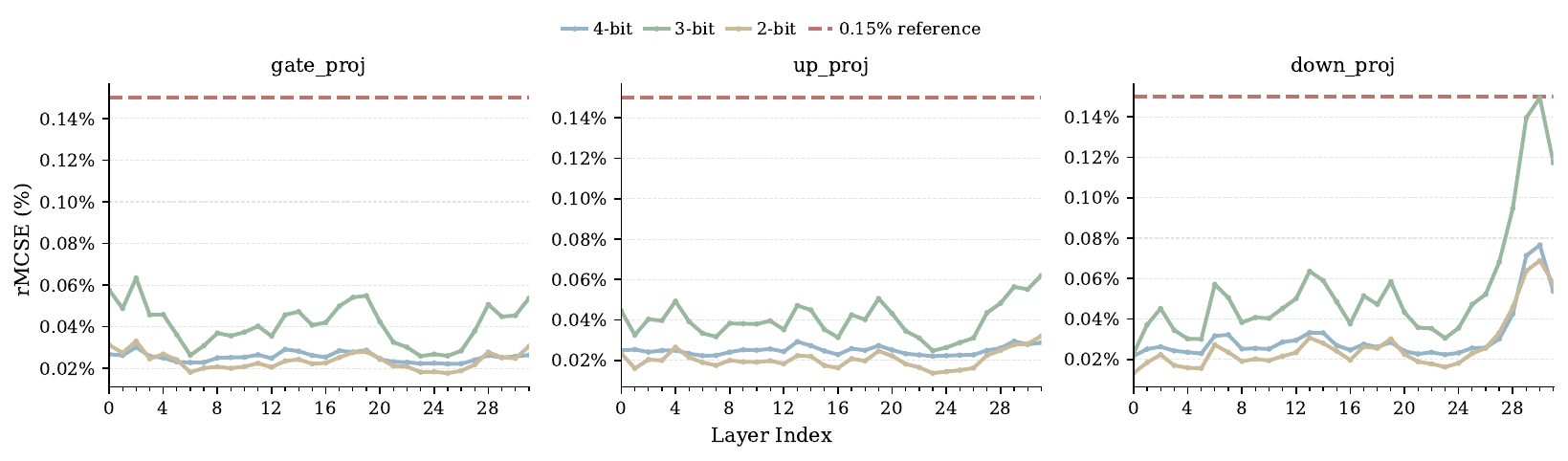}%
        \caption{Mixtral-8$\times$7B}
        \label{fig:kappa_mcse_mixtral}
    \end{subfigure}

    \caption{
    Layer-wise rMCSE of the empirical \(\kappa_b\) estimates on Qwen1.5-MoE-A2.7B, DeepSeek-V2-Lite, and Mixtral-8$\times$7B.
    }
    \label{fig:kappa_mcse_overview}
\end{figure}




\end{document}